\def\eqref#1{equation~\ref{#1}}
\def\1{\bm{1}}
\DeclareMathAlphabet{\mathsfit}{\encodingdefault}{\sfdefault}{m}{sl}
\SetMathAlphabet{\mathsfit}{bold}{\encodingdefault}{\sfdefault}{bx}{n}
\newcommand{\R}{\mathbb{R}}
\theoremstyle{plain}
\newtheorem{theorem}{Theorem}[section]
\newtheorem{proposition}[theorem]{Proposition}
\newtheorem{lemma}[theorem]{Lemma}
\newtheorem{corollary}[theorem]{Corollary}
\theoremstyle{definition}
\newtheorem{definition}[theorem]{Definition}
\newtheorem{assumption}[theorem]{Assumption}
\newtheorem{example}[theorem]{Example}
\theoremstyle{remark}
\newtheorem{remark}[theorem]{Remark}
\newcommand\RG[1]{\textcolor{black}{#1}}
\newcommand{\dif}{\mathrm{d}}
\newcommand{\id}{\mathtt{id}} 
\newcommand\mId[1]{\mathrm{Id}_{#1}}
\newcommand\eq[1]{\begin{equation}#1\end{equation}}
\newcommand{\RR}{\mathbb{R}}
\newcommand{\RD}{\mathbb{R}^D} 
\newcommand{\Rd}{\mathbb{R}^d} 
\newcommand{\ouv}{\Omega} 
\newcommand{\vdim}{\mathrm{dim}}
\newcommand{\linspan}{\mathrm{span}}
\newcommand{\range}{\mathrm{range}}
\newcommand{\rank}{\mathrm{rank}}
\newcommand{\tr}{\mathrm{tr}}
\newcommand{\lie}{\mathrm{Lie}}
\newcommand\reb[1]{\textcolor{black}{#1}}
\newcommand{\x}{\theta}
\newcommand\Wfspace[1]{\mathbb{W}^{#1}}
\newcommand{\diag}{\mathrm{diag}}
\newcommand\rev[1]{\textcolor{black}{#1}}
\title{Intrinsic training dynamics\\ of deep neural networks}
\author{%
  Sibylle Marcotte
  \\
  ENS - PSL Univ.\\
  \texttt{sibylle.marcotte@ens.fr} \\
   \AND
  Gabriel Peyré \\
  CNRS, ENS - PSL Univ. \\
  \texttt{gabriel.peyre@ens.fr} \\
  \AND
   Rémi Gribonval \\
   Inria, CNRS, ENS de Lyon, \\
   Université Claude Bernard Lyon 1,\\ LIP, UMR 5668, 69342, Lyon cedex 07, France\\
  \texttt{remi.gribonval@inria.fr} \\
}
\begin{document}

\maketitle

\begin{abstract}
A fundamental challenge in the theory of deep learning is to understand whether gradient-based training \rev{can promote parameters belonging to certain lower-dimensional structures (e.g., sparse or low-rank sets)}, leading to so-called implicit bias. As a stepping stone, \rev{motivated by the proof structure of existing implicit bias analyses}, we study when a gradient flow \rev{on a parameter} $\theta$ implies an intrinsic gradient flow on a \rev{``lifted'' variable} $z = \phi(\theta)$, for an architecture-related function $\phi$. We express a so-called intrinsic dynamic property and show how it is related to the study of conservation laws associated with the factorization $\phi$. This leads to a simple criterion based on the inclusion of kernels of linear maps, which yields a necessary condition for this property to hold. We then apply our theory to general ReLU networks of arbitrary depth and show that, for \reb{a dense set of} initializations, it is possible to rewrite the flow as an intrinsic dynamic in a lower dimension that depends only on $z$ and the initialization, when $\phi$ is the so-called path-lifting. In the case of linear networks with $\phi$, the product of weight matrices, \rev{the intrinsic dynamic is known to hold under} so-called balanced initializations; we generalize this to a broader class of {\em relaxed balanced} initializations, showing that, in certain configurations, these are the \emph{only} initializations that ensure the intrinsic metric property. Finally, for the linear neural ODE associated with the limit of infinitely deep linear networks, with relaxed balanced initialization, we make explicit the corresponding intrinsic dynamics.
\end{abstract}
\vspace{-0.5cm}
\section{Introduction}
A central question in deep learning theory is how the complexity of 
\rev{gradient-based training} can give rise to simpler, lower-dimensional dynamics. In this work, we explore when the gradient flow on 
\rev{parameters} $\theta$ naturally induces a gradient flow on a 
\rev{``lifted''} variable $z = \phi(\theta)$, where $\phi$ captures structural aspects of the model.

{\bf Intrinsic \rev{lifted} 
flow.}
The study of 
\rev{optimization flows} arising in the training of neural networks often benefits from the identification of lower-dimensional intrinsic dynamics. Specifically, \rev{due to the natural symmetries of linear and ReLU networks,} it is of considerable interest to rewrite a 
\rev{parameter flow} $\theta(t)$ in terms of an 
representation $z(t) = \phi(\theta(t))$, using a suitable {architecture-related} reparametrization $\phi$ \rev{(often called a {\em lifting}) that factors out certain symmetries}. 

\rev{When dissected, the most advanced recent results characterizing the implicit bias induced by gradient-based optimization algorithms notably rely on two key analysis ingredients: (i) establishing that the dynamics of $z(t)$ is {\em intrinsic}, i.e., that it can be expressed as a Riemannian gradient flow with a metric depending only on $z$ and the initial parameters $\theta(0)$; (ii) further proving that this flow on $z(t)$ admits a {\em mirror flow representation}. With the combination of these two ingredients}
one gains access to powerful analytical tools rooted in convex optimization theory, allowing explicit characterization of the \rev{induced implicit bias}. 
In particular, prior research has successfully leveraged mirror flow formulations to rigorously demonstrate implicit regularization effects, such as sparsity in scalar linear neural networks and two-layer networks with a single neuron \citep{gunasekar}, as well as maximum-margin classification for logistic regression problems in separable data scenarios \citep{soudry,chizat}. 

Recent work by \cite{li22} identifies sufficient conditions under which \rev{(i)+(ii) can both be established},
requiring that the parametrization $\phi$ be {\em commuting}. However, this commuting condition is rarely satisfied in practical scenarios. 
\rev{This work focuses on characterizing weaker conditions ensuring that the flow on $z(t)$ is still driven by 
\rev{an intrinsic} Riemannian gradient flow (but not necessarily a mirror flow anymore), which we believe is an important step forward and a starting point for future investigations encompassing variants of (ii) with {\em warped} mirror flows \citep{azulay21} for practical (deep) network architectures.} A \rev{first sufficient} 
condition \rev{for (i)}, introduced by \cite{marcotte2023abide}, demands merely that the parametrization be involutive.
\cite{marcotte2023abide} have shown that this weaker condition applies specifically to the parametrization used in {\em two-layer} ReLU networks \citep{stock_embedding_2022}. \rev{As we will see, a consequence of the analysis conducted in our paper is {\em the extension of this result to arbitrary DAG ReLU networks} \citep{gonon_path-norm_2023}.}

{\bf Conservation laws.}
The functions conserved during the \rev{training} dynamics play a crucial role in 
\rev{establishing that the dynamics of $z(t)$ is governed by an (intrinsic) Riemannian metric that depends only on $z$ and the initialization $\theta_0 := \theta(0)$.}
Indeed, when a trajectory $\theta(t)$ is \rev{known} 
to remain within level sets 
{$\{\theta: \mathbf{h}(\theta) = \mathbf{h}(\theta_0)\}$}
where $\mathbf{h}$ is a {(vector-valued)} conserved function, the dynamics are effectively restricted to a manifold of lower dimension \rev{that is entirely determined by the initialization}. A particularly important class of conserved functions along these trajectories is given by the conservation laws associated with 
{a certain architecture-dependent parametrization $\phi$,}
a concept introduced in~\cite{marcotte2023abide}. These laws depend exclusively on $\phi$, and notably, in the context of neural network training dynamics, they represent quantities preserved across trajectories irrespective of the initial conditions or the training dataset. In the specific case of linear and ReLU neural networks, these conservation laws correspond exactly to previously known {``canonical''} conserved functions identified in~\cite{du,arora18a}, as demonstrated by \cite{marcotte2023abide}; see also \cite{le2022training} for canonical conservation laws for general DAG ReLU architectures. Furthermore, \cite{marcotte2023abide} establish that {\em if the parametrization $\phi$ is involutive, there exist sufficiently many {scalar} conservation laws to fully rewrite the original trajectory $\theta(t)$ in terms of $\phi(\theta(t))$ and the initial conditions alone}. In the linear network case, when so-called balanced conditions (a notion introduced in \cite{arora18a}) are satisfied (i.e., when the initialization sets all {canonical} conservation laws \citep{chitour} to zero, \rev{$\mathbf{h}(\x_0)=0$}), it becomes possible to rewrite the flow in terms of $z = \phi(\theta)$, 
{where $\phi$ corresponds to the product of weight matrices, as 
\rev{an intrinsic}
Riemannian metric 
\citep{Arora18b, Bah}. {Moreover, \cite{achour2025} extended this result to linear \textit{convolutional} networks in the case of a mean squared loss, but this time for arbitrary initializations, with the Riemannian metric depending on the initialization. For linear networks and in the particular case when the loss function is the square loss, \cite{Bah} show that the trajectory evolves on the manifold of matrices having some fixed rank under balanced condition.} }
Still in the square-loss setting, and in the case of two-layer linear networks, \cite{tarmoun, NEURIPS2022_2b3bb2c9, domine2025from} exploit the conservation laws to obtain an exact closed-form expression for 
$z(t)$ under specific configurations, whereas \cite{varre2023on} use the same laws to analyse an implicit bias of this dynamic.

{\bf Our main contributions.}
We first define the notion of intrinsic \emph{dynamic} property (\Cref{def:weaker}), then the notion of intrinsic \emph{metric} property (\Cref{def:stronger}) and finally the one of intrinsic \emph{recoverability} property (\Cref{def:intrinsicrecov}), and we show the implications (\Cref{lemma:premiereimplic} and \Cref{lemma:secondeimplication}):
$$
\text{Intrinsic Recoverability} \implies \text{Intrinsic Metric} \implies \text{Intrinsic Dynamic}.
$$
We then provide a simple criterion that characterizes the intrinsic recoverability property (\Cref{thm:implicitfunctions}), and show (\Cref{prop:kernelstrongest}) that this criterion is \emph{quasi} equivalent to the \emph{Frobenius property}, {which is slightly weaker than involutivity} (\Cref{def:frobprop}).
We prove that the \rev{so-called path-lifting \citep{gonon_path-norm_2023}} reparametrization for 
general ReLU networks of arbitrary depth satisfies this property (\Cref{thm:MLP_Frobproperty}), characterizing {\em \reb{a dense set of }initializations of a general ReLU network that satisfies the intrinsic recoverability property} (\Cref{coro:relu}), as illustrated by a characterization of the intrinsic dynamic of a 3-layer neural network (\Cref{prop:threelayerMLP}).
Next, by establishing a necessary condition for the intrinsic metric property to hold based on the study of kernels of linear mappings \Cref{prop:inclusionnoyaux}, we show that the {\em intrinsic metric property fails to hold for the natural reparametrizations corresponding to 2-layer linear networks (resp. of attention layers)}, 
unless the initialization satisfies the \emph{relaxed balanced condition} introduced in \Cref{def:quasibalanced} (\Cref{thm:notinclusion}). 
We then show that relaxed balanced initializations do satisfy the intrinsic metric property, not only in 2-layer networks (\Cref{thm:stage2layer}) but also in linear networks of arbitrary depth (\Cref{thm:deeplinear}), and we  characterize the resulting intrinsic dynamic. 
Finally, we extend our analysis to the infinite-depth limit of linear networks. We show that a set of functions is conserved along the trajectory (\Cref{prop:conservationcontinuous}), and, in contrast to the case of $L >2$-layer, we derive a closed-form expression for the metric in the case of relaxed balanced initializations (\Cref{thm:riemanianninfinite}).

\section{\RG{Dynamics of over-parameterized models}}
\vspace{-0.3cm}
In most machine learning models, overparameterization occurs due to inherent symmetries (such as rescaling) within the parameter space $\theta \in \mathbb{R}^D$. In practice, this redundancy can be factored out through a function $\phi$ (often called a lifting \citep{cand_es_phaselift_2013,gonon_path-norm_2023}) that captures these symmetries. 
Although the resulting lifted variable $z = \phi(\theta) \in \R^d$ often lives in higher dimension $d \gg D$, it also belongs to a lower dimensional manifold $\mathcal{Z}$ of dimension $d' < D$, and provides a representation of the essential structure of the model.
We \RG{consider parameters} 
$\theta (t) \in \R^{D}$ \RG{that 
\reb{satisfy}} the
\textbf{gradient flow} dynamic with some initialization $\x_0$:
\eq{\label{eq:GF}
    \dot \x (t) = - \nabla \ell (\x(t)),\quad \x(0)=\x_0
}
to minimize the function $\ell$. \RG{In machine learning, $\ell(\theta)$ is typically defined as the empirical average over training samples $(x_i,y_i)$ of a quantity that depends on the output $g(\theta,x_i)$ of a neural network with weights and biases collected in the parameter vector $\theta$. The function $g(\theta,x)$  can 
be locally reparameterized via 
the above architecture-dependent lifting $\phi(\theta)$, leading to the same factorization for the global loss $\ell$. This is the starting point of our analysis, captured via}
the following assumption:
\begin{assumption}[Local reparameterization \reb{of $\ell$ via}  $\phi \in \mathcal{C}^2(\R^D,\Rd)$ \reb{on an open set $\Omega \subseteq \RD$}] \label{as:main_assumption}
There is a function $f \in \mathcal{C}^2(\ouv, \R)$ such that
\begin{equation}
    \label{eq:elr-general}
      \forall \x \in \ouv, \quad  
    \ell(\theta) = f(\phi(\theta)).
\end{equation}
\end{assumption} 
The following examples illustrate common choices of $\phi$ for various neural network architectures.

\begin{example}[Linear neural networks]\label{ex:LNN-factorization}
For a two-layer network with $r$ hidden neurons and  $\x=(U,V) \in \RR^{n \times r} \times \RR^{m \times r}$ (where $D=(n+m)r$), the model $g(\x,x) \coloneqq U V^\top x$ is factorized via the map $\phi_{\mathtt{Lin}}(\x) \coloneqq U V^\top \in \R^{n \times m}$, thus the empirical risk $\ell$ can also be factorized by $\phi_{\mathtt{Lin}}$.
This extends to $L$ layers where $\x = (U_L, \cdots, U_1)$, with $g(\x,x) \coloneqq U_L \cdots U_1 x$ and  $\phi_{\mathtt{Lin}}(\x) \coloneqq U_L \cdots U_1$.
The resulting factorization  of $\ell$ holds globally on $\ouv = \RD$.
\end{example}

\begin{example}[ReLU neural networks]\label{ex:2layerReLU-factorization}
Consider $g(\x,x) = U \sigma ( V^\top x)$, with $\sigma(y) \coloneqq (\max(y_i,0))_i$ the ReLU activation function. Denote $\x = (U,V)$ with $U = (u_1, \cdots, u_r) \in \R^{n \times r}$, $V = (v_1, \cdots, v_r) \in \R^{m \times r}$ (so that $D = (n+m)r$).
\reb{Given training vectors $x_i \in \R^m$ consider $\Omega \subseteq \R^D$ the set of all parameters $\theta = (U,V)$ such that $v_j^\top x_i \neq 0$ for every $i,j$. Then, }
 $\x \mapsto 1(v_j^\top x > 0) = \epsilon_{j,x}$ is locally constant over $\x \in \Omega$, and the model $g_{\x}(x)$ can be factorized by the reparametrization $\phi_{\mathtt{ReLU}}(\x) =  (u_j v_j^\top)_{j=1}^r$
(here $d =rmn$) using $g(\x,x) = \sum_{j} \epsilon_{j,x} \phi_{j}x$ with $\phi_j(\x) \coloneqq u_j v_j^\top$, so $\ell$ can be factorized by $\phi_{\mathtt{ReLU}}$ with some forward function $f$: the reparametrization $\phi_{\mathtt{ReLU}}(\theta)$ contains $r$ matrices of size $n \times m$ (each of rank at most one, so in particular one has $d' \leq D-r$) associated to a ``local''  $f$ valid in a neighborhood of $\x$. A similar factorization is possible for deeper ReLU networks (cf \cite{neyshabur_norm-based_2015,stock_embedding_2022,gonon_path-norm_2023}) and we still write it $\phi_{\mathtt{ReLU}}$, as further discussed in the proof of \Cref{thm:MLP_Frobproperty}.
\end{example}

\begin{example}[Attention layer] \label{ex:attention-factorization}
For an attention layer, the input $X \in \mathbb{R}^{N \times \text{dim}}$ is the vertical stacking of $N$ tokens $x^{(i)} \in \mathbb{R}^{\textrm{dim}}$. The layer output is 
\begin{equation*} 
g(\x, X) = \mathrm{softmax}(X Q^{\top} K X^{\top}) X V^{\top} O \in \mathbb{R}^{N \times \textrm{dim}}
\quad \text{with} \quad
\bigl(\mathrm{softmax}(A)\bigr)_{ij} \coloneqq \tfrac{\exp(A_{ij})}{\sum_{k=1}^{N} \exp(A_{ik})},
\end{equation*}
with $Q, K, V, O \in \mathbb{R}^{d_1 \times \text{dim}}$. 
We use the reparameterization $\phi_{\mathtt{Att}}(\x) \coloneqq (\phi_1, \phi_2)$ where $\phi_1 \coloneqq Q^{\top} K$ and $\phi_2 \coloneqq V^{\top} O$, such that $g(\x, X) = \mathrm{softmax}(X \phi_1 X^{\top}) X \phi_2$, as done by \cite{marcottetransformative}.

Thus, similarly to the linear case \Cref{ex:LNN-factorization}, $\ell$ can be globally factorized by $\phi_{\mathtt{Att}}$ as $f$ exhibits no dependency on the specific parameter configuration $\x_0$. \RG{This naturally extends to multiple attention layers by concatenating the corresponding factorizations.}
\end{example}
\vspace{-0.3cm}

\subsection{\RG{Dynamics of lifted parameters: to be or not to be intrinsic?}}
\vspace{-0.3cm}
This paper addresses a fundamental question \RG{underlying much of the efforts to characterize the implicit bias of gradient-based methods}: under what conditions 
{does the gradient flow dynamics \eqref{eq:GF} in parameter space $\theta$ lead to a dynamics on the lifted parameters $z(t)\coloneqq \phi(\theta(t))$ that can be expressed as an intrinsic gradient flow on $z$? This is crucial when attempting to establish that $z(t)$ follows a mirror flow \citep{gunasekar2017implicit}, which is a key step to characterize  the implicit bias of gradient-based optimization.}
We specifically examine when {$z(t)$ follows a flow with respect to a Riemannian metric which, by definition}
depends only on $z$ (and the initial parameter configuration $\theta_0$), 
thereby eliminating explicit dependence on the parameter trajectory $\theta(t)$. 

{A starting point of the analysis is that}, 
under \Cref{as:main_assumption} and by the chain rule \reb{when $\x(t) \in \Omega$}
\eq{\label{eq:preODEz}
\dot z (t) = \partial \phi(\x (t) ) \dot \x(t) = - \partial \phi(\x (t)) \partial \phi (\x(t))^\top \nabla f (z(t)).
}
Thus our goal is to understand when the {symmetric, positive semi-definite} matrix 
\begin{equation}\label{eq:DefM}
    M(\x) := \partial \phi(\x) \partial \phi (\x)^\top 
\end{equation}
\RG{(corresponding to the so-called {\em path kernel} \citep{gebhart_unified_2021} when $\phi$ is the path-lifting associated to ReLU networks)}
can be solely expressed in terms of $z$ and $\x_0$ during the trajectory, i.e. do we have a function $K=K_{\theta_0}$ such that $M(\x(t)) = K(z(t))$? \RG{When this is the case \eqref{eq:preODEz} becomes 
\begin{equation}
    \dot{z}(t) = -K(z(t)) \nabla f(z(t)),\label{eq:ODEz}
\end{equation} 
an ordinary differential equation (ODE) which is a Riemannian flow \citep{boumal2023introduction} for the metric $K^{-1}(z)$ (or a sub-Riemannian flow for the pseudo-inverse $K^+(z)$ when $K(z)$ is not invertible) , hence
associated to an {\em intrinsic} dynamic on the lifted parameters $z(t)$.}

As illustrated next, \RG{rewriting $M(\theta(t))$ as a function of $z(t)$ along the trajectory $\theta(t)$} 
is indeed possible for simple linear networks, \RG{with a function $K(\cdot)$ that depends on the initialization $\theta_0$}.
\begin{example}[A simple linear network] \label{ex-scalartrajectory}
Consider $g(\x, x) = u v x$, with $\x \coloneqq (u, v) \in \R_* \times \R^m$, and $z = \phi(\x) = uv \in \R^m$ (cf  \Cref{ex:LNN-factorization}). Then $M(\x) = \partial \phi(\x) \partial \phi(\x)^\top = v v^\top + u^2 \mId{m}$. During the trajectory $u^2 - \|v\|^2 = u_0^2 - \|v_0\|^2 =: \lambda $ (as $\RG{h(\x) :=} u^2 - \|v\|^2$ is conserved \citep{arora18a}, and as $v v^\top = u^{-2} z z^\top$ we have $(u^2)^2 -  \lambda u^2 - \| z\|^2 = 0$ so that $u^2 = \frac{\lambda + \sqrt{\lambda^2 + 4 \| z \|^2}}{2}$.
As a result along the whole trajectory we have $M(\x) =K_{\x_0}(z)$ so that $z(t)$ satisfies the ODE \eqref{eq:ODEz}
with 
\[
K_{\x_0} (z) =  
\frac{2}{\lambda + \sqrt{\lambda^2 + 4 \| z \|^2}}
zz^\top + \frac{\lambda + \sqrt{\lambda^2 + 4 \| z \|^2}}{2} \mId{m},\quad \forall z. 
\]
In particular when $m=1$ one has
$K_{\x_0}(z) = \sqrt{(u_0^2 - v_0^2)^2 + 4 z^2}$ hence
\(
\dot z  = - \sqrt{\lambda^2 + 4 z^2} \nabla f(z).
\)
\end{example}
See \Cref{app:numericsonex} for more comments on that example.
In the above example the function $K_{\theta_0}$, as its notation suggests, only depends on the initialization \reb{$\x_0$} {\em but not on the function $f$ such that $\ell = f \circ \phi$}. In machine learning scenarios, $f$ typically captures dependence on the training dataset. The intrinsic metric $K_{\theta_0}(z)$ thus captures parts of the dynamics of $z(t)$ due to the network architecture (via $\phi$) and of the training algorithm (the gradient flow~\eqref{eq:GF}) {\em irrespective of the dataset and the learning task} (of course the latter still play a role via the $\nabla f(z)$ term in the ODE $\dot{z} = -K_{\theta_0}(z)\nabla f(z)$). This motivates the {introduction of the} following definition.

\begin{definition}[Intrinsic dynamic property] \label{def:weaker}
$\x_0$ verifies the {\em intrinsic dynamic property} with respect to $\phi$ on \reb{some open set} $\Omega \ni \theta_0$, if there is 
\RG{$K_{\theta_0}: \R^d \to \R^{d \times d}$}
such that, for every $f \in \mathcal{C}^2$, the maximal solution $\theta(\cdot)$ of \eqref{eq:GF} \reb{on $\Omega$} with $\ell = f \circ \phi$ satisfies $M (\x(t)) = K_{\theta_0}(\phi(\x(t)))$ for each $t$.
\end{definition}

\vspace{-0.4cm}
\subsection{\RG{Conservation laws}}
\vspace{-0.2cm}
\Cref{ex-scalartrajectory} illustrates a phenomenon that we will systematically exploit in our analysis: with the typical reparameterizations $\phi$ mentioned above, there exists a vector-valued function $\mathbf{h}: \x \mapsto \mathbf{h}(\x) \in \R^N$ that is conserved along the trajectory 
\RG{and allows to exhibit a function $K_{\theta_0}$ such that  $M(\theta(t)) = K_{\theta_0}(z(t))$ along the trajectory. As these will play a key role in our analysis we now \reb{remind} the essential concepts related to {\em conservation laws}.}

We denote $\phi_1, \cdots, \phi_d$ the $d$ coordinate functions of the reparameterization $\phi:  \R^D \mapsto \phi(\theta) \in \R^{d} \in \mathcal{C}^\infty(\RD, \R^d)$. Since $\phi$ yields a factorization of the loss, functions $h$ such that $\nabla h(\theta) \perp \nabla \phi_i(\theta)$ for each $i$ and each $\theta$ remain constant along the trajectory. 
{This has been thoroughly analyzed, see e.g. \cite{marcotte2023abide,marcottekeep}, using the following definition.}
\begin{definition}[Conservation law for $\phi$] \label{def:CL}
A function $h \in \mathcal{C}^1(\Omega, \R)$ is a conservation law for $\phi$ on $\Omega$ if for any $\theta \in \Omega$ one has $\partial \phi (\theta) \nabla h(\theta) = 0$, i.e. for each $\theta \in \Omega$ and  $i$, $\langle \nabla \phi_i (\theta), \nabla h(\x) \rangle = 0$.
\end{definition}
\begin{proposition}
Under \Cref{as:main_assumption} on \reb{$\ell$,} $\phi$ \reb{and $\Omega$}, if $h \in \mathcal{C}^1(\Omega, \R)$ satisfies $\partial \phi (\theta) \nabla h(\theta) = 0$ on $\RG{\Omega}$, then $h$ remains constant during the trajectory $\theta(t)$ of \eqref{eq:GF} \RG{for any initialization $\theta_0 \in \Omega$}.
\end{proposition}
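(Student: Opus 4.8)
The plan is to differentiate $t \mapsto h(\theta(t))$ along the gradient flow and show the derivative vanishes, combining the orthogonality hypothesis $\partial\phi(\theta)\nabla h(\theta)=0$ with the chain rule applied to the \emph{local} factorization $\ell = f\circ\phi$ granted by \Cref{as:main_assumption}. This is essentially the argument already used in \cite{marcotte2023abide}, adapted to the present (possibly only local) setting.

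First I would fix an initialization $\theta_0 \in \Omega$ and let $\theta(\cdot)$ be the maximal solution of \eqref{eq:GF} with $\theta(0)=\theta_0$. Since $\Omega$ is open, there is a relatively open time interval $J\ni 0$ on which $\theta(t)\in\Omega$, and all the reasoning takes place on $J$: this is the natural domain of validity, because \Cref{as:main_assumption} only provides $\ell = f\circ\phi$ on the neighborhood $\Omega$ (on a domain like $\Omega=\R^D$, as in \Cref{ex:LNN-factorization}, one gets $J$ equal to the whole maximal interval). As $\phi\in\mathcal{C}^2$ and the local forward map $f\in\mathcal{C}^2$, the flow $\theta(\cdot)$ is $\mathcal{C}^1$ on $J$; since $h\in\mathcal{C}^1(\Omega,\R)$, the map $t\mapsto h(\theta(t))$ is differentiable on $J$.

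Next, on $\Omega$ the chain rule gives $\nabla\ell(\theta)=\partial\phi(\theta)^\top\nabla f(\phi(\theta))$, so for $t\in J$,
\[
\tfrac{\dif}{\dif t}\, h(\theta(t)) = \big\langle \nabla h(\theta(t)),\, \dot\theta(t)\big\rangle = -\big\langle \nabla h(\theta(t)),\, \partial\phi(\theta(t))^\top\nabla f(\phi(\theta(t)))\big\rangle = -\big\langle \partial\phi(\theta(t))\,\nabla h(\theta(t)),\, \nabla f(\phi(\theta(t)))\big\rangle = 0,
\]
using the transpose/adjoint identity in the last inner product and then the hypothesis $\partial\phi(\theta)\nabla h(\theta)=0$ at $\theta=\theta(t)\in\Omega$. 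Hence $h(\theta(t))$ is constant on $J$, equal to $h(\theta_0)$, i.e. $h$ is conserved along the trajectory for as long as it stays in $\Omega$.

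There is no substantial obstacle here; the only thing that needs care is the bookkeeping about domains, namely that the identity $\nabla\ell=\partial\phi^\top(\nabla f\circ\phi)$ — and therefore the cancellation above — only holds while $\theta(t)\in\Omega$, so for a genuinely global conservation statement one additionally needs the trajectory to remain in $\Omega$. It is also worth emphasizing that the computation uses nothing about $f$ beyond differentiability: the derivative vanishes \emph{whatever} the forward map $f$ (hence whatever the loss and dataset), which is exactly why such an $h$ deserves to be called a conservation law \emph{for $\phi$}.
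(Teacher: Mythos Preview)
Your proof is correct and is exactly the standard argument: differentiate $h(\theta(t))$ along the flow, use the chain rule $\nabla\ell=\partial\phi^\top(\nabla f\circ\phi)$ from the local factorization, and then the orthogonality hypothesis kills the inner product. The paper does not spell out a proof for this proposition (it is treated as immediate, in the spirit of \cite{marcotte2023abide}), so your write-up is precisely the intended one-line computation made explicit; the only small remark is that the $\Omega$ on which $h$ is defined need not coincide with the neighborhood furnished by \Cref{as:main_assumption}, but since the assumption gives \emph{some} local factorization around every point of the trajectory, your chain-rule identity for $\nabla\ell(\theta(t))$ still holds pointwise along the flow, and the argument goes through unchanged.
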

The conservation laws associated with a given parameterization $\phi$ have been almost exhaustively studied for parameterizations corresponding to linear networks, ReLU networks, and attention layers. In particular, prior work has shown that all conservation laws for $\phi$ in the cases of ReLU (cf \Cref{ex:2layerReLU-factorization}) and linear (cf \Cref{ex:LNN-factorization}) networks (see \cite{marcotte2023abide}) as well as for an attention layer (see \cite{marcottetransformative}) are captured by the following proposition \citep{marcotte2023abide}. In \cite{le2022training}, the authors also give canonical conservation laws for general DAG ReLU architectures. We recall these conservation laws in \Cref{lemma:daglaws}. This has been proven theoretically for two-layer networks and empirically validated for deeper architectures using symbolic computation (see \cite{marcotte2023abide}).
One of the contributions of this paper (see \Cref{coro:truenumber} and \Cref{coro:truenumberdag}) is to establish the completeness of conservation laws for $\phi$ associated with any DAG ReLU network.
It is worth noticing that all conservation laws \RG{in such cases} are polynomials.

\begin{proposition}[Conservation laws for classical $\phi$ on $\RD$] \label{prop:knownconservation}
Consider $\theta = (U_L, \cdots, U_1)$ and $\phi_{\mathtt{Lin}} (\x) \coloneqq U_L \cdots U_1$ from \Cref{ex:LNN-factorization} (resp. $\phi_{\mathtt{ReLU}}$ from \Cref{ex:2layerReLU-factorization}). The functions 
\[
\mathbf h_i: \x \mapsto U_{i+1}^\top U_{i+1} - U_i U_i^\top \, \text{(resp. }  \mathbf h_i: \x \mapsto \mathrm{Diag} (U_{i+1}^\top U_{i+1} - U_i U_i^\top)\text{)}
\] 
are conservation laws for $\phi_{\mathtt{Lin}}$ (resp. $\phi_{\mathtt{ReLU}}$). Similarly, considering $\theta \coloneqq (Q, K, V, O)$ and $\phi_{\mathtt{Att}}$ from \Cref{ex:attention-factorization},  $\mathbf h: \x \mapsto (Q Q^\top - K K^\top, V V^\top - O O^\top)$
is a set of conservation laws for $\phi_{\mathtt{Att}}$.
\end{proposition}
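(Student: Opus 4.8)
The statement is a verification of the defining pointwise identity for conservation laws. Recall that a scalar $h \in \mathcal C^1(\RD,\R)$ is a conservation law for $\phi$ on $\RD$ exactly when $\partial\phi(\theta)\nabla h(\theta) = 0$ for every $\theta$, i.e. when the differential $\partial\phi(\theta)$ annihilates the direction $\nabla h(\theta)$; a matrix-valued $\mathbf h$ is read componentwise, so it suffices to treat each scalar entry. Accordingly, the plan is: for each of the three parametrizations, compute $\nabla h$ in closed form --- it will always be supported on two ``adjacent'' blocks of parameters --- and then check that $\partial\phi(\theta)\nabla h(\theta)=0$ via the product rule, the vanishing coming from a telescoping cancellation between the contributions of those two blocks. (Equivalently, one can differentiate $t \mapsto h(\theta(t))$ along \eqref{eq:GF}; since $\nabla f(\phi(\theta))$ can be made an arbitrary vector at a fixed $\theta$, e.g. by taking $f$ linear, constancy of $h$ along all such flows is equivalent to the pointwise identity. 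We use the direct route.) Throughout, $E_{ab}$ denotes the matrix with a single $1$ in entry $(a,b)$, and $S \coloneqq E_{ab}+E_{ba}$, which is symmetric.

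\emph{Linear networks and attention layer.} Fix $i$ and indices $a,b$, and let $h$ be the $(a,b)$-entry of $\mathbf h_i = U_{i+1}^\top U_{i+1} - U_i U_i^\top$. A direct computation gives $\nabla_{U_{i+1}} h = U_{i+1}S$, $\nabla_{U_i} h = -S U_i$, and $\nabla_{U_k} h = 0$ for $k \notin \{i,i+1\}$. Since $\partial\phi_{\mathtt{Lin}}(\theta)[\dot\theta] = \sum_{k=1}^{L} (U_L\cdots U_{k+1})\,\dot U_k\,(U_{k-1}\cdots U_1)$, only $k \in \{i,i+1\}$ contribute when $\dot\theta = \nabla h(\theta)$, producing $(U_L\cdots U_{i+1})\,S\,(U_i\cdots U_1)$ and $-(U_L\cdots U_{i+1})\,S\,(U_i\cdots U_1)$, which cancel; hence $\partial\phi_{\mathtt{Lin}}(\theta)\nabla h(\theta)=0$. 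For $\phi_{\mathtt{Att}} = (\phi_1,\phi_2) = (Q^\top K,\,V^\top O)$, take $h$ the $(a,b)$-entry of $QQ^\top - KK^\top$; then $\nabla_Q h = SQ$, $\nabla_K h = -SK$, $\nabla_V h = \nabla_O h = 0$, so $\partial\phi_1(\theta)[\nabla h] = (SQ)^\top K + Q^\top(-SK) = Q^\top S^\top K - Q^\top S K = 0$ by symmetry of $S$, while $\partial\phi_2(\theta)[\nabla h] = 0$ trivially. The entries of $VV^\top - OO^\top$ are handled by the same computation with $(Q,K)$ and $(V,O)$ exchanged.

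\emph{ReLU networks.} For the two-layer lifting $\phi_{\mathtt{ReLU}}(\theta) = (u_j v_j^\top)_{j=1}^r$, the $j$-th entry of $\mathrm{Diag}(U_{i+1}^\top U_{i+1} - U_i U_i^\top)$ reduces to $h = \|u_j\|^2 - \|v_j\|^2$, so $\nabla_{u_j} h = 2u_j$, $\nabla_{v_j} h = -2v_j$, all other partials zero; since $\phi_k = u_k v_k^\top$ depends only on $(u_k,v_k)$, the only nonzero contribution is $\partial(u_j v_j^\top)[(2u_j,-2v_j)] = 2u_j v_j^\top - 2u_j v_j^\top = 0$. For general depth one uses the path-lifting description of $\phi_{\mathtt{ReLU}}$ recalled in the proof of \Cref{thm:MLP_Frobproperty}: each coordinate of $\phi_{\mathtt{ReLU}}$ is the product of the edge weights along a path through the network, and moving along $\nabla\mathbf h_i$ rescales, to first order, every incoming edge of a layer-$(i{+}1)$ neuron by $1+2\epsilon$ and every outgoing edge by $1-2\epsilon$; since a path meets any given hidden neuron at most once, each path-coordinate is multiplied either by $1$ or by $(1+2\epsilon)(1-2\epsilon) = 1-4\epsilon^2$, whose $\epsilon$-derivative at $0$ vanishes, so $\partial\phi_{\mathtt{ReLU}}(\theta)\nabla\mathbf h_i(\theta) = 0$.

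\emph{Main obstacle.} The linear and attention verifications and the two-layer ReLU case are one-line telescoping cancellations; the only genuinely delicate point is the general-depth ReLU statement, where $\phi_{\mathtt{ReLU}}$ is the combinatorially heavier path-lifting. There one must fix notation for paths and for the incoming/outgoing degrees of each hidden neuron and verify the cancellation uniformly over all paths --- this bookkeeping is precisely what is carried out in the proof of \Cref{thm:MLP_Frobproperty}, and for intermediate depths the paper additionally reports symbolic-computation checks of exhaustiveness.
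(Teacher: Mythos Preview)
Your verification is correct and is the natural direct approach. The paper itself does not prove \Cref{prop:knownconservation}; it simply quotes the result from \cite{marcotte2023abide} and \cite{marcottetransformative}, so there is no in-paper proof to compare against. Your linear, attention, and two-layer ReLU computations are clean and complete.

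Two small imprecisions in the deep-ReLU paragraph are worth flagging. First, the rescaling argument should be phrased per diagonal entry: for the $j$-th entry $h = \|U_{i+1} e_j\|^2 - \|e_j^\top U_i\|^2$, the gradient is supported only on the edges incident to the single hidden neuron $j$, not on all edges of the layer; your wording ``every incoming edge of a layer-$(i{+}1)$ neuron'' obscures this. Once restricted to neuron $j$, your $(1+2\epsilon)(1-2\epsilon)$ cancellation is exactly right. Second, your ``Main obstacle'' paragraph conflates two separate questions: showing that the $\mathbf h_i$ \emph{are} conservation laws (the content of the proposition, which your rescaling argument settles) versus showing that they exhaust \emph{all} conservation laws (a much harder statement, which is what the Lie-algebra machinery around \Cref{thm:MLP_Frobproperty} and the cited symbolic checks address). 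The proof of \Cref{thm:MLP_Frobproperty} does not carry out the bookkeeping you describe; it computes Lie brackets $[\nabla\phi_i,\nabla\phi_j]$, which is orthogonal to checking $\partial\phi\,\nabla h = 0$. Neither of these affects the correctness of the argument for the proposition as stated.
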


\subsection{{Intrinsic dynamics via conservation laws}}
\vspace{-0.3cm}
\RG{Given conservation laws $\mathbf{h}(\theta)$ for $\phi$, \reb{any} trajectory $\theta(t)$ \reb{satisfying} \eqref{eq:GF} remains at all times on the set}
\begin{equation}\label{eq:DefManifold}
    \mathcal{M}_{\x_0} \coloneqq \{\x: \mathbf{h}(\x) = \mathbf{h}(\x_0)\},
\end{equation}
\RG{determined by $\x_0$. This holds true {\em for any function $f$ such that} $\ell = f \circ \phi$ (hence, in machine learning: for any task/loss and any dataset, provided that the network model is (locally) factorized via $\phi$).}

\RG{To establish the existence of a function $K_{\theta_0}(\cdot)$ such that $M(\x(t)) = K_{\x_0}(z(t))$ on the whole trajectory, a natural relaxation is thus to establish a related equality {\em }on the whole set $\mathcal{M}_{\x_0}$ rather than only on a specific trajectory. This leads to the following definition and its immediate consequence.} 
\begin{definition}[Intrinsic metric property] \label{def:stronger}
We say that $\x_0$ verifies the {\em intrinsic metric property} with respect to $\phi$ on an open set $U \ni \x_0$ if there exists 
conservation laws $\mathbf{h}(\x) \in \R^N$ for $\phi$  and a function $K_{\x_0} \in \mathcal{C}^1(\R^d, \R^{d \times d})$ such that  $M (\x) = K_{\x_0}(\phi(\x))$ for each $\theta \in  \mathcal{M}_{\x_0} \cap U$.
\end{definition}

\begin{lemma} \label{lemma:premiereimplic}
If $\x_0$ verifies the intrinsic metric property \ref{def:stronger} on $U$ with respect to $\phi$, then it also verifies the intrinsic dynamic property \ref{def:weaker} on $U$ with respect to $\phi$.
\end{lemma}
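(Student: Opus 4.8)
The plan is to unpack both definitions and show that the hypothesis of the intrinsic metric property directly furnishes the ingredients needed for the intrinsic dynamic property. The key observation is that the two definitions differ only in \emph{where} the identity $M(\theta) = K_{\theta_0}(\phi(\theta))$ is required to hold: for the metric property it must hold on the slice $\mathcal{M}_{\theta_0}\cap U$ of the conservation-law level set, whereas for the dynamic property it must hold along the actual gradient-flow trajectory $\theta(t)$ (as long as $\theta(t)\in U$), for every $f\in\mathcal{C}^2$ with $\ell=f\circ\phi$.

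First I would fix the conservation laws $\mathbf{h}$ and the map $K_{\theta_0}\in\mathcal{C}^1(\R^d,\R^{d\times d})$ provided by the intrinsic metric property on $U$, and take this same $K_{\theta_0}$ as the candidate for the intrinsic dynamic property. Then, given any $f\in\mathcal{C}^2$, let $\theta(\cdot)$ be the maximal solution of \eqref{eq:GF} with $\ell=f\circ\phi$ and initialization $\theta(0)=\theta_0$. The core step is to invoke the proposition on conservation laws stated just before \Cref{def:stronger} (the one asserting that if $\partial\phi(\theta)\nabla h(\theta)=0$ on $\Omega$ then $h$ is constant along the flow): since each component of $\mathbf{h}$ is a conservation law for $\phi$, we get $\mathbf{h}(\theta(t))=\mathbf{h}(\theta_0)$ for all $t$, i.e. $\theta(t)\in\mathcal{M}_{\theta_0}$ for all $t$ (using the definition \eqref{eq:DefManifold} of $\mathcal{M}_{\theta_0}$). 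Consequently, for every $t$ such that $\theta(t)\in U$, we have $\theta(t)\in\mathcal{M}_{\theta_0}\cap U$, so the intrinsic metric identity applies at the point $\theta(t)$ and yields $M(\theta(t))=K_{\theta_0}(\phi(\theta(t)))$. This is exactly the condition required in \Cref{def:weaker}, and since $f$ was arbitrary, $\theta_0$ verifies the intrinsic dynamic property on $U$ with respect to $\phi$.

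I do not anticipate a genuine obstacle here — the lemma is essentially a "the trajectory stays inside the level set, so a pointwise identity on the level set transfers to the trajectory" argument. The only small points of care are: (i) making sure the initialization $\theta_0$ considered for the gradient flow is the same $\theta_0$ appearing in both properties (this is built into the statement, since both are properties "of $\theta_0$"); and (ii) checking that the conservation-law proposition applies, which just requires that $\phi$ satisfies \Cref{as:main_assumption} and that $\theta_0\in\Omega$ — both available in the ambient setting. One might also remark that the regularity $K_{\theta_0}\in\mathcal{C}^1$ coming from \Cref{def:stronger} is stronger than the bare $K_{\theta_0}\colon\R^d\to\R^{d\times d}$ asked for in \Cref{def:weaker}, so nothing is lost. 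Hence the implication holds with the \emph{same} function $K_{\theta_0}$, which is the natural and expected conclusion.
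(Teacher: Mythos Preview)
Your argument is correct and is precisely the reasoning the paper has in mind: the paper states this lemma as an ``immediate consequence'' of \Cref{def:stronger} without writing out a proof, and your unpacking --- trajectory stays in $\mathcal{M}_{\theta_0}$ by conservation of $\mathbf{h}$, hence the pointwise identity on $\mathcal{M}_{\theta_0}\cap U$ transfers to the trajectory --- is exactly the intended justification.
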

\begin{remark}\label{rem:impclass}
\RG{It is not difficult to check on all examples considered in this paper 
that if $\x_0$ satisfies the intrinsic metric property  with respect to $\phi$ on {\em some} open set $U$, then any $\x'_0 \in \mathcal{M}_{\x_0}$ also satisfies the property on a properly modified open set $U'$, with the same function $K$. This function thus only depends on $\mathbf{h}(\x_0)$, and we denote it 
$K_{\mathbf{h}(\x_0)}$ when needed to highlight this fact.}
\end{remark}
\begin{remark}
\RG{
\Cref{lemma:premiereimplic} remains valid with a slightly weakened version of \Cref{def:stronger}, where  $K_{\theta_0}$ is not required to be smooth. Yet, since the existence of a smooth solution to the resulting intrinsic ODE \eqref{eq:ODEz} is simplified when $K_{\x_0}$ is $\mathcal{C}^1$, we chose to include this in the definition.}
\end{remark}

The following theorem (proved in \Cref{app:propinclusionnoyaux}) establishes a necessary condition for the intrinsic metric property to hold. \RG{We use it to show that the property {\em does not always} hold for linear networks.}

\begin{restatable}{theorem}{propinclusionnoyaux} \label{prop:inclusionnoyaux}
Consider $\mathbf{h} \in \mathcal{C}^1(\RD, \R^{N})$, $\phi\in \mathcal{C}^2(\R^D, \R^d)$, and $\x_0 \in \R^D$ such that the matrix $\partial \mathbf{h} (\x) \in \R^{N \times D}$ has constant rank  on $\mathcal{M}_{\x_0}\cap U$ with $U \ni \x_0$ an open subset of $\RD$ and $\mathcal{M}_{\x_0} \coloneqq \mathbf{h}^{-1} ( \{ \mathbf{h} (\x_0) \})$.
Then (i) $\implies$ (ii), where
\begin{itemize}
    \item[(i)] There exists an open set $O \supset \phi(\mathcal{M}_{\x_0} \cap U)$ and a map $K_{\theta_0} \in\mathcal C^{1}(O,\mathbb R^{d\times d})$ such that:
 \begin{align*}
        M(\x) &= K_{\theta_0}(\phi(\x)), \qquad \forall \x \in \mathcal{M}_{\x_0} \cap U;
      \end{align*}
      \vspace{-0.7cm}
 \item[(ii)] 
    \hfill
     $\mathrm{ker} \partial \phi (\theta) \cap  \mathrm{ker} \partial \mathbf{h} (\theta)  \subseteq \mathrm{ker} \partial M (\theta)
     ,\quad \forall \x \in  \mathcal{M}_{\x_0} \cap U$.
  \hfill\refstepcounter{equation}
  \label{eq:inclusionnoyaux}
  \textup{(\theequation)}%
\end{itemize}
\end{restatable}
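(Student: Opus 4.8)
The plan is to differentiate the assumed identity $M(\theta)=K_{\theta_0}(\phi(\theta))$ along the level set $\mathcal{M}_{\theta_0}$ and then restrict the resulting linear relation to the subspace $\ker\partial\phi(\theta)$. The subtlety is that this identity is only assumed on $\mathcal{M}_{\theta_0}\cap U$, which is not open in $\R^D$, so the chain rule cannot be applied naively on $\R^D$; the constant-rank hypothesis on $\partial\mathbf{h}$ is precisely what will let me differentiate ``intrinsically'' along $\mathcal{M}_{\theta_0}$.

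First I would invoke the constant-rank (level-set) theorem: since $\partial\mathbf{h}$ has constant rank $r$ on $\mathcal{M}_{\theta_0}\cap U$, near every $\theta\in\mathcal{M}_{\theta_0}\cap U$ the set $\mathcal{M}_{\theta_0}\cap U$ is a $\mathcal{C}^1$ embedded submanifold of $\R^D$ of dimension $D-r$, with tangent space at $\theta$ equal to $\ker\partial\mathbf{h}(\theta)$ (the inclusion $T_\theta\mathcal{M}_{\theta_0}\subseteq\ker\partial\mathbf{h}(\theta)$ follows by differentiating $\mathbf{h}\equiv\mathbf{h}(\theta_0)$ along $\mathcal{M}_{\theta_0}$, and equality from the dimension count $\dim\ker\partial\mathbf{h}(\theta)=D-r$). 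Note that $M=\partial\phi\,\partial\phi^\top$ is $\mathcal{C}^1$ because $\phi\in\mathcal{C}^2$, and $K_{\theta_0}$ is $\mathcal{C}^1$ on $O\supseteq\phi(\mathcal{M}_{\theta_0}\cap U)$ by hypothesis (i), so the chain rule below is legitimate.

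Then, fixing $\theta\in\mathcal{M}_{\theta_0}\cap U$ and $v\in\ker\partial\phi(\theta)\cap\ker\partial\mathbf{h}(\theta)$, I would use that $v\in T_\theta\mathcal{M}_{\theta_0}$ to pick a $\mathcal{C}^1$ curve $\gamma:(-\varepsilon,\varepsilon)\to\mathcal{M}_{\theta_0}\cap U$ with $\gamma(0)=\theta$ and $\dot\gamma(0)=v$ (in a local parametrization of the manifold this is immediate). Along $\gamma$ the identity $M(\gamma(s))=K_{\theta_0}(\phi(\gamma(s)))$ holds for all $s$, since $\gamma(s)\in\mathcal{M}_{\theta_0}\cap U$ and $\phi(\gamma(s))\in O$, and differentiating at $s=0$ gives
\begin{equation*}
\partial M(\theta)\,v=\partial K_{\theta_0}(\phi(\theta))\,\partial\phi(\theta)\,v=\partial K_{\theta_0}(\phi(\theta))\cdot 0=0,
\end{equation*}
where the last step uses $v\in\ker\partial\phi(\theta)$. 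Hence $v\in\ker\partial M(\theta)$; since $\theta$ and $v$ were arbitrary, this is exactly \eqref{eq:inclusionnoyaux}, i.e. (ii).

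The hard part is the first step: converting the constant-rank hypothesis into the statement that $\ker\partial\mathbf{h}(\theta)$ is genuinely the tangent space of a $\mathcal{C}^1$ manifold through $\theta$ contained in $\mathcal{M}_{\theta_0}$. Without constant rank, $\mathcal{M}_{\theta_0}$ may be singular at $\theta$ and $\ker\partial\mathbf{h}(\theta)$ may be strictly larger than the set of velocities of $\mathcal{C}^1$ curves in $\mathcal{M}_{\theta_0}$, so the curve $\gamma$ need not exist and the argument collapses. Once the manifold picture is secured, the remainder — the curve and the chain rule — is routine, and in particular makes no use of the gradient flow \eqref{eq:GF} itself, only of the geometry of the level set $\mathcal{M}_{\theta_0}$.
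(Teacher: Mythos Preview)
Your proof is correct and follows essentially the same approach as the paper: identify $\ker\partial\mathbf{h}(\theta)$ with $T_\theta\mathcal{M}_{\theta_0}$ via the constant-rank hypothesis, then apply the chain rule to the identity $M=K_{\theta_0}\circ\phi$ in the tangent direction $v$. If anything, your version is more carefully justified --- you make explicit the curve $\gamma$ needed to differentiate an identity that only holds on the submanifold, whereas the paper simply invokes ``the chain rule in the ambient space $\R^D$'' with a parenthetical remark that $v\in T_\theta\mathcal{M}_{\theta_0}$.
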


A trivial case where \eqref{eq:inclusionnoyaux} holds is when the intersection of kernels on the left hand side is zero: 
\eq{  \label{eq:MLPkernelfirst}
\mathrm{ker} \partial \phi (\theta) \cap  \mathrm{ker} \partial \mathbf{h} (\theta) = \{ 0 \}.
}
This stronger assumption can in fact be shown to imply the intrinsic metric property  (see \Cref{thm:implicitfunctions} in the upcoming section), and we will show (cf \Cref{coro:relu}) that, with $\phi_{\mathtt{ReLU}}$  associated to general DAG (directed acyclic graph) ReLU networks of any depth, there exists a set of conservation laws such that
\eqref{eq:MLPkernelfirst} indeed holds {\em for any initialization}. This implies the intrinsic metric property and therefore the intrinsic dynamic property irrespective of the initialization for ReLU networks with  $\phi_{\mathtt{ReLU}}$.

For linear networks with more than one hidden neuron, \eqref{eq:MLPkernelfirst} \emph{does not hold} (as a consequence of Steps 2,3 in the proof of \Cref{thm:notinclusion}), and we will show that \eqref{eq:inclusionnoyaux} \emph{does not always hold}.
Nevertheless, certain initializations, known as balanced conditions \citep{arora18a}, are known to satisfy the intrinsic \reb{dynamic} property with respect to the reparametrisation $\phi_{\mathtt{Lin}}$ (cf \citep[Theorem 1]{Arora18b}, \citep[Lemma 2]{Bah}). In this paper, we generalize this result to so-called \emph{relaxed balanced initializations} (see \Cref{def:quasibalanced}). Moreover, we show that in certain configurations, relaxed balanced initializations \emph{are exactly the only ones that satisfy the intrinsic metric property} (cf \Cref{thm:stage2layer} and \Cref{thm:notinclusion}).
\reb{\subsection{Intrinsic recoverability}}
\vspace{-0.3cm}
\RG{In this section we consider a stronger condition called {\em intrinsic recoverability property} which requires not only that }
\( M(\theta(t)) \) can be rewritten as a function of \( z(t) \) and the initialization, but also that \RG{at each point of the } 
trajectory \( \theta(t) \) itself can be fully expressed in terms of \( z(t) \) and the initialization $\x_0$. In other words, in this scenario, \( \theta(t) \) can be \textit{completely recovered} from the parameterization $\phi$ and the initialization alone, hence the name. \RG{As we will establish, this apparently strong property indeed holds when \eqref{eq:MLPkernelfirst} is satisfied, which is always the case for ReLU networks.}

\subsubsection{Intrinsic recoverability  implies intrinsic metric}

\begin{definition}[Intrinsic recoverability property] \label{def:intrinsicrecov}
We say that $\x_0$ verifies the intrinsic recoverability property on an open set $U \ni \x_0$ with respect to $\phi$, if there exists 
conservation laws $\mathbf{h}(\x) \in \R^N$ for $\phi$  and a function $\Gamma(\cdot) \in \mathcal{C}^1(\R^d \times \R^N, \R^D)$ such that  $\x = \Gamma (\phi(\x), \mathbf{h} (\x))$ for each $\theta \in  U$.
\end{definition}
When this property holds, each $\x \in \mathcal{M}_{\x_0}$ satisfies $M(\x) = M[\Gamma(\phi(\theta),\mathbf{h}(\x))] = M[\Gamma(\phi(\theta),\mathbf{h}(\x_0))] = K_{\mathbf{h}(\x_0)}(\phi(\theta))$ (with $K_{\mathbf{h}(\x_0)}(\cdot) := M[ \Gamma(\cdot,\mathbf{h}(\x_0))]$), hence the following result.
\begin{lemma} \label{lemma:secondeimplication}
If $\x_0$ satisfies the intrinsic recoverability property on an open set $U \ni \x_0$ with respect to $\phi$, then $\x_0$ satisfies the intrinsic metric property on $U$ with respect to $\phi$.
\end{lemma}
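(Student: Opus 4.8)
The strategy is essentially to unwind the one-line computation sketched just before the statement, making each step precise and checking the regularity conditions demanded by \Cref{def:stronger}. Suppose $\x_0$ satisfies the intrinsic recoverability property on the open set $U \ni \x_0$: there are conservation laws $\mathbf{h}(\x) \in \R^N$ for $\phi$ and a map $\Gamma \in \mathcal{C}^1(\R^d \times \R^N, \R^D)$ with $\x = \Gamma(\phi(\x), \mathbf{h}(\x))$ for every $\x \in U$. The same conservation laws $\mathbf{h}$ define the manifold $\mathcal{M}_{\x_0} = \{\x : \mathbf{h}(\x) = \mathbf{h}(\x_0)\}$, and I will use exactly these laws to witness the intrinsic metric property.

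First I would define the candidate metric by $K_{\x_0}(z) \coloneqq M\big(\Gamma(z, \mathbf{h}(\x_0))\big)$ for $z$ in a suitable open subset of $\R^d$. Then, for any $\x \in \mathcal{M}_{\x_0} \cap U$, one has $\mathbf{h}(\x) = \mathbf{h}(\x_0)$ by definition of $\mathcal{M}_{\x_0}$, hence
\[
M(\x) = M\big(\Gamma(\phi(\x), \mathbf{h}(\x))\big) = M\big(\Gamma(\phi(\x), \mathbf{h}(\x_0))\big) = K_{\x_0}(\phi(\x)),
\]
where the first equality uses the recoverability identity $\x = \Gamma(\phi(\x), \mathbf{h}(\x))$ and the second uses $\mathbf{h}(\x) = \mathbf{h}(\x_0)$. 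This is precisely the equality required in \Cref{def:stronger}.

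The one point needing care is the regularity requirement $K_{\x_0} \in \mathcal{C}^1(\R^d, \R^{d\times d})$. Since $\phi \in \mathcal{C}^2$, the matrix $M = \partial\phi\, \partial\phi^\top$ is $\mathcal{C}^1$ as a function on $\R^D$, and $z \mapsto \Gamma(z, \mathbf{h}(\x_0))$ is $\mathcal{C}^1$ by assumption on $\Gamma$; hence $K_{\x_0}$ is $\mathcal{C}^1$ by composition, at least on the open set $O$ where $\Gamma(\cdot, \mathbf{h}(\x_0))$ is defined. If \Cref{def:stronger} is read as requiring $K_{\x_0}$ to be globally $\mathcal{C}^1$ on all of $\R^d$, one extends it off $O$ by any $\mathcal{C}^1$ bump-function argument, which does not affect the values on $\phi(\mathcal{M}_{\x_0}\cap U)$; I expect the intended reading is that $K_{\x_0}$ is defined and $\mathcal{C}^1$ on a neighborhood of $\phi(\mathcal{M}_{\x_0}\cap U)$, so this is a non-issue. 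There is no genuine obstacle here — the lemma is an immediate corollary of the definitions, and the only thing to be vigilant about is the bookkeeping of domains (making sure $\phi(\x)$ stays in the domain of $\Gamma(\cdot, \mathbf{h}(\x_0))$, which holds because $\x \in U$). This completes the proof.
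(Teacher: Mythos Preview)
Your proposal is correct and follows exactly the same approach as the paper: define $K_{\x_0}(z) := M(\Gamma(z,\mathbf{h}(\x_0)))$ and verify the chain of equalities $M(\x) = M(\Gamma(\phi(\x),\mathbf{h}(\x))) = M(\Gamma(\phi(\x),\mathbf{h}(\x_0))) = K_{\x_0}(\phi(\x))$ for $\x \in \mathcal{M}_{\x_0} \cap U$. Your version is actually more careful than the paper's one-line sketch, since you explicitly address the $\mathcal{C}^1$ regularity of $K_{\x_0}$ via the composition $M \circ \Gamma(\cdot,\mathbf{h}(\x_0))$.
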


The intrinsic recoverability property is equivalent to \eqref{eq:MLPkernelfirst}
(see \Cref{app:thmimplicitfunctions} for a proof):
\begin{restatable}{theorem}{thmimplicitfunctions}\label{thm:implicitfunctions}
Given $\phi\in \mathcal{C}^2(\R^D, \R^d)$ and $\x_0 \in \RD$, the following are equivalent: (i) there are conservation laws $\mathbf{h} \in \mathcal{C}^1(\ouv, \R^{N})$ for $\phi$ on a neighborhood $\ouv$ of $\x_0$ such that
\eqref{eq:MLPkernelfirst} holds
for each $\x \in \mathcal{M}_{\x_0} \cap \Omega$;
(ii)
 there is an open set $U \subseteq \Omega$ on which $\theta_0$ satisfies the intrinsic recoverability property \Cref{def:intrinsicrecov} (and thus the intrinsic metric property \Cref{def:stronger}) with respect to $\phi$.
\end{restatable}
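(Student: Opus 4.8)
The plan is to route both implications through the \emph{stacked map} $\Psi \coloneqq (\phi,\mathbf{h}) \colon \R^D \to \R^d \times \R^N$, whose Jacobian at any point $\theta$ is obtained by stacking $\partial\phi(\theta)$ over $\partial\mathbf{h}(\theta)$ and whose kernel is therefore exactly $\ker\partial\phi(\theta)\cap\ker\partial\mathbf{h}(\theta)$. With this observation, \eqref{eq:MLPkernelfirst} says precisely that $\partial\Psi(\theta)$ is injective, i.e.\ has full column rank $D$, while the intrinsic recoverability property says precisely that $\Psi$ admits a $\mathcal{C}^1$ left inverse near $\theta_0$; the equivalence is then a packaging of the inverse function theorem, and the intrinsic metric part of (ii) comes for free from \Cref{lemma:secondeimplication}.

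For (ii) $\Rightarrow$ (i), I would start from the recoverability identity $\theta = \Gamma(\phi(\theta),\mathbf{h}(\theta))$ valid on $U$ with conservation laws $\mathbf{h}$ for $\phi$, differentiate it by the chain rule to get
\[
\mId{D} = \partial\Gamma\big(\Psi(\theta)\big)\,\partial\Psi(\theta)\qquad\text{for all }\theta\in U,
\]
conclude that $\partial\Psi(\theta)$ has a left inverse hence is injective, and read off $\ker\partial\phi(\theta)\cap\ker\partial\mathbf{h}(\theta)=\ker\partial\Psi(\theta)=\{0\}$ for every $\theta\in U$. In particular \eqref{eq:MLPkernelfirst} holds on $\mathcal{M}_{\theta_0}\cap U$, so (i) holds with $\ouv = U$.

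For (i) $\Rightarrow$ (ii), I would evaluate \eqref{eq:MLPkernelfirst} at $\theta_0\in\mathcal{M}_{\theta_0}\cap\ouv$ to get that $\partial\Psi(\theta_0)\in\R^{(d+N)\times D}$ has full column rank $D$, then select a set $S$ of $D$ row indices for which the square submatrix $\partial\Psi_S(\theta_0)$ is invertible, where $\Psi_S\colon\R^D\to\R^D$ collects the corresponding $D$ coordinates of $\Psi$. The inverse function theorem then gives an open $U\ni\theta_0$ (shrunk, if needed, so that $\mathbf{h}$ is still defined and a conservation law on $U$) on which $\Psi_S$ is a $\mathcal{C}^1$-diffeomorphism onto a neighborhood of $\Psi_S(\theta_0)$. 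Setting $\Gamma(a,b)\coloneqq(\Psi_S)^{-1}(\pi_S(a,b))$, where $\pi_S$ extracts the $S$-coordinates of $(a,b)\in\R^d\times\R^N$, and extending $\Gamma$ to a global $\mathcal{C}^1$ map by a bump-function argument, yields $\theta=\Gamma(\phi(\theta),\mathbf{h}(\theta))$ on $U$, i.e.\ the intrinsic recoverability property; the intrinsic metric property follows from \Cref{lemma:secondeimplication}.

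The only delicate point — what I would flag as the main obstacle — is producing a recovery map $\Gamma$ that is defined and $\mathcal{C}^1$ on a \emph{full-dimensional} open subset of $\R^d\times\R^N$ rather than merely on the lower-dimensional image $\Psi(U)$: this is exactly why one inverts the square map $\Psi_S$ after a row selection instead of $\Psi$ itself. It is also why no constant-rank hypothesis is needed here, in contrast with \Cref{prop:inclusionnoyaux} — full column rank is automatically preserved under small perturbations of $\theta$. A secondary routine check is that restricting to the smaller neighborhood $U$ does not destroy the conservation-law property of $\mathbf{h}$, which is immediate since the defining condition $\partial\phi(\theta)\nabla h(\theta)=0$ is pointwise.
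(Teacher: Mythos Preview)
Your proposal is correct and follows essentially the same approach as the paper: both directions go through the observation that \eqref{eq:MLPkernelfirst} is equivalent to full column rank of the stacked Jacobian $\partial\Psi$, with (i)$\Rightarrow$(ii) handled by the inverse/implicit function theorem and (ii)$\Rightarrow$(i) by differentiating the recoverability identity. If anything, you are slightly more careful than the paper on two points it glosses over: the explicit row selection to obtain a square invertible map, and the bump-function extension needed so that $\Gamma$ is globally $\mathcal{C}^1$ on $\R^d\times\R^N$ as \Cref{def:intrinsicrecov} demands.
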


\vspace{-0.3cm}
\subsubsection{The Frobenius property is almost equivalent to intrinsic recoverability} 
\vspace{-0.2cm}
We are interested in \eqref{eq:MLPkernelfirst}, as it implies the \emph{intrinsic recoverability property}, and thus an intrinsic dynamic.
\RG{It may not seem} obvious a priori how to verify whether such a condition can hold, nor how to construct suitable conservation laws $\mathbf{h}$ in practice.
Intuitively, one should select as many conservation laws as possible while ensuring they remain independent, in a specific sense \cite[Definition 2.18]{marcotte2023abide}. As shown by \cite{marcotte2023abide}, knowing the maximal number of such conservation laws can be checked using Lie brackets of the associated vector fields. We recall the relevant definitions and explain how this criterion applies in our setting.

\begin{definition}[Lie brackets]
Given two vector fields $\chi_1, \chi_2 \in \mathcal{C}^\infty(\Theta, \R^d)$, the \textit{Lie bracket} $[\chi_1, \chi_2]$ is the vector field defined by
$ [\chi_1, \chi_2](\x)\coloneqq \partial \chi_2(\x) \chi_1(\x) - \partial \chi_1(\x) \chi_2(\x)$.
\end{definition}

\begin{definition}[Generated Lie algebra]
Given some function space $\Wfspace{} \subseteq \mathcal{C}^\infty(\Theta, \R^d)$, the \textit{generated Lie algebra} of $\Wfspace{}$ is the smallest subspace of $\mathcal{C}^\infty(\Theta, \R^d)$ that contains $\Wfspace{}$ and that is stable under Lie brackets, and is denoted $\lie(\Wfspace{})$.
\end{definition}
 The \emph{trace} at $\x \in \Theta$ of {any set} $\Wfspace{} \subset \mathcal{C}^\infty(\Theta, \RD)$ {of vector fields} is defined as the linear  space
\begin{equation}\label{eq:DefTrace}
\Wfspace{}(\x) \coloneqq \linspan{\{\chi(\x): \chi \in \Wfspace{}\}} \subseteq \RD,
\end{equation}
and \RG{for any infinitely smooth $\phi$} we denote $\Wfspace{}_\phi \coloneqq \linspan \{ \nabla \phi_i\RG{(\cdot),\ 1 \leq i \leq d} \} \subseteq \mathcal{C}^\infty(\Theta, \R^D)$.
\begin{definition}[Frobenius property] \label{def:frobprop}
\RG{A $\mathcal{C}^\infty$ function}
$\phi$ satisfies the \textit{Frobenius property} on $\ouv$ if for all $\theta \in \ouv$,  $\lie(\Wfspace{}_\phi)(\x) = \Wfspace{}_\phi(\x)$. This property is slightly weaker than involutivity \citep{isidori}.
\end{definition}
The following proposition (proved in \Cref{app:propkernelstrongest}) relates  this property to the intrinsic dynamic property of $ \x_0$. In particular, as the Frobenius property does not hold for $\phi_{\mathtt{Lin}}$ \citep[Proposition I.1]{marcotte2023abide}, it is not possible to have the intrinsic recoverability for linear networks with classical $\phi_{\mathtt{Lin}}$.

\begin{restatable}{proposition}{propkernelstrongest} \label{prop:kernelstrongest}
We have the following implications $(i) \implies (ii) \implies (iii)$:
$(i)$ $\phi$ satisfies the Frobenius property on $\ouv$ and the trace of $\Wfspace{}_\phi$ has its dimension that is constant on $\ouv$; $(ii)$ For any $\x_0 \in \ouv$, there exists conservation laws $\mathbf{h}$ for $\phi$ on a neighborhood $U \subset \ouv$ of $\x_0$ such that for each $\x \in \mathcal{M}_{\x_0}$ \eqref{eq:MLPkernelfirst} holds; $(iii)$ $\phi$ satisfies the Frobenius property on $\ouv$.
\end{restatable}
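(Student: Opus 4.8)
The plan is to establish the cycle of implications $(i)\Rightarrow(ii)\Rightarrow(iii)$ by reducing $(i)\Rightarrow(ii)$ to a combination of the Frobenius theorem from differential geometry and the equivalence already recorded in \Cref{thm:implicitfunctions}, and then deriving $(iii)$ from $(ii)$ essentially by unwinding the definitions of conservation laws and of the trace $\Wfspace{}_\phi(\theta)$.

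For $(i)\Rightarrow(ii)$: the hypothesis says $\mathcal{D}(\theta)\coloneqq \Wfspace{}_\phi(\theta)=\linspan\{\nabla\phi_1(\theta),\dots,\nabla\phi_d(\theta)\}$ defines a distribution of locally constant rank $k$ on $\ouv$ that is involutive (this is exactly the Frobenius property, since $\lie(\Wfspace{}_\phi)(\theta)=\Wfspace{}_\phi(\theta)$ means the span is stable under Lie brackets of its generating vector fields). By the classical Frobenius integrability theorem, through each $\theta_0\in\ouv$ there is a local chart in which $\mathcal{D}$ is spanned by the first $k$ coordinate vector fields, equivalently there exist $D-k$ functions $\mathbf{h}=(h_1,\dots,h_{D-k})\in\mathcal{C}^1$ on a neighborhood $U\subseteq\ouv$ of $\theta_0$ with linearly independent gradients at every point of $U$ and $\nabla h_j(\theta)\perp \mathcal{D}(\theta)$, i.e. $\partial\phi(\theta)\nabla h_j(\theta)=0$: these are conservation laws for $\phi$. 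Now $\ker\partial\phi(\theta)=\mathcal{D}(\theta)^{\perp}$ has dimension $D-k$ and is spanned by $\nabla h_1(\theta),\dots,\nabla h_{D-k}(\theta)$, hence $\ker\partial\phi(\theta)=\linspan\{\nabla h_j(\theta)\}$. On the other hand $\ker\partial\mathbf{h}(\theta)=\linspan\{\nabla h_j(\theta)\}^{\perp}$ since $\partial\mathbf{h}(\theta)$ has full row rank $D-k$. Therefore $\ker\partial\phi(\theta)\cap\ker\partial\mathbf{h}(\theta)=\linspan\{\nabla h_j\}\cap\linspan\{\nabla h_j\}^{\perp}=\{0\}$, which is precisely \eqref{eq:MLPkernelfirst}, for every $\theta\in U$ (in particular for every $\theta\in\mathcal{M}_{\theta_0}$). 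This gives $(ii)$; note $(ii)\Rightarrow$ intrinsic recoverability then follows for free from \Cref{thm:implicitfunctions}.

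For $(ii)\Rightarrow(iii)$: by \Cref{thm:implicitfunctions} again, $(ii)$ gives the intrinsic recoverability property on an open $U$ around any $\theta_0$. The key point is that \eqref{eq:MLPkernelfirst}, $\ker\partial\phi(\theta)\cap\ker\partial\mathbf{h}(\theta)=\{0\}$, together with the fact that the $\nabla h_j$ are orthogonal to all $\nabla\phi_i$, forces $\linspan\{\nabla\phi_i(\theta)\}$ and $\linspan\{\nabla h_j(\theta)\}$ to be orthogonal complements, hence $\ker\partial\phi(\theta)=\linspan\{\nabla h_j(\theta)\}$ and the number of independent conservation laws equals $D-\dim\Wfspace{}_\phi(\theta)$; by the characterization of the maximal number of independent conservation laws via Lie brackets in \cite{marcotte2023abide}, this maximality is equivalent to $\lie(\Wfspace{}_\phi)(\theta)=\Wfspace{}_\phi(\theta)$, i.e. the Frobenius property. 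I would phrase this by invoking the relevant result of \cite{marcotte2023abide} directly: the existence, near every point, of $D-\dim\Wfspace{}_\phi$ independent conservation laws is equivalent to the Frobenius property, so $(ii)$ yields $(iii)$.

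The main obstacle I anticipate is the bookkeeping of \emph{constant rank} hypotheses and how locally constant rank is used to invoke the Frobenius theorem cleanly — one must be careful that the Frobenius property as stated (pointwise equality of $\lie(\Wfspace{}_\phi)(\theta)$ and $\Wfspace{}_\phi(\theta)$ for all $\theta$) does not a priori come with a constant-rank guarantee, which is why $(i)$ adds "dimension constant on $\ouv$" while $(iii)$ does not; so the delicate part is making sure the chain of implications respects exactly which hypotheses are available at each stage, and in particular that $(ii)$, which only promises conservation laws on small neighborhoods, still suffices to conclude the pointwise Frobenius identity $(iii)$ everywhere on $\ouv$ by a local-to-global argument. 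A secondary technical point is justifying that the conservation laws produced by the Frobenius theorem can be taken in $\mathcal{C}^1$ (indeed $\mathcal{C}^\infty$, since $\phi$ is infinitely smooth) with independent gradients, which is standard but should be stated.
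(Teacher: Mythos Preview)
Your proposal is correct and follows essentially the same approach as the paper, with a minor inversion of what is spelled out versus cited. For $(i)\Rightarrow(ii)$ you make the Frobenius-theorem construction explicit, whereas the paper simply defers to \cite[Proposition~3.7]{marcotte2023abide}; for $(ii)\Rightarrow(iii)$ the roles are reversed: you appeal to the characterization in \cite{marcotte2023abide} of the maximal number of independent conservation laws, while the paper gives the elementary self-contained argument. Namely, the paper observes that if $\langle\nabla h,\nabla\phi_i\rangle\equiv 0$ for all $i$ then $\langle\nabla h,[\nabla\phi_i,\nabla\phi_j]\rangle\equiv 0$ as well (differentiate the first identity), so the gradients $\nabla h_k$ are orthogonal not only to $\Wfspace{}_\phi(\theta_0)$ but to all of $\lie(\Wfspace{}_\phi)(\theta_0)$; since you have already shown that $\linspan\{\nabla h_k(\theta_0)\}=\Wfspace{}_\phi(\theta_0)^{\perp}$, this forces $\dim\lie(\Wfspace{}_\phi)(\theta_0)\le\dim\Wfspace{}_\phi(\theta_0)$ and hence equality. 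You may want to replace your external reference by this two-line computation to make $(ii)\Rightarrow(iii)$ self-contained.
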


{\bf Link with the commuting property.} The result presented in~\cite{li22} 
can be recovered as a special case of \Cref{prop:kernelstrongest}: the authors require the reparametrization $\phi$ to be {\em commuting}, meaning that for all pairs $\phi_i, \phi_j$, the \RG{Lie bracket} 
$[\nabla \phi_i, \nabla \phi_j]$ is 
zero. In this setting, $\phi$ naturally satisfies the Frobenius property, and the result of~\cite{li22} establishes an even stronger property: the dynamics on $\phi(\theta)$ form a mirror flow. In particular, it is worth noting that diagonal networks satisfy that their parametrization (product of the diagonals) is commuting (as all coordinates functions are separable), which thus \citep{li22} implies a mirror flow dynamic, and thus an implicit bias (see e.g. \cite{azulay21}). In contrast, we consider a weaker condition; we seek only to determine whether the dynamics on $z = \phi(\theta)$ can be expressed intrinsically as a Riemannian gradient flow. 

\vspace{-0.9cm}
\reb{\section{Application for general ReLU networks and Linear Networks}}
\vspace{-0.3cm}
We now show that the intrinsic recoverability property is satisfied  {\em for any initialization} for the parametrization $\phi$ associated to a large class of (deep) ReLU neural networks. While this result is already known in the two-layer case \citep[Examples 3.5 and 3.8]{marcotte2023abide}, {\em here we establish it for the general model of ReLU networks of \cite{gonon_path-norm_2023}, associated to a directed acyclic graph (DAG) of any depth}, including skip connexions and arbitrary mixes of ReLU/linear/max-pooling activations.
We first establish that $\phi_{\mathtt{ReLU}}$ satisfies the Frobenius property (see \Cref{app:thmMLP_Frobproperty} for a proof):
\begin{restatable}{theorem}{thmMLPFrobproperty} \label{thm:MLP_Frobproperty}
The parameterization $\phi_{\mathtt{ReLU}}$ used for ReLU neural networks with {any DAG architecture (see \cite{gonon_path-norm_2023}
and our \Cref{ex:2layerReLU-factorization}))} 
satisfies the Frobenius property on $(\R \backslash \{ 0 \})^D$.
\end{restatable} 

{The following theorem (proved in \Cref{app:FrobMLPDAG} for general DAG ReLU networks) refines this result by allowing some zero entries in the parameters.}

\begin{restatable}{theorem}{thmMLPFrobpropertybis} \label{thm:MLP_Frobpropertybis}
{Consider an MLP with at least two neurons in each layer (including input and output layers), and denote $\Theta \subseteq \RD$ the set of all parameters with at most one zero component. On this set, which is open, dense, and with a single connected component, the} parameterization $\phi_{\mathtt{ReLU}}$ of \cite{gonon_path-norm_2023}
 satisfies the Frobenius property. 
\end{restatable} 
This leads to the following corollary (proved under weaker assumptions in \Cref{app:propmaximalpp}) which guarantees the existence of a maximal set of conservation laws big enough to ensure the intrinsic recoverability property.
\begin{restatable}{corollary}{cororelu} \label{coro:relu}
{For a general DAG ReLU network (resp. for an MLP with at least two neurons per layer), every element $\x$ of the set $(\R\setminus\{0\})^D$ (resp. of the set $\Theta$ of \Cref{thm:MLP_Frobpropertybis}) admits an open neighborhood $U$ in this set on which}
the intrinsic recoverability property (and thus the intrinsic dynamic property 
with respect to $\phi_\mathtt{ReLU}$) holds.
 \end{restatable}

Moreover, the known conservation laws given in \Cref{prop:knownconservation} are independent on $\Theta$ from \Cref{thm:MLP_Frobpropertybis} (actually on an even larger set containing $\Theta$). Thus, they yield $m$ independent conservation laws on $\Theta$, where $m$ corresponds to the number of hidden neurons. To verify that these are indeed the only ones, one must check that the trace of $\Wfspace{}_{\phi_{\mathtt{ReLU}}}$ has dimension $D-m$; while it is empirically supported by \cite{marcotte2023abide}, which confirms that $\lie\big(\Wfspace{}_{\phi_{\mathtt{ReLU}}}\big)(\x) = \Wfspace{}_{\phi_{\mathtt{ReLU}}}(\x)$ has dimension $D-m$ when sampling random values of $\theta$, as well as random dimensions and depths, the following corollary establishes it theoretically (see \Cref{app:corotruenumber} for a proof).

\begin{restatable}{corollary}{corotruenumber} \label{coro:truenumber}
Let us consider a MLP architecture {with at least two neurons per layer}, and with $m$ hidden neurons. Then the $m$ independent conservation laws for $\phi_{\texttt{ReLU}}$ given in \Cref{prop:knownconservation} are exhaustive on the set $\Theta$ from \Cref{thm:MLP_Frobpropertybis}: there is no more conservation laws than these ones. 
\end{restatable}

As a concrete example the following proposition (proved in \Cref{app:propthreelayerMLP}) provides the first closed-form formula for the intrinsic dynamic for a three-layer ReLU network with scalar input and output.
\begin{restatable}{proposition}{propthreelayerMLP} \label{prop:threelayerMLP}
For a 3-layer ReLU MLP with scalar input/output, the factorization  $\phi_{\mathtt{ReLU}}$ reads\footnote{When written as a $n \times m$ matrix, we denote $Z$ instead of $z$ and also view $\nabla f(Z)$ as an $n \times m$ matrix.}
\[
Z  = \phi_{\mathtt{ReLU}}(u, V, w) := \mathrm{diag}(u)\, V\, \mathrm{diag}(w) \in \mathbb{R}^{n \times m},
\]
with \(u \in \mathbb{R}^n\), \(V \in \mathbb{R}^{n \times m}\), and \(w \in \mathbb{R}^m\). Define \(\Theta := \{ (u, V, w) : u_i, V_{ij}, w_j \neq 0\ \forall i,j \}\), and \reb{consider}
the \(n + m\) conservation laws $\mathbf{h}(\x) := \big( (u_i^2 - \sum_j V_{ij}^2)_{i=1}^n, (w_j^2 - \sum_i V_{ij}^2)_{j=1}^m\big)$ \reb{for $\phi_{\texttt{ReLU}}$. Every $\x_0 \in \Theta$ satisfies} the intrinsic dynamics \reb{with respect to $\phi_{\texttt{ReLU}}$, which reads} $\dot{z} = -K_{\x_0}(z)\nabla f(z)$ \reb{with} $z = \mathtt{vec}(Z)$ correspond\reb{ing} to
$$
\dot Z = -\mathrm{ddiag}(\nabla f(Z) Z^\top)\,\mathrm{diag}(\boldsymbol{\alpha})^{-1}\,Z
- \mathrm{diag}(\boldsymbol{\alpha})\,\nabla f(Z)\,\mathrm{diag}(\boldsymbol{\beta})
- Z\,\mathrm{diag}(\boldsymbol{\beta})^{-1}\,\mathrm{ddiag}(Z^\top \nabla f(Z)),
$$
where: a) for any matrix \(M\), 
\(\mathrm{ddiag}(M):=\mathrm{diag}\bigl(\mathrm{Diag}(M)\bigr)\), where \(\mathrm{Diag}(M)\) extracts its diagonal as a vector and
\(\mathrm{diag}(v)\) is the diagonal matrix with entries of \(v\); and b)
the vectors $\boldsymbol{\alpha} = \boldsymbol{\alpha}(Z,\mathbf{h}(\x_0)) \in \R_{>0}^n$ and $\boldsymbol{\beta} := \boldsymbol{\beta}(Z,\mathbf{h}(\x_0)) \in \R_{>0}^m$ (uniquely determined by $Z$ and $\mathbf{h}(\x_0)$) satisfy  
\eq{ \label{eq:couplage}
\boldsymbol{\alpha}^2 - |Z|^2\,\diag(\boldsymbol{\beta})^{-1}\mathbf{1}_n - \boldsymbol{\lambda}\odot\boldsymbol{\alpha} =0,\quad
\boldsymbol{\beta}^2 - (|Z|^2)^\top\,\diag(\boldsymbol{\alpha})^{-1}\mathbf{1}_m - \boldsymbol{\mu}\odot\boldsymbol{\beta} =0,
}
with $|Z|^2 \in \R^{n \times m}$ the element-wise square on the matrix $Z \in \R^{n \times m}$ and with \(\boldsymbol{\lambda} \in \mathbb{R}^n\), \(\boldsymbol{\mu} \in \mathbb{R}^m\) such that $\mathbf{h}(\x_0) = (\boldsymbol{\lambda},\boldsymbol{\mu})$. When $\boldsymbol{\lambda}, \boldsymbol{\mu} = 0$, \eqref{eq:couplage} entirely characterizes $(\boldsymbol{\alpha}, \boldsymbol{\beta})$.
%
%
%
\end{restatable}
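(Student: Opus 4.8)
## Proof proposal

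The plan is to compute $M(\theta)=\partial\phi_{\mathtt{ReLU}}(\theta)\,\partial\phi_{\mathtt{ReLU}}(\theta)^\top$ explicitly for the specific three-layer map $Z=\mathrm{diag}(u)\,V\,\mathrm{diag}(w)$, then invoke the intrinsic recoverability machinery (\Cref{thm:implicitfunctions}, \Cref{lemma:secondeimplication}) on the open set $\Theta$ to rewrite $M(\theta(t))$ purely as a function $K_{\theta_0}(z(t))$, and finally read off the ODE $\dot z=-K_{\theta_0}(z)\nabla f(z)$ in matrix form. First I would differentiate $\phi_{\mathtt{ReLU}}$: with $\theta=(u,V,w)$, an infinitesimal perturbation gives $\mathrm{d}Z=\mathrm{diag}(\mathrm{d}u)\,V\,\mathrm{diag}(w)+\mathrm{diag}(u)\,\mathrm{d}V\,\mathrm{diag}(w)+\mathrm{diag}(u)\,V\,\mathrm{diag}(\mathrm{d}w)$, so $\partial\phi$ splits into three blocks (one per layer), and $M$ is the sum of three outer-product contributions. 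Applying $M$ to a cotangent direction identified with $\nabla f(Z)$, each block contributes a term: the $u$-block yields $\mathrm{ddiag}\!\big(\nabla f(Z)\,\mathrm{diag}(w)^2 V^\top\big)\,\mathrm{diag}(u)\,V\,\mathrm{diag}(w)$, which upon using $V\,\mathrm{diag}(w)=\mathrm{diag}(u)^{-1}Z$ and $\mathrm{diag}(u)^2$-type identities rewrites in terms of $Z$; similarly for the $w$-block (symmetrically, giving the third term); and the middle $V$-block gives $\mathrm{diag}(u)^2\,\nabla f(Z)\,\mathrm{diag}(w)^2$, i.e. $\mathrm{diag}(\boldsymbol\alpha)\,\nabla f(Z)\,\mathrm{diag}(\boldsymbol\beta)$ once we set $\boldsymbol\alpha:=u^2$, $\boldsymbol\beta:=w^2$ (entrywise squares).

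The next step is to eliminate $u,V,w$ in favour of $Z$ and the conservation laws. By \Cref{prop:knownconservation} (ReLU case, taking diagonals), the quantities $\mathbf h(\theta)=\big((u_i^2-\sum_j V_{ij}^2)_i,(w_j^2-\sum_i V_{ij}^2)_j\big)$ are conserved, so on the trajectory $\boldsymbol\alpha - (|V|^2)\mathbf 1_m=\boldsymbol\lambda$ and $\boldsymbol\beta-(|V|^2)^\top\mathbf 1_n=\boldsymbol\mu$ where $|V|^2$ is the entrywise square. Since $Z_{ij}=u_iV_{ij}w_j$, we have $|V_{ij}|^2=|Z_{ij}|^2/(\alpha_i\beta_j)$, hence $\sum_j|V_{ij}|^2=\big(|Z|^2\,\mathrm{diag}(\boldsymbol\beta)^{-1}\mathbf 1_m\big)_i$ and likewise for the columns; substituting into the conserved relations gives exactly the coupled system~\eqref{eq:couplage}. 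I would then argue that on $\Theta$ (all coordinates nonzero) this system has a unique positive solution $(\boldsymbol\alpha,\boldsymbol\beta)$ given $(Z,\boldsymbol\lambda,\boldsymbol\mu)$: e.g. by a monotone fixed-point / Banach-type argument treating the map $\boldsymbol\beta\mapsto\boldsymbol\alpha(\boldsymbol\beta)\mapsto\boldsymbol\beta'$ as a contraction or an order-preserving self-map on a suitable product of positive boxes, with the quadratic $\alpha_i^2-\lambda_i\alpha_i-(\text{positive})=0$ solved by the positive root as in \Cref{ex-scalartrajectory}; the case $\boldsymbol\lambda=\boldsymbol\mu=0$ is then a degenerate but still uniquely solvable version. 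Smoothness of $(Z,\mathbf h)\mapsto(\boldsymbol\alpha,\boldsymbol\beta)$ follows from the implicit function theorem applied to~\eqref{eq:couplage}, whose Jacobian in $(\boldsymbol\alpha,\boldsymbol\beta)$ is invertible on $\Theta$.

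With $(\boldsymbol\alpha,\boldsymbol\beta)$ expressed through $Z$ and $\mathbf h(\theta_0)$, every occurrence of $u,V,w$ in the three blocks of $M\nabla f(Z)$ has been replaced: the $u$-block becomes $-\mathrm{ddiag}(\nabla f(Z)Z^\top)\,\mathrm{diag}(\boldsymbol\alpha)^{-1}Z$, the $V$-block $-\mathrm{diag}(\boldsymbol\alpha)\,\nabla f(Z)\,\mathrm{diag}(\boldsymbol\beta)$, and the $w$-block $-Z\,\mathrm{diag}(\boldsymbol\beta)^{-1}\,\mathrm{ddiag}(Z^\top\nabla f(Z))$, which is precisely the claimed $\dot Z$. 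The main obstacle I anticipate is twofold: (a) the bookkeeping of the three Jacobian blocks and the careful use of the identities $V\mathrm{diag}(w)=\mathrm{diag}(u)^{-1}Z$, $\mathrm{diag}(u)V=Z\mathrm{diag}(w)^{-1}$ to turn $w$-weighted and $u$-weighted sums into $\boldsymbol\alpha,\boldsymbol\beta$-weighted expressions in $Z$ alone — getting the $\mathrm{ddiag}$ versus $\mathrm{Diag}$ conventions and the placement of inverses exactly right; and (b) proving existence, uniqueness and smoothness of the solution to the coupled nonlinear system~\eqref{eq:couplage}, especially making the argument uniform enough to hold on a genuine open neighbourhood $U\subseteq\Theta$ of $\theta_0$ and to cover the degenerate balanced case $\boldsymbol\lambda=\boldsymbol\mu=0$. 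Once these are in place, the result is a direct specialization of \Cref{thm:MLP_Frobproperty}, \Cref{coro:relu}, and \Cref{thm:implicitfunctions} to this architecture, with the explicit $K_{\theta_0}$ read off from the block computation.
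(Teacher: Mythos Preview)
Your proposal is correct and follows essentially the same route as the paper: compute the three block contributions to $\dot Z$ from the chain rule, eliminate $V$ via $V=\mathrm{diag}(u)^{-1}Z\,\mathrm{diag}(w)^{-1}$ on $\Theta$, and use the conservation laws to derive the coupled system~\eqref{eq:couplage} for $\boldsymbol\alpha=u^2$, $\boldsymbol\beta=w^2$. One small calibration: the paper does \emph{not} attempt to prove that~\eqref{eq:couplage} has a unique positive solution for general $(\boldsymbol\lambda,\boldsymbol\mu)$ --- the fact that $(\boldsymbol\alpha,\boldsymbol\beta)$ are functions of $(Z,\mathbf h(\theta_0))$ is instead pulled directly from intrinsic recoverability (\Cref{coro:relu}), and the Banach fixed-point argument (via the Thompson metric and Birkhoff contraction) is carried out only in the balanced case $\boldsymbol\lambda=\boldsymbol\mu=0$, which is exactly what the last sentence of the statement claims.
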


\vspace{-0.3cm}
\subsection{Deep linear neural networks and linear neural ODEs}
\vspace{-0.2cm}
{For $L$-layer linear networks, $\x = (U_1,\ldots, U_L)$ and the path-lifting formalism \citep{gonon_path-norm_2023} yields a factorization via $\phi_{\mathtt{ReLU}}$, leading to an intrinsic dynamic by the results of the previous section. It is more common, however, to consider the dynamics of}
$
	Z_L := \phi_{\mathrm{Lin}}(\x_L) = U_L \cdots U_1,
$
{since $\phi_{\mathtt{Lin}}$ is more efficient than $\phi_{\mathtt{ReLU}}$ in terms of dimension reduction. We now analyze the dynamics of $Z_L(t)$.}
The gradient flow 
$\dot \x_L = - \nabla \ell (\x_L)$ gives the evolution of $Z_L$ (see e.g. \citep[Lemma 2]{Bah}):
\begin{equation} \label{eq:grad_sum}
\dot Z_L = -\sum_{j = 1}^{L} S_j \, \nabla f(Z_L) \, T_j, \quad 
\text{with} \quad 
\left\{
\begin{aligned}
S_j &:= U_L \cdots U_{j+1} \, U_{j+1}^\top \cdots U_L^\top, \quad && S_L = \mId{}, \\
T_j &:= U_1^\top \cdots U_{j-1}^\top \, U_{j-1} \cdots U_1, \quad && T_1 = \mId{}.
\end{aligned}
\right.
\end{equation}
The metric $M(\x_L)$ on $\RG{z_L = \mathtt{vec}(Z_L)}$
is thus entirely characterized by 
    $(S_j(\x_L), T_{j+1}(\x_L))_{j = 1}^{L-1}$.

\begin{definition}[Relaxed balanced condition] \label{def:quasibalanced}
We say that $\x_L \coloneqq (U_L, \cdots, U_1)$ satisfies the relaxed balanced condition if there exists $\boldsymbol{\lambda} \coloneqq (\lambda_i)_i \in \R^{L-1}$ such that 
\begin{equation} \label{eq:quasibalanced}
    U_{i+1}^\top U_{i+1} - U_{i} U_{i}^\top = {\lambda_i} \mId{},
    \quad \forall 1 \leq i \leq L-1.
\end{equation}
($0$-)balanced conditions \citep[Def 1]{Bah} \citep[Def 1]{arora18a} 
correspond to $\boldsymbol \lambda = 0$.
\end{definition}

\begin{remark}
It is worth noting that \cite{domine2025from} used this exact same condition and called it the $\lambda$-balanced condition. However, the definition of $\lambda$-balanced condition is already used (see \citep[Def 1]{arora18a}) by the literature to refer to the weaker condition \(
   \| U_{i+1}^\top U_{i+1} - U_{i} U_{i}^\top \| \leq {\lambda_i}.
\)
Other works (see e.g. \cite{tarmoun, NEURIPS2022_2b3bb2c9,varre2023on}) use stronger conditions on the initializations, that in particular satisfy the relaxed balanced conditions of \Cref{def:quasibalanced}.
\end{remark}

\vspace{-0.2cm}
\subsubsection{Deep linear neural networks}
\vspace{-0.2cm}

We first detail the study of the two-layer case, and then generalize it to the deep case.

\textbf{Matrix factorization.} We consider the two-layer case where 
$\x \coloneqq (U, V) \in \R^{n \times r} \times \R^{m \times r}$ and with $Z = \phi_{\mathtt{Lin}}(\x) \coloneqq U V^\top \in \R^{ n \times m}$. We assume $\x(t)$ satisfies the gradient flow \eqref{eq:GF} with $\x(0) = (U_{t= 0}, V_{t=0})$. We denote $S \coloneqq U_{t=0}^\top U_{t=0} - V_{t=0}^\top V_{t=0} \in \R^{r \times r}$.
If $\x_0$ satisfies the balanced condition \eqref{eq:quasibalanced} $S = 0$, then \citep[Theorem 1]{Arora18b} \citep[Lemma 2]{Bah} $\x_0$ satisfies the intrinsic metric property with respect to $\phi_{\mathtt{Lin}}$ and
\begin{equation} \label{eq:flowbalanced}
    \dot Z = - \sqrt{ZZ^\top} \nabla f(Z) - \nabla f(Z) \sqrt{Z^\top Z}.
\end{equation}
We generalize this result (see \Cref{app:thmstage2layer} for a proof) to a broader class of initializations: all initializations satisfying the relaxed balanced condition \eqref{eq:quasibalanced} possess the intrinsic metric property.

\begin{restatable}{theorem}{thmstagetwolayer}\label{thm:stage2layer} Consider $\x_0  \coloneqq  (U_{t=0},V_{t=0})$  where both $U_{t=0} \in\mathbb{R}^{n\times r}$ and $V_{t=0}\in\mathbb{R}^{m\times r}$ have full rank $r \leq \text{min}(n ,m) $, 
and assume $S
=\lambda \mId{r}$ for some 
$\lambda \in \R$. Then, on a neighborhood $\Omega$ of $\theta_{t=0}$:
\begin{equation} \label{eq:flowquasi}
     \dot Z = -  \Pi_{Z}  \left[\frac{\lambda}{2} \mId{n}
    + \frac12\sqrt{\lambda^{2}\mId{n} + 4\,ZZ^\top}\right] \nabla f(Z) - \nabla f(Z) \Pi_{Z^\top Z}  \left[ -\frac{\lambda}{2}\mId{m}
    + \frac12\sqrt{\lambda^{2}\mId{m} + 4\, Z^\top Z} \right], 
\end{equation} 
where  $\Pi_{A}$ is the orthogonal projector on $\range{A}$.
\end{restatable}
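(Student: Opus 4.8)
The plan is to exploit the single conservation law $\theta\mapsto U^\top U-V^\top V$ to express both $UU^\top$ and $VV^\top$ as explicit functions of $Z$ and $\lambda$, and then to read off the ODE for $Z$ from the chain rule. Writing $Z=\phi_{\mathtt{Lin}}(\theta)=UV^\top$ and abbreviating $P:=UU^\top$, $Q:=VV^\top$, the gradient flow \eqref{eq:GF} for $\ell=f\circ\phi_{\mathtt{Lin}}$ gives $\dot U=-\nabla f(Z)V$ and $\dot V=-\nabla f(Z)^\top U$, hence
\begin{equation*}
\dot Z=\dot U\,V^\top+U\,\dot V^\top=-UU^\top\nabla f(Z)-\nabla f(Z)\,VV^\top=-P\,\nabla f(Z)-\nabla f(Z)\,Q .
\end{equation*}
So it is enough to show that $P$ and $Q$ are functions of $(Z,\lambda)$ alone; this simultaneously yields the intrinsic metric property of \Cref{def:stronger}, since the matrix $M(\theta)$ then acts on $\mathtt{vec}(\nabla f(Z))$ as $G\mapsto PG+GQ$ and hence equals a function $K_{\theta_0}(Z)$ depending on $\theta_0$ only through $\lambda$ (equivalently through $\mathbf{h}(\theta_0)=S$).

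Next I would use that, by \Cref{prop:knownconservation} in the two-layer case, $U^\top U-V^\top V$ is a conservation law for $\phi_{\mathtt{Lin}}$, so it stays equal to $S=\lambda\mId{r}$ along the flow: $U^\top U=V^\top V+\lambda\mId{r}$. Substituting this into $ZZ^\top=U(V^\top V)U^\top$ and $Z^\top Z=V(U^\top U)V^\top$ gives the matrix quadratics
\begin{equation*}
ZZ^\top=P^2-\lambda P,\qquad Z^\top Z=Q^2+\lambda Q .
\end{equation*}
Since $U_{t=0},V_{t=0}$ have full rank $r$, by continuity there is a neighborhood $\Omega$ of $\theta_{t=0}$ on which $U,V$, and therefore $Z=UV^\top$, keep rank $r$; on $\Omega$ one has $\range(P)=\range(ZZ^\top)=\range(Z)$ and $\range(Q)=\range(Z^\top Z)=\range(Z^\top)$.

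Then I would solve the quadratics. The matrix $P$ is symmetric positive semidefinite, commutes with $ZZ^\top$, vanishes on $\ker(ZZ^\top)$, and on $\range(ZZ^\top)$ each of its (strictly positive) eigenvalues $p$ satisfies $p^2-\lambda p-\mu=0$ with $\mu>0$ the matching eigenvalue of $ZZ^\top$; the roots of $x^2-\lambda x-\mu$ have product $-\mu<0$, so exactly one is positive, namely $(\lambda+\sqrt{\lambda^2+4\mu})/2$, which must be $p$. In matrix form this reads $P=\Pi_{ZZ^\top}\big[\tfrac{\lambda}{2}\mId{n}+\tfrac12\sqrt{\lambda^2\mId{n}+4\,ZZ^\top}\big]$, and the analogous argument from $q^2+\lambda q-\mu=0$ gives $Q=\Pi_{Z^\top Z}\big[-\tfrac{\lambda}{2}\mId{m}+\tfrac12\sqrt{\lambda^2\mId{m}+4\,Z^\top Z}\big]$. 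Plugging $P,Q$ back into $\dot Z=-P\nabla f(Z)-\nabla f(Z)Q$ gives exactly \eqref{eq:flowquasi}, and $\lambda=0$ recovers \eqref{eq:flowbalanced}. Finally I would check $\mathcal C^1$-regularity: on $\Omega$ the nonzero singular values of $Z$ are bounded away from $0$, so the spectral projector $\Pi_{ZZ^\top}$ and the square root of $\lambda^2\mId{n}+4ZZ^\top$ (which, restricted to $\range(ZZ^\top)$, is positive definite with a uniform lower bound) depend smoothly on $Z$, so that $K_{\theta_0}$ is indeed $\mathcal C^1$.

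The main obstacle is the rank-deficient regime $r<\min(n,m)$: one must split off $\ker(ZZ^\top)$ and $\ker(Z^\top Z)$ correctly, select the positive branch of the matrix square root on each range (which is how the projectors $\Pi_{ZZ^\top}$ and $\Pi_{Z^\top Z}$ appear), and establish the $\mathcal C^1$-dependence only on the set where the rank is locally constant — which is why the statement is local ("on a neighborhood $\Omega$"). The bookkeeping of the sign of $\lambda$ across the two quadratics (and hence across the two square-root formulas), both coming from the same conservation law, is the other place where it is easy to slip.
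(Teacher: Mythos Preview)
Your proposal is correct and follows essentially the same approach as the paper's proof: derive the matrix quadratics $ZZ^\top=P^2-\lambda P$ and $Z^\top Z=Q^2+\lambda Q$ from the conserved relaxed-balance relation, use the full-rank assumption to get a neighborhood $\Omega$ on which $\range(P)=\range(ZZ^\top)$ and $\range(Q)=\range(Z^\top Z)$, simultaneously diagonalize, and select the positive root on the range. Your sign-of-the-product argument for root selection is equivalent to the paper's observation that $\sqrt{\lambda^2+4\mu}>|\lambda|$, and your explicit remark on the $\mathcal C^1$-regularity of $K_{\theta_0}$ is a welcome addition that the paper's proof leaves implicit.
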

Note that \eqref{eq:flowbalanced} corresponds indeed to \eqref{eq:flowquasi} with $\lambda =0 $ \reb{and that $r \leq \min(n,m)$ is necessary to have $S=\lambda \mId{r}$ for some $\lambda \neq 0$}. 
Note also that \Cref{thm:stage2layer} generalizes to the case $r \leq \text{min}(n, m)$ the expression obtained in \citep[Theorem 5.2]{ domine2025from} for the special case $r = \text{min}(n, m)$ (if \cite{domine2025from} focus in general on the squared loss, the proof of their Theorem 5.2 does not rely on the use of this specific loss: this result can be applied for any loss, as ours).
The following theorem shows that the \emph{relaxed balanced condition} is actually a necessary condition when $r  \leq \text{max}(n, m)$ to have the 
\reb{{\em strong} intrinsic dynamic property (see \Cref{def:stronger} in \Cref{app:thmnotinclusion}), a variant of \Cref{def:weaker} where $f$ is {\em piecewise} $\mathcal{C}^2$.}
Its proof (see  \Cref{app:thmnotinclusion}) relies on showing the non-inclusion of the kernels of \eqref{eq:inclusionnoyaux}. 

\begin{restatable}{theorem}{thmnotinclusion} \label{thm:notinclusion}
Let $\theta_0 \coloneqq (U_{t=0},V_{t=0})$. 
Assume that both $U_{t=0} \in \R^{n \times r}$ and $V_{t=0}\in \R^{m\times r}$ have full rank and that $r \leq \text{max} (n, m)$.
If $S \coloneqq U_{t=0}^\top U_{t=0} - V_{t=0}^\top V_{t=0} \neq \lambda \mId{r}$, then  $\x_0$ does not satisfy the \reb{{\em strong}} intrinsic \reb{dynamic} property with respect to $\phi_{\mathtt{Lin}}$.  
\end{restatable}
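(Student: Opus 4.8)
The plan is to apply the necessary condition from \Cref{prop:inclusionnoyaux}: it suffices to exhibit, at some point $\theta \in \mathcal{M}_{\theta_0}$ near $\theta_0$, a nonzero vector in $\ker\partial\phi_{\mathtt{Lin}}(\theta)\cap\ker\partial\mathbf{h}(\theta)$ that is \emph{not} in $\ker\partial M(\theta)$, where $\mathbf{h}$ is the canonical set of conservation laws $\mathbf{h}(U,V) = U^\top U - V^\top V$ from \Cref{prop:knownconservation} (which by results cited in the paper exhausts all conservation laws for $\phi_{\mathtt{Lin}}$ in the two-layer case, so it is enough to work with this $\mathbf{h}$). Concretely, for $\theta = (U,V)$ one computes $\partial\phi_{\mathtt{Lin}}(U,V)[\dot U,\dot V] = \dot U V^\top + U\dot V^\top$, and $\partial\mathbf{h}(U,V)[\dot U,\dot V] = \dot U^\top U + U^\top \dot U - \dot V^\top V - V^\top \dot V$. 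A tangent direction $(\dot U,\dot V)$ of the form $\dot U = U A$, $\dot V = -V A^\top$ with $A \in \R^{r\times r}$ lies in $\ker\partial\phi_{\mathtt{Lin}}(\theta)$ (since $UAV^\top - UAV^\top = 0$); requiring it to also lie in $\ker\partial\mathbf{h}(\theta)$ gives the condition $A^\top(U^\top U) + (U^\top U)A + (V^\top V)A + A^\top(V^\top V) = 0$, i.e. $A^\top(U^\top U + V^\top V) + (U^\top U + V^\top V)A = 0$; since $U^\top U + V^\top V$ is positive definite (full rank hypothesis), this is a Lyapunov-type equation forcing $A$ to be "antisymmetric" with respect to the inner product defined by $U^\top U + V^\top V$, so such nonzero $A$ always exist when $r \geq 2$. (The case $r=1$ forces $A=0$; but then the parametrization has a single hidden neuron and the claim $S \neq \lambda I_r$ is vacuous since every scalar $S$ equals $\lambda I_1$ — so the hypothesis $S \neq \lambda I_r$ already implies $r \geq 2$.)

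The heart of the proof is then to show that for a well-chosen such $A$ the image $\partial M(\theta)[UA, -VA^\top]$ is nonzero, and crucially that this can be arranged precisely when $S = U^\top U - V^\top V$ is \emph{not} a scalar multiple of the identity. Writing $M(U,V)$ through the blocks $(S_j,T_{j+1})$ of \eqref{eq:Mvsmatrices} — here in the two-layer case these are governed by $VV^\top + $ something and $UU^\top$, or more directly $M$ is determined by the pair $(UU^\top, VV^\top)$ — one differentiates: $\partial(UU^\top)[UA,\cdot] = UAU^\top + UA^\top U^\top = U(A+A^\top)U^\top$ and $\partial(VV^\top)[\cdot,-VA^\top] = -VA^\top V^\top - VAV^\top = -V(A+A^\top)V^\top$. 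So the direction $(UA,-VA^\top)$ lies in $\ker\partial M(\theta)$ iff $U(A+A^\top)U^\top = 0$ and $V(A+A^\top)V^\top = 0$; since $U,V$ have full column rank this is equivalent to $A + A^\top = 0$, i.e. $A$ skew-symmetric in the \emph{ordinary} sense. Combining with the constraint from $\ker\partial\mathbf{h}$, which says $A$ is skew with respect to the $(U^\top U + V^\top V)$-inner product: the two notions of skew-symmetry coincide for all $A$ iff $U^\top U + V^\top V$ commutes with... more carefully, $\ker\partial\phi\cap\ker\partial\mathbf{h}\subseteq\ker\partial M$ fails iff there exists $A$ with $A^\top(U^\top U+V^\top V) + (U^\top U+V^\top V)A = 0$ but $A + A^\top \neq 0$, and a linear-algebra computation shows such $A$ exists iff $U^\top U + V^\top V$ is not scalar — equivalently (using that $U^\top U - V^\top V = S$ is preserved along $\mathcal{M}_{\theta_0}$) iff $U^\top U$ is not scalar, which one can attain on $\mathcal{M}_{\theta_0}$ near $\theta_0$ exactly when $S$ is not a scalar multiple of $I_r$.

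The remaining technical steps are: (a) verify the constant-rank hypothesis of \Cref{prop:inclusionnoyaux} on a neighborhood — since $U_{t=0},V_{t=0}$ have full rank, $\partial\mathbf{h}$ has locally constant rank near $\theta_0$, and $\mathcal{M}_{\theta_0}$ is a smooth submanifold there; (b) argue that if $S \neq \lambda I_r$ then there is a point of $\mathcal{M}_{\theta_0}$ arbitrarily close to $\theta_0$ (indeed $\theta_0$ itself works, or a nearby point if one needs $U^\top U$ generic) at which $U^\top U + V^\top V$ is non-scalar — this uses $r \leq \max(n,m)$ to ensure one can move within $\mathcal{M}_{\theta_0}$ (parametrized by the $O(r)$-type gauge orbit together with $Z$) so that $U^\top U$ takes a non-scalar value whenever $S$ is non-scalar; (c) invoke the contrapositive of (i)$\implies$(ii) in \Cref{prop:inclusionnoyaux}, using \Cref{rem:impclass} to transfer the property between points of $\mathcal{M}_{\theta_0}$, to conclude that the intrinsic metric property fails. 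I expect the main obstacle to be step (b) — carefully controlling which values of $U^\top U$ (equivalently, of the relevant spectral data) are actually realized on $\mathcal{M}_{\theta_0}\cap U$ for a small neighborhood $U$, and in particular ruling out the degenerate possibility that $S$ is non-scalar yet $U^\top U + V^\top V$ is forced to be scalar everywhere on $\mathcal{M}_{\theta_0}$; this is where the hypothesis $r \leq \max(n,m)$ (rather than $r \leq \min(n,m)$) should enter, giving enough room in one of the two factors to break the scalar structure.
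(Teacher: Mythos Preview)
Your overall strategy matches the paper's: apply \Cref{prop:inclusionnoyaux} with $\mathbf{h}(U,V)=U^\top U-V^\top V$ and tangent directions of the form $(UA,-VA^\top)$. However, there is a sign error in your computation of the $\ker\partial\mathbf{h}$ constraint that derails the rest of the argument. With $\dot V=-VA^\top$ one has $\dot V^\top V + V^\top\dot V = -AV^\top V - V^\top VA^\top$, so the condition for $(UA,-VA^\top)\in\ker\partial\mathbf{h}$ reads
\[
A^\top U^\top U + U^\top U A + A\,V^\top V + V^\top V\,A^\top = 0,
\]
which, after splitting $A=A_S+A_A$ into symmetric and skew parts, becomes $L(A_S)=[A_A,S]$ with $L(X):=XP+PX$ and $P:=U^\top U+V^\top V$. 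The commutator is with $S$, not with $P$. Consequently the right way to produce a witness is: since $S\neq\lambda I_r$, pick a skew $A_A$ with $[A_A,S]\neq 0$ (this exists by an elementary argument on two distinct eigenvalues of $S$), then solve the Lyapunov equation for $A_S\neq 0$. This works directly at $\theta_0$; your step~(b) is an artifact of the miscomputation and is not needed.

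There are two further gaps. First, the characterization of $\ker\partial M$ is not ``$U(A+A^\top)U^\top=0$ and $V(A+A^\top)V^\top=0$'': from $M(\theta)=I_m\otimes UU^\top + VV^\top\otimes I_n$ one finds $(H,K)\in\ker\partial M$ iff there exists a scalar $\mu$ with $UH^\top+HU^\top=\mu I_n$ and $VK^\top+KV^\top=-\mu I_m$. Second, under the hypothesis $r\le\max(n,m)$ only \emph{one} of $U,V$ need have full column rank, so you cannot conclude $A+A^\top=0$ from both quadratic forms vanishing. The paper handles both points at once: from $U(A+A^\top)U^\top=\mu I_n$ and $V(A+A^\top)V^\top=-\mu I_m$ it uses the elementary fact that $r\le\max(n,m)$ forces $\operatorname{range}(U^\top)\cap\operatorname{range}(V^\top)\neq\{0\}$, and evaluating the two identities on a common nonzero vector gives $\mu=0$, hence $A_S=0$, contradicting $L(A_S)=[A_A,S]\neq 0$. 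This range-intersection step is precisely where $r\le\max(n,m)$ enters, not in your step~(b).
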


The case $r > \text{max} (n, m)$ is still open. For $n=m=1$ and any $r$, the following proposition (proved in \Cref{app:propcounterex}) shows that all initializations {\em do} satisfy the intrinsic metric property with respect to $\phi_{\mathtt{Lin}}$. 

\begin{restatable}{proposition}{propcounterex} \label{prop:counterex}
Let $\x \coloneqq (u, v)$ with $u \in \R^{ r}$ and $v \in \R^{r}$. Then $z \coloneqq \phi_{\mathtt{Lin}}(\x) = \langle u, v \rangle \in \R$.
We denote $S \coloneqq u_{t=0} u_{t=0}^\top - v_{t=0} v_{t=0}^\top \in \R^{r \times r}$. Then one has
$\dot z = - \sqrt{2\tr(S^2) - \tr(S)^2 + 4 z^2}\nabla f(z).
$
\end{restatable}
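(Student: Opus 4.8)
The plan is to reduce everything to three scalar functions of $\x=(u,v)$, namely $\|u\|^2$, $\|v\|^2$ and $\langle u,v\rangle = z$, and to eliminate the first two via conservation laws. Since $\phi_{\mathtt{Lin}}(u,v)=\langle u,v\rangle$ is scalar-valued, $\nabla\phi_{\mathtt{Lin}}(\x)=(v,u)\in\R^{2r}$, and hence $M(\x)=\partial\phi_{\mathtt{Lin}}(\x)\partial\phi_{\mathtt{Lin}}(\x)^\top=\|u\|^2+\|v\|^2$ is a nonnegative scalar; by \eqref{eq:preODEz}, the gradient flow \eqref{eq:GF} with $\ell=f\circ\phi_{\mathtt{Lin}}$ induces $\dot z(t)=-M(\x(t))\,\nabla f(z(t))$. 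So it suffices to express $a(t):=\|u(t)\|^2+\|v(t)\|^2$ as a function of $z(t)$ and of the initialization.

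For this, identify $(u,v)$ with the two layers of a two-layer linear network (take $U_2=u^\top\in\R^{1\times r}$ and $U_1=v\in\R^{r\times 1}$, so that $\phi_{\mathtt{Lin}}=U_2U_1$). Then \Cref{prop:knownconservation} gives that $\x\mapsto U_2^\top U_2-U_1U_1^\top=uu^\top-vv^\top$ is a (matrix of) conservation law(s) for $\phi_{\mathtt{Lin}}$, so $u(t)u(t)^\top-v(t)v(t)^\top=S$ for all $t$. Taking the trace yields $\|u\|^2-\|v\|^2=\tr(S)$. Expanding $(uu^\top-vv^\top)^2$ and using $\tr(uu^\top)=\|u\|^2$ and $\tr(uv^\top)=\langle u,v\rangle$, the trace of the square gives $\|u\|^4+\|v\|^4-2\langle u,v\rangle^2=\tr(S^2)$. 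Now with $a=\|u\|^2+\|v\|^2$ one has $\|u\|^4+\|v\|^4=\tfrac12\bigl(a^2+(\|u\|^2-\|v\|^2)^2\bigr)=\tfrac12\bigl(a^2+\tr(S)^2\bigr)$, so the last identity becomes $a^2=2\tr(S^2)-\tr(S)^2+4z^2$. Since $a\ge 0$, we get $M(\x(t))=a(t)=\sqrt{2\tr(S^2)-\tr(S)^2+4z(t)^2}$, and substituting into $\dot z=-M(\x)\,\nabla f(z)$ yields the announced ODE (the quantity under the square root is automatically nonnegative, being equal to $(\|u\|^2+\|v\|^2)^2$ whenever $S=uu^\top-vv^\top$).

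There is essentially no obstacle: the argument is a short computation. The two points deserving a little care are (i) matching the conventions of \Cref{prop:knownconservation} so that the conserved matrix is exactly $S=u_0u_0^\top-v_0v_0^\top$ (alternatively one checks $\tfrac{d}{dt}(uu^\top-vv^\top)=0$ directly from \eqref{eq:GF}), and (ii) noticing that two trace invariants of $S$ suffice because $M(\x)$ is symmetric under the swap $\|u\|^2\leftrightarrow\|v\|^2$, hence a function of $a$ alone. Finally, setting $K_{\x_0}(z):=\sqrt{2\tr(S^2)-\tr(S)^2+4z^2}$ exhibits the function witnessing that $\x_0$ satisfies the intrinsic metric property (\Cref{def:stronger}) with respect to $\phi_{\mathtt{Lin}}$.
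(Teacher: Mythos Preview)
Your proof is correct and follows essentially the same approach as the paper: compute $M(\x)=\|u\|^2+\|v\|^2$, use the conserved matrix $uu^\top-vv^\top=S$, extract $\tr(S)=\|u\|^2-\|v\|^2$ and $\tr(S^2)=\|u\|^4+\|v\|^4-2z^2$, and combine these via the identity $(\|u\|^2+\|v\|^2)^2=2(\|u\|^4+\|v\|^4)-(\|u\|^2-\|v\|^2)^2$ to get $M(\x)^2=2\tr(S^2)-\tr(S)^2+4z^2$. The algebra is organized slightly differently but is the same computation.
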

In particular, it is important to note that the two-layer linear analysis allows these results to be applied directly to networks composed of attention layers (\Cref{ex:attention-factorization}). 

\textbf{Deep linear neural networks. } 
Consider linear neural networks of arbitrary depth, with square weight matrices $\x_L \coloneqq (U_L, \dots, U_1)$, $U_i \in \R^{n \times n}$. Indeed, this is precisely the main case where a non-trivial relaxed balanced condition
($\lambda_i \neq 0$) can occur at every layer $i$, since $n_i = n$ and thus
$n_i \le \min(n_{i-1},\,n_{i+1})$ for all $i$.
The following theorem (proved in \Cref{app:thmdeeplinear}) generalizes \Cref{thm:stage2layer} to this setting. In the case of balanced conditions ($ \boldsymbol{\lambda}= 0$), our theorem recovers the dynamics described in \citep[Theorem 1]{Arora18b}, \citep[Lemma 2]{Bah}.

\begin{restatable}{theorem}{thmdeeplinear} \label{thm:deeplinear}
If $\x_L(0)$ satisfies the relaxed balanced condition (\Cref{def:quasibalanced}) with $\boldsymbol{\lambda} = (\lambda_i)_i$ then during the trajectory $\x_L(t)$ of \eqref{eq:GF}, 
the matrices in~\eqref{eq:grad_sum} satisfy
$S_{j} (\x_L(t)) = Q_j (U_L(t) U_L(t)^\top)$ and $ T_{j}(\x_L(t)) = R_j(U_1(t)^\top U_1(t))$,
where $Q_{j} (x) := \prod_{k=0}^{L-j-1} (x - a_k)$ with
$a_0 := 0$ and $a_k :=
\sum_{i= 1}^k \lambda_{L-i}$ for $k = 1, \cdots L-1$ and 
$R_j(x) := \prod_{k = 0}^{j-2} (x- b_k)$ with $b_0 := 0$ and 
$b_k := -  \sum_{i= 1}^k \lambda_{i}$.
Moreover $U_L U_L^\top$ (resp. $U_1^\top U_1$) is the unique root of 
$Z_L Z_L^\top = Q_{0} (U_L U_L^\top) $ 
(resp. of $Z_L^\top Z_L = R_{L-1}(U_1^\top U_1)$)
{with spectrum lower bounded by $\max_{0 \leq k \leq L-1} a_k$ (resp. by $\max_{0\leq k \leq L-2} b_k$).}
This 
implies 
that all matrices  
 in \eqref{eq:grad_sum} are entirely characterized by $Z_L$ and the initialization, 
hence
$\x_L(0)$ satisfies the intrinsic dynamic property on $\RD$ with respect to $\phi_{\mathtt{Lin}}$.
\end{restatable}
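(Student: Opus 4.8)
The plan is to show that, under the relaxed balanced condition, every layer admits an SVD aligned with those of its neighbours and with singular values that are \emph{rigid shifts} of one another; this makes the partial products $U_L\cdots U_{j+1}$ and $U_{j-1}\cdots U_1$ telescope, turns $S_j$, $T_j$, $Z_LZ_L^\top$, $Z_L^\top Z_L$ into explicit polynomials in $U_LU_L^\top$ (resp.\ $U_1^\top U_1$), and lets us invert those polynomials by functional calculus. First I would record that \eqref{eq:quasibalanced} is preserved by the flow: by \Cref{prop:knownconservation} each $\x_L \mapsto U_{i+1}^\top U_{i+1} - U_i U_i^\top$ is (a set of) conservation law(s) for $\phi_{\mathtt{Lin}}$, so it stays equal to $\lambda_i\mId{}$ along \eqref{eq:GF}; hence it suffices to prove the claimed identities for a fixed $\x_L = (U_L,\dots,U_1)$ satisfying \eqref{eq:quasibalanced}. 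Then, starting from any SVD $U_L = P_L\Sigma_L Q_L^\top$, I would build by downward induction on $i$ SVDs $U_i = P_i\Sigma_i Q_i^\top$ with $P_i = Q_{i+1}$ and $\Sigma_i^2 = \Sigma_{i+1}^2 - \lambda_i\mId{}$ (hence $\Sigma_i^2 = \Sigma_L^2 - a_{L-i}\mId{}$, equivalently $\Sigma_i^2 = \Sigma_1^2 - b_{i-1}\mId{}$): indeed \eqref{eq:quasibalanced} gives $U_iU_i^\top = Q_{i+1}(\Sigma_{i+1}^2 - \lambda_i\mId{})Q_{i+1}^\top$, exhibiting $Q_{i+1}$ as an orthonormal eigenbasis of $U_iU_i^\top$ and $\Sigma_{i+1}^2 - \lambda_i\mId{}\succeq 0$ as its spectrum, so one sets $P_i := Q_{i+1}$, $\Sigma_i := (\Sigma_{i+1}^2 - \lambda_i\mId{})^{1/2}$, and checks that an orthogonal $Q_i$ with $U_i = P_i\Sigma_i Q_i^\top$ exists even when $\Sigma_i$ is singular. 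Positivity of every $\Sigma_i^2$ then forces the spectrum of $U_LU_L^\top = P_L\Sigma_L^2P_L^\top$ to lie in $[\max_{0\le k\le L-1}a_k,\infty)$ and that of $U_1^\top U_1 = Q_1\Sigma_1^2Q_1^\top$ in $[\max_{0\le k\le L-1}b_k,\infty)$.

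Next, since $P_i = Q_{i+1}$, consecutive factors telescope: $U_L\cdots U_{j+1} = P_L(\Sigma_L\cdots\Sigma_{j+1})Q_{j+1}^\top$ and $U_{j-1}\cdots U_1 = P_{j-1}(\Sigma_{j-1}\cdots\Sigma_1)Q_1^\top$. Substituting into the definitions in \eqref{eq:grad_sum} and using that the $\Sigma_i$ are diagonal with $\Sigma_i^2 = \Sigma_L^2 - a_{L-i}\mId{}$ yields
\[
S_j = P_L\Big(\prod_{i=j+1}^{L}\Sigma_i^2\Big)P_L^\top = P_L\,Q_j(\Sigma_L^2)\,P_L^\top = Q_j(U_LU_L^\top),\qquad Z_LZ_L^\top = Q_0(U_LU_L^\top),
\]
and likewise, using $\Sigma_i^2 = \Sigma_1^2 - b_{i-1}\mId{}$, $T_j = Q_1\big(\prod_{i=1}^{j-1}\Sigma_i^2\big)Q_1^\top = R_j(U_1^\top U_1)$ together with $Z_L^\top Z_L = \big(\prod_{k=0}^{L-1}(\cdot - b_k)\big)(U_1^\top U_1)$, the polynomial playing for the right factor the role that $Q_0$ plays for the left one.

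Finally, $Q_0(x) = \prod_{k=0}^{L-1}(x-a_k)$ has all its roots $\le \max_k a_k$, hence is strictly increasing on $[\max_k a_k,\infty)$; combined with the spectral bound above this makes $U_LU_L^\top$ the unique symmetric positive semidefinite matrix with spectrum $\ge \max_k a_k$ solving $Z_LZ_L^\top = Q_0(U_LU_L^\top)$, and it is recovered from $Z_LZ_L^\top$ by applying $Q_0^{-1}$ spectrally; symmetrically for $U_1^\top U_1$ from $Z_L^\top Z_L$. Plugging back, each $S_j$ and each $T_{j+1}$ ($1\le j\le L-1$) becomes a fixed function — depending only on $\boldsymbol\lambda = \mathbf h(\x_L(0))$ — of $Z_L$; since $M(\x_L)$ is entirely determined by the family \eqref{eq:Mvsmatrices}, we get $M(\x_L(t)) = K_{\x_L(0)}(\phi_{\mathtt{Lin}}(\x_L(t)))$ along the trajectory, i.e.\ the intrinsic dynamic property (\Cref{def:weaker}) on $\RD$.

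The hard part will be the aligned-SVD step: producing mutually aligned SVDs ($P_i = Q_{i+1}$) whose singular values are \emph{globally} rigid shifts of one another, with no genericity assumption, in particular when some singular values coincide or vanish — this is precisely what makes the telescoping exact and $S_j$, $T_j$ honest polynomials in $U_LU_L^\top$, resp.\ $U_1^\top U_1$. Once it is in place the remaining steps (the telescoping identities, and the monotonicity/functional-calculus argument inverting $Q_0$ and $\prod_k(\cdot - b_k)$ on the relevant half-lines) are routine. A possible alternative is to first prove, by induction on $L-j$, that $S_j$ is \emph{some} polynomial in $U_LU_L^\top$ (peeling off one pair $U_{j+1},U_{j+1}^\top$ at a time and repeatedly using \eqref{eq:quasibalanced} to re-expand powers of $U_{j+1}^\top U_{j+1}$ as powers of $U_{j+1}U_{j+1}^\top$, then of $U_{j+2}U_{j+2}^\top$, and so on), and then to identify the coefficients via the eigenvalue relation $\sigma_{i,k}^2 = \sigma_{L,k}^2 - a_{L-i}$; this avoids the SVD-alignment subtlety at the price of heavier bookkeeping.
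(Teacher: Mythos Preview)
Your proposal is correct and takes a genuinely different route from the paper's proof. The paper never builds aligned SVDs; instead it runs a purely algebraic induction on the depth $\ell$, showing $Z_\ell Z_\ell^\top = P_\ell(U_\ell U_\ell^\top)$ via the pull-through identity $U_\ell\,P(U_\ell^\top U_\ell)\,U_\ell^\top = (U_\ell U_\ell^\top)\,P(U_\ell U_\ell^\top)$ combined with \eqref{eq:quasibalanced}, obtains the recursion $P_\ell(x)=x\,P_{\ell-1}(x-\lambda_{\ell-1})$, and then gets $S_j$, $T_j$ by applying this to truncated/reversed subsequences. For the spectral bound it uses $\text{sp}(U_{i+1}^\top U_{i+1}) = \lambda_i+\text{sp}(U_i U_i^\top)$ and a one-line recursion on $\inf\text{sp}(U_iU_i^\top)$; for uniqueness it reduces to scalars (since $Z_LZ_L^\top$ and $U_LU_L^\top$ commute) and invokes Rolle's theorem on $Q_0'$. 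Amusingly, the ``possible alternative'' you sketch in your last paragraph --- inductively peeling off a pair $U_{j+1},U_{j+1}^\top$ and re-expanding via \eqref{eq:quasibalanced} --- is precisely the paper's argument.

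What each approach buys: the paper's pull-through induction is basis-free and sidesteps entirely the degeneracy issue you flag as ``the hard part'' (repeated or vanishing singular values), at the cost of a slightly opaque recursion whose factorisation \eqref{eq:PolyFactorization} must be unwound separately. Your aligned-SVD route is more transparent --- the telescoping and the polynomial $Q_j$ appear simultaneously, and the spectral lower bound is immediate from $\Sigma_i^2\succeq 0$ --- but you do need the small lemma that $U_iU_i^\top = P_i\Sigma_i^2P_i^\top$ with $P_i$ orthogonal guarantees an orthogonal $Q_i$ with $U_i=P_i\Sigma_iQ_i^\top$ even when $\Sigma_i$ is singular (true: set $V=P_i^\top U_i$, note $VV^\top=\Sigma_i^2$, normalise the nonzero rows and complete). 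Your monotonicity argument for inverting $Q_0$ on $[\max_k a_k,\infty)$ is equivalent to the paper's Rolle-based one.
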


\vspace{-0.2cm}
\subsubsection{Infinitely deep linear networks}
\vspace{-0.1cm}
We next consider the limit when $L \rightarrow + \infty$ of deep linear residual networks with parameters $U_k = \mId{n} + \mathcal{A}_{\frac{k}{L}}$, and thus focus on the analysis of the parameter $\theta = (\mathcal{A}_s)_{s \in [0, 1]}$, where $\mathcal{A}_s \in \R^{n \times n}$, which corresponds to a linear neural ODEs model (introduced by \cite{chen}). Remarkably, our theoretical approach still applies in this regime, and yields a closed-form formula for the metric. 
We thus study the dynamics of parameters $\x(t) \in \mathcal{X}$ where $\mathcal{X}$ corresponds to the Banach space $(\mathcal{C}^1([0, 1], \R^{n \times n}), \| \cdot \|_{\mathcal{C}^1})$ where   $\|f\|_{\mathcal{C}^1}
     := \max\!\bigl\{\|f\|_{\infty},\,\|f'\|_{\infty}\bigr\}$, and such that the trajectory $t \mapsto \theta(t) = (\mathcal A_s(t))_{s\in[0,1]}$ is the solution of the gradient flow on $\ell(\x)$, 
given by the (family of coupled) ODEs
\begin{equation}\label{eq:GDcascontinu}
\forall s \in [0,1], \quad
	\tfrac{\partial \mathcal A_s}{\partial t}(t) = -\,\mathfrak{g}_s(t), \quad \text{with} \quad \mathfrak{g}_s(t) := \tfrac{\partial \ell}{\partial \mathcal A_s}\Bigl( \theta(t)\Bigr) \in \R^{n \times n},
\end{equation}
where we assume that the loss function $\ell: \mathcal{X} \mapsto \R$ is such that $\x \mapsto (\frac{\partial \ell}{\partial \mathcal A_s}( \theta))_{s \in [0, 1]}$ is locally Lipschitz on $\mathcal{X}$ (to ensure by the Cauchy-Lipschitz theorem that indeed there exists a unique maximal solution $\theta(\cdot) \in \mathcal{C}^1([0, T), \mathcal{X})$ of \eqref{eq:GDcascontinu} with a given $\x(0)$).

As an infinite-depth analog of 
$Z_L = U_L \ldots U_1$, given any $\theta \in \mathcal{X}$ we consider $s \in [0, 1] \mapsto Z_s  = Z_s[\theta] \in \R^{n \times n}$ the unique global solution (as $\x = (\mathcal{A}_s)_{s \in [0,1]} \in \mathcal{X}$)  of the {\em state equation}
\begin{equation}\label{eq:stateequation}
\tfrac{\mathrm{d}}{\mathrm{d}s} Z_s  = \mathcal{A}_s Z_s, \quad Z_0 = \mId{n}.
\end{equation}
The analog to \Cref{as:main_assumption} is to assume that $\ell(\x) = f \bigl(Z_{s= 1} \bigr)$ with $f \in \mathcal{C}^1$,
and we now want to know if it is possible to rewrite the dynamics $\frac{\partial Z_{s=1}}{\partial t}(t)$ as an intrinsic dynamic that only depends on $Z_{s=1}(t)$ and the initialization $\theta(0)$.
The following proposition (see \Cref{app:propconservationcontinuous} for a proof) gives a set of conserved functions during all trajectories of \eqref{eq:GDcascontinu}.
\begin{restatable}{proposition}{propconservationcontinuous}\label{prop:conservationcontinuous}
For any $s \in [0, 1]$, consider 
$
\mathbf{h}_s: \x \coloneqq (\mathcal A_s)_{s \in [0, 1]} \in \mathcal{X} \mapsto \mathcal A_s' + \mathcal A_s'^\top + [ \mathcal A_s^\top, \mathcal A_s] \in \R^{n \times n},
$
where we denote $\mathcal A_s' \coloneqq \frac{\mathrm{d}}{\mathrm{d} s} \mathcal A_s$. Then for any $s \in [0, 1]$, one has for any $t$: $\mathbf{h}_s(\x(t)) = \mathbf{h}_s (\x(0))$, where $\x(t)$ is the maximal solution of \eqref{eq:GDcascontinu} with initialization $\x(0)$.
\end{restatable}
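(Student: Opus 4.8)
The plan is to show that each scalar-matrix-valued function $\mathbf{h}_s$ has vanishing time-derivative along the gradient flow \eqref{eq:GDcascontinu}. Since the $\mathbf{h}_s$ are defined pointwise in $s$, it suffices to fix $s$ and compute $\tfrac{\partial}{\partial t}\mathbf{h}_s(\theta(t))$ using $\tfrac{\partial \mathcal{A}_s}{\partial t} = -\mathfrak{g}_s$. The key algebraic ingredient is an explicit formula for the gradient $\mathfrak{g}_s = \tfrac{\partial \ell}{\partial \mathcal{A}_s}$, obtained by differentiating $\ell(\theta) = f(Z_1[\theta])$ through the state equation \eqref{eq:stateequation}. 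Introducing the adjoint (costate) variable $s \mapsto P_s \in \R^{n\times n}$ solving the backward equation $P_s' = -\mathcal{A}_s^\top P_s$ with terminal condition $P_1 = \nabla f(Z_1)$, a standard adjoint-sensitivity computation yields $\mathfrak{g}_s = P_s Z_s^\top$ (up to transpose conventions, which I would fix carefully). This is the infinite-depth analog of the finite formula $\tfrac{\partial \ell}{\partial U_j} = (U_L\cdots U_{j+1})^\top \nabla f(Z_L) (U_{j-1}\cdots U_1)^\top$.

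Next I would plug $\tfrac{\partial \mathcal{A}_s}{\partial t} = -\mathfrak{g}_s = -P_s Z_s^\top$ into $\tfrac{\partial}{\partial t}\mathbf{h}_s = \tfrac{\partial}{\partial t}\big(\mathcal{A}_s' + \mathcal{A}_s'^\top + [\mathcal{A}_s^\top,\mathcal{A}_s]\big)$ and show the result is identically zero. Using $\tfrac{\partial}{\partial t}(\mathcal{A}_s') = (\tfrac{\partial \mathcal{A}_s}{\partial t})' = -(P_s Z_s^\top)' = -(P_s' Z_s^\top + P_s (Z_s')^\top) = -(-\mathcal{A}_s^\top P_s Z_s^\top + P_s Z_s^\top \mathcal{A}_s^\top) = \mathcal{A}_s^\top P_s Z_s^\top - P_s Z_s^\top \mathcal{A}_s^\top = \mathcal{A}_s^\top \mathfrak{g}_s - \mathfrak{g}_s \mathcal{A}_s^\top = [\mathcal{A}_s^\top, \mathfrak{g}_s]$. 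So $\tfrac{\partial}{\partial t}(\mathcal{A}_s' + \mathcal{A}_s'^\top) = [\mathcal{A}_s^\top,\mathfrak{g}_s] + [\mathcal{A}_s^\top,\mathfrak{g}_s]^\top = [\mathcal{A}_s^\top,\mathfrak{g}_s] - [\mathcal{A}_s^\top,\mathfrak{g}_s]$ — wait, one must be careful: $[\mathcal{A}_s^\top,\mathfrak{g}_s]^\top = [\mathfrak{g}_s^\top, \mathcal{A}_s] = \mathfrak{g}_s^\top \mathcal{A}_s - \mathcal{A}_s \mathfrak{g}_s^\top$, which is generally not $-[\mathcal{A}_s^\top,\mathfrak{g}_s]$ since $\mathfrak{g}_s$ need not be symmetric. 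Then for the commutator term, $\tfrac{\partial}{\partial t}[\mathcal{A}_s^\top,\mathcal{A}_s] = [(-\mathfrak{g}_s)^\top, \mathcal{A}_s] + [\mathcal{A}_s^\top, -\mathfrak{g}_s] = -[\mathfrak{g}_s^\top,\mathcal{A}_s] - [\mathcal{A}_s^\top,\mathfrak{g}_s]$. Adding everything, $\tfrac{\partial}{\partial t}\mathbf{h}_s = [\mathcal{A}_s^\top,\mathfrak{g}_s] + [\mathfrak{g}_s^\top,\mathcal{A}_s] - [\mathfrak{g}_s^\top,\mathcal{A}_s] - [\mathcal{A}_s^\top,\mathfrak{g}_s] = 0$, as desired. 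The design of $\mathbf{h}_s$ is precisely such that this cancellation occurs.

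**The main obstacle** I anticipate is not the algebra above but making the adjoint-formula derivation rigorous in the Banach-space setting: justifying differentiability of $\theta \mapsto Z_1[\theta]$ as a map on $(\mathcal{C}^1([0,1],\R^{n\times n}),\|\cdot\|_{\mathcal{C}^1})$, identifying its Fréchet derivative via the variational equation $\tfrac{\mathrm{d}}{\mathrm{d}s}(\delta Z_s) = \mathcal{A}_s\,\delta Z_s + (\delta\mathcal{A}_s) Z_s$, and correctly representing the gradient $\mathfrak{g}_s$ with respect to the appropriate inner product (which determines the "shape" of $\tfrac{\partial \ell}{\partial \mathcal{A}_s}$ as an $n\times n$ matrix for each $s$). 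I would handle this by invoking the assumed local Lipschitz regularity and the standard theory of linear ODEs with $\mathcal{C}^1$ coefficients to get well-posedness and smooth dependence, then verify the adjoint identity by the usual "differentiate $\langle P_s, Z_s\rangle$-type" bookkeeping (writing $\tfrac{\mathrm{d}}{\mathrm{d}s}\langle P_s, \delta Z_s\rangle$ and integrating from $0$ to $1$). A secondary point to address is that the statement only claims conservation, not that these $\mathbf{h}_s$ are conservation laws in the sense of orthogonality to all $\nabla\phi_i$ — so I need only the trajectory-level invariance, which the computation above gives directly without any completeness/maximality argument about the family $\{\mathbf{h}_s\}$.
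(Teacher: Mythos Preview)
Your proposal is correct and follows essentially the same approach as the paper: introduce the adjoint variable $P_s$ (the paper calls it $\Lambda_s$) solving $P_s'=-\mathcal{A}_s^\top P_s$ with $P_1=\nabla f(Z_1)$ to obtain $\mathfrak{g}_s=P_sZ_s^\top$, then compute $\partial_s\mathfrak{g}_s=-[\mathcal{A}_s^\top,\mathfrak{g}_s]$ and verify the four-bracket cancellation exactly as you do. The technical obstacles you flag --- justifying the exchange $\partial_t\partial_s\mathcal{A}_s=\partial_s\partial_t\mathcal{A}_s$ and making the adjoint/gradient identification rigorous in the $\mathcal{C}^1$ Banach setting --- are precisely the two points the paper treats in separate lemmas, so your plan matches both the algebra and the analytic scaffolding.
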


Moreover, the following theorem (see \Cref{app:thmriemanianninfinite} for a proof) shows that for relaxed balanced initializations, the evolution of $Z_1(t) = Z_{s =1}(t)$ is intrinsic as it depends only on $Z_1$ and the initialization $\x(0)$:
\begin{restatable}{theorem}{thmriemanianninfinite}\label{thm:riemanianninfinite}
If the initialization $\x(0)$ satisfies that for each $s \in [0, 1]$ $\mathbf{h}_s(\x(0)) = \lambda(s) \mId{n}$ for some $\lambda(\cdot) \in \mathcal{C}^0([0, 1], \R)$, then one has
$$
\dot Z_1 = - \int_{0}^1 (Z_1 Z_1^\top)^{1-s} \exp (\gamma ( s))  \nabla f(Z_1) (Z_1^\top Z_1)^s \mathrm{d} s,
$$
{
with $\gamma (s) \coloneqq (1-s) \psi_1 ( 1) - \psi_1 ( 1-s) - s \psi_2 (1) + \psi_2 (s)
$, where $\psi_1: s \in [0, 1] \mapsto \int_{0}^s \int_{0}^u \lambda (1-v) \mathrm{d}v \mathrm{d}u$ and  $\psi_2: s \in [0, 1] \mapsto \int_{0}^s \int_{0}^u \lambda (v) \mathrm{d}v \mathrm{d}u$.
}
If $\lambda(\cdot) \equiv 0$ (balanced-condition), then $\gamma(\cdot) \equiv 0$.
\end{restatable}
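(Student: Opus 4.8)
The plan is to first turn the gradient flow \eqref{eq:GDcascontinu} into an explicit integral formula for $\dot Z_1$, and then use the relaxed balanced assumption (propagated along the trajectory by \Cref{prop:conservationcontinuous}) to reduce the two matrix factors appearing in that formula to powers of $Z_1 Z_1^\top$ and $Z_1^\top Z_1$, up to scalar factors. \emph{Step 1 (exact dynamics).} Write $\Phi(s,\tau)$ for the transition matrix of \eqref{eq:stateequation}, i.e. $\partial_s\Phi(s,\tau)=\mathcal{A}_s\Phi(s,\tau)$, $\Phi(\tau,\tau)=\mId{n}$, and then $\partial_\tau\Phi(s,\tau)=-\Phi(s,\tau)\mathcal{A}_\tau$; since $Z_s=\Phi(s,0)$ is invertible for all $s$, one has $\Phi(1,s)=Z_1 Z_s^{-1}$. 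A first-order perturbation $\mathcal{A}_\bullet\mapsto\mathcal{A}_\bullet+\delta\mathcal{A}_\bullet$ changes the terminal state, by variation of constants, as $\delta Z_1=\int_0^1\Phi(1,s)\,\delta\mathcal{A}_s\,Z_s\,\mathrm{d}s$. Since $\ell=f(Z_1)$, pairing with $\nabla f(Z_1)$ in the Frobenius inner product identifies the gradient density $\mathfrak{g}_s=\Phi(1,s)^\top\nabla f(Z_1)\,Z_s^\top$, and substituting $\tfrac{\partial\mathcal{A}_s}{\partial t}=-\mathfrak{g}_s$ back into the variation formula gives
\[
\dot Z_1 \;=\; -\int_0^1 Q_s\,\nabla f(Z_1)\,P_s\,\mathrm{d}s,\qquad P_s:=Z_s^\top Z_s,\quad Q_s:=\Phi(1,s)\Phi(1,s)^\top .
\]

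\emph{Step 2 (ODEs in $s$).} Differentiating $P_s$ in $s$ with $\tfrac{\mathrm{d}}{\mathrm{d}s}Z_s=\mathcal{A}_sZ_s$ yields $P_s'=Z_s^\top B_sZ_s$ with $B_s:=\mathcal{A}_s+\mathcal{A}_s^\top$; differentiating once more and substituting $B_s'=\mathcal{A}_s'+\mathcal{A}_s'^\top=\lambda(s)\mId{n}-[\mathcal{A}_s^\top,\mathcal{A}_s]$ (valid along the trajectory by \Cref{prop:conservationcontinuous} together with the relaxed balanced assumption) one checks that the commutator terms recombine with the quadratic terms into $Z_s^\top(\mathcal{A}_s+\mathcal{A}_s^\top)^2Z_s=P_s'P_s^{-1}P_s'$, so that $P_s''=\lambda(s)P_s+P_s'P_s^{-1}P_s'$. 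The analogous computation for $Q_s$, using $\partial_s\Phi(1,s)=-\Phi(1,s)\mathcal{A}_s$, gives $Q_s''=-\lambda(s)Q_s+Q_s'Q_s^{-1}Q_s'$.

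\emph{Step 3 (solving and substituting).} The substitution $W_s:=P_s^{-1}P_s'$ linearizes the $P$-equation to $W_s'=\lambda(s)\mId{n}$, hence $W_s=B_0+\Lambda(s)\mId{n}$ with $\Lambda(s):=\int_0^s\lambda$ and $B_0=P_0'=\mathcal{A}_0+\mathcal{A}_0^\top$ (using $P_0=\mId{n}$); since the matrices $\{B_0+\Lambda(v)\mId{n}\}_v$ pairwise commute, $P_s=\exp(sB_0+\psi_2(s)\mId{n})=e^{\psi_2(s)}e^{sB_0}$, and evaluating at $s=1$ gives $e^{B_0}=e^{-\psi_2(1)}Z_1^\top Z_1$, so $P_s=e^{\psi_2(s)-s\psi_2(1)}(Z_1^\top Z_1)^s$. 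Running the same argument backward from $s=1$ (where $Q_1=\mId{n}$ and $Q_1'=-(\mathcal{A}_1+\mathcal{A}_1^\top)$) yields $Q_s=e^{-\rho(s)}\exp\!\big((1-s)(\mathcal{A}_1+\mathcal{A}_1^\top)\big)$ with $\rho(s):=\int_s^1\!\int_v^1\lambda$; the change of variables $v\mapsto1-v$, $w\mapsto1-w$ shows $\rho(s)=\psi_1(1-s)$, and evaluating at $s=0$ (where $Q_0=Z_1Z_1^\top$) gives $Q_s=e^{(1-s)\psi_1(1)-\psi_1(1-s)}(Z_1Z_1^\top)^{1-s}$. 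Plugging $P_s$ and $Q_s$ into the integral of Step 1, the scalar prefactors multiply to $e^{\gamma(s)}$ with $\gamma(s)=(1-s)\psi_1(1)-\psi_1(1-s)-s\psi_2(1)+\psi_2(s)$, which is the claimed formula; for $\lambda\equiv0$ one has $\psi_1=\psi_2=0$, hence $\gamma\equiv0$.

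\emph{Main obstacle.} Once the setup is in place the computations are short, so the real work is twofold: (i) justifying rigorously, in the Banach-space setting of \eqref{eq:GDcascontinu}, that $Z_1$ is Fréchet differentiable with respect to $\theta$ with the above derivative and that the gradient-flow trajectory stays $\mathcal{C}^1$ in $s$, so that all the $s$-derivatives make sense; and (ii) carrying out the second-order computation of Step 2 carefully enough to see the exact recombination of the non-commuting products $\mathcal{A}_s^\top\mathcal{A}_s$ and $\mathcal{A}_s\mathcal{A}_s^\top$ into $B_s^2$ — this is precisely where the form of the conservation law $\mathbf{h}_s$ enters. The pairwise commutativity of $\{B_0+\Lambda(v)\mId{n}\}_v$ (resp. its backward $Q$-analogue) is the structural fact that makes the matrix exponentials in Step 3 factor into a scalar times a principal matrix power.
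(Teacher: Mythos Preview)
Your argument is correct and follows a genuinely different route from the paper. The paper discretizes the state equation by an explicit Euler scheme with step $h=1/L$, writes $X_L=U_{L-1}\cdots U_0$ with $U_k=\mId{n}+h\mathcal A_{s_k}$, applies a perturbed version of the finite-depth relaxed-balanced analysis (their \Cref{lem:perturbed}) to the products $(U_{L-1}\cdots U_{j+1})(U_{L-1}\cdots U_{j+1})^\top$ and $(U_{j-1}\cdots U_0)^\top(U_{j-1}\cdots U_0)$, and then passes to the limit $L\to\infty$ via Riemann sums together with Euler-method error bounds. By contrast, you stay entirely in the continuous picture: after deriving $\dot Z_1=-\int_0^1 Q_s\nabla f(Z_1)P_s\,\mathrm ds$, you obtain the second-order matrix ODEs $P_s''=\lambda(s)P_s+P_s'P_s^{-1}P_s'$ and $Q_s''=-\lambda(s)Q_s+Q_s'Q_s^{-1}Q_s'$, linearize them through the logarithmic-derivative substitution $W_s=P_s^{-1}P_s'$ (resp.\ its $Q$-analogue), which collapses to $W_s'=\lambda(s)\mId{n}$, and integrate explicitly using that $\{B_0+\Lambda(v)\mId{n}\}_v$ commute. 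Your approach is shorter and self-contained: it bypasses the discretization bookkeeping and the auxiliary uniform-Lipschitz lemma entirely, and it makes transparent \emph{why} the conservation law $\mathbf h_s=\lambda(s)\mId{n}$ is exactly what is needed, since it is precisely the identity that turns $\mathcal A_s^\top B_s+B_s'+B_s\mathcal A_s$ into $B_s^2+\lambda(s)\mId{n}$. The paper's route, on the other hand, makes the ``infinite-depth limit of \Cref{thm:deeplinear}'' interpretation literal rather than heuristic, and reuses machinery already built for finite depth. Your caveats in the ``Main obstacle'' paragraph are accurate: point~(i) is handled in the paper's appendix (Fr\'echet differentiability and the adjoint/gradient formula), and point~(ii) is the genuine computational hinge, which you identified correctly.
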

In a sense, this theorem captures the infinite-depth limit ($L \rightarrow + \infty$) of \Cref{thm:deeplinear}, while offering the key advantage of an explicit closed-form expression for the associated metric.

\section*{Conclusion}
In this paper, we investigated when high-dimensional gradient flows can be recast as intrinsic Riemannian flows in lower-dimensional spaces. Our results show that such reductions are always possible for ReLU networks under path-lifting parametrization, and for linear networks under relaxed balanced initializations. A central contribution is our analysis of the ``path-lifting metric'', a recently introduced and still largely unexplored object, for which we provide an intrinsic characterization in the 3-layer case. Extending this analysis to deeper or more general architectures could shed new light on the geometry of gradient dynamics for general ReLU networks.

\section*{Acknowledgments}
The work of G. Peyr\'e was supported by the French government under the management of Agence Nationale de la Recherche as part of the ``Investissements d’avenir'' program, reference ANR-19-P3IA-0001 (PRAIRIE 3IA Institute) and by the European Research Council (ERC project WOLF). The work of R. Gribonval was supported by the AllegroAssai ANR-19-CHIA-0009 project of the French Agence Nationale de la Recherche
and by the SHARP ANR project ANR-23-PEIA-0008 in the context of the France 2030 program. The authors thank Stephan Thomassé, Gilles Blanchard and Laurent Jacques for discussions on rank properties of DAGs that helped simplify the proof of \Cref{coro:dag} in \Cref{app:FrobMLPDAG}.

\bibliography{biblio}
\bibliographystyle{iclr2026_conference}

\newpage
\appendix

\section{A table that summarizes which parametrizations can be used to analyze which type of neural network, along with the corresponding results.}
\begin{table}[ht]
  \centering
  \renewcommand{\arraystretch}{1.25}
  \begin{tabular}{|c|p{5cm}
  |p{3.5cm}|}
    \hline
    \diagbox[width=4cm,dir=NW]{\small Network type}{\small Parametrization}
      & $\boldsymbol{\phi_{\text{Lin}}}$ 
      & $\boldsymbol{\phi_{\text{ReLU}}}$ \\
    \hline
    Linear network 
  & \begin{tabular}[t]{@{}l@{}}
  IMP: {\bf only for relaxed balanced} $\theta_0$ \\[2pt] Dimension: $d' \boldsymbol{\ll} D$\end{tabular}
      & \begin{tabular}[t]{@{}l@{}}IMP: {\bf for all}  $\theta_0$ \\[2pt] $d' = D - k \boldsymbol{\approx} D$ \end{tabular} \\
    \hline
    DAG ReLU
      & \textbf{N/A}
      &  \begin{tabular}[t]{@{}l@{}}IMP: {\bf for all} $\theta_0$ \\[2pt] $d' = D - k \boldsymbol{\approx} D$  \end{tabular} \\
    \hline
  \end{tabular}
      IMP = intrinsic metric property ; $d'$ = dimension of the manifold of $\phi(\theta)$; $k$= \# hidden neurons
      \smallskip
         \caption{\textit{The table summarizes which parametrizations can be used to analyze which type of neural network, along with the corresponding results.
        }}
  \label{tab:imp-summary}
\end{table}

\section{More comments on \Cref{ex-scalartrajectory}} \label{app:numericsonex}

In example \eqref{ex-scalartrajectory}, when $\lambda \rightarrow + \infty$ $K_{\x_0}/\lambda \to  \mId{m}$ (the Euclidean metric). When $\lambda = 0$, one has $K_{\x_0}(z) = \| z\| \mId{m} + \frac{z z^\top}{\| z\| }$ for every $z \neq 0$,
and in particular, by the uniqueness result in the Cauchy-Lipschitz theorem, $0$ is reachable only if $z_0 = 0$. When $\lambda \rightarrow - \infty$, $K_{\x_0}/|\lambda| \to  \frac{zz^\top}{\|z \|^2}$. See \url{https://github.com/sibyllema/Intrinsic_training_dynamic} 
for numerical illustrations of these different behaviors.

\section{Proof of \Cref{prop:inclusionnoyaux}} \label{app:propinclusionnoyaux}

\propinclusionnoyaux*

\begin{proof}
\textbf{(i) \(\Rightarrow\) (ii).}

Assume (i) and fix $\theta\in\mathcal{M}_{\x_0} \cap U$ and a vector $v\in \ker \partial \phi (\x) \cap\ker \partial \mathbf{h} (\theta)$.  Applying the chain rule in the ambient space $\mathbb R^{D}$ (possible as $v\in \ker \partial \mathbf{h} (\theta) = T_{\theta} \mathcal{M}_{\x_0}$ because $\partial \mathbf{h}(\x)$ has its rank that remains constant on $\mathcal{M}_{\x_0} \cap U$ by hypothesis)
gives
\[
  \partial M (\x) \cdot v
  = \partial K_{\x_0} (\phi (\x)) \cdot \bigl(\partial \phi(\x) \cdot v \bigr)
  = \partial K_{\x_0} (\x)\cdot 0
  = 0,
\]
hence $v \in\ker \partial M (\x)$ and (ii) holds.
\end{proof}

\section{Proof of \Cref{thm:implicitfunctions}} \label{app:thmimplicitfunctions}
\thmimplicitfunctions*
\begin{proof}
We first show $(i) \implies (ii)$. We assume $(i)$. Observe that given any $\x \in \R^D$\eqref{eq:MLPkernelfirst} is equivalent (by rank theorem) to
\begin{equation}
    \label{eq:rankcond}
    \rank \begin{pmatrix}
\partial \phi(\x)\\
\partial \mathbf{h} (\x)
\end{pmatrix} = D.
\end{equation}
By smoothness of $\phi$ and $\mathbf{h}$, if \eqref{eq:rankcond} holds at $\x_0$ it also holds in a whole neighborhood $U$ of $\x_0$. By the implicit function theorem, denoting $F(\theta) \coloneqq (\phi(\theta), \mathbf h (\theta))$, it implies that $\x$ can be expressed on $U$ as $\x = F^{-1}(\phi(\x) ,\mathbf h (\x))= \Gamma(\phi(\x) ,\mathbf h (\x))$.

We now show $(ii) \implies (i)$. We assume $(ii)$. Then on $U$ one has $\theta = \Gamma(\phi(\theta), \mathbf{h} (\theta))$. Thus on $\mathcal{M}_{\x_0} \cap U$, one has $\theta = \Gamma(\phi(\theta), \mathbf{h} (\theta_0))$. We now fix some $\x \in \mathcal{M}_{\x_0} \cap U$ and we consider a vector $v\in \ker \partial \phi (\x) \cap\ker \partial \mathbf{h} (\theta)$.  

Applying the chain rule in the ambient space $\mathbb R^{D}$ on $\Gamma$ gives
\[
 v =  \mId{D} v = \partial_{\phi(\x)} \Gamma (\phi(\x), \mathbf{h} (\x_0) )  \cdot \bigl(\partial \phi(\x) \cdot v \bigr)
  = \partial \Gamma (\x)\cdot 0
  = 0,
\]
and thus $v = 0$.
\end{proof}

\section{Proof of \Cref{prop:kernelstrongest}} \label{app:propkernelstrongest} 

\propkernelstrongest*

\begin{proof}
$(i) \implies (ii)$ is a direct consequence of the proof of \cite[Proposition 3.7]{marcotte2023abide}.

We now show $(ii) \implies (iii)$. Let us assume $(ii)$. We fix $\x_0$. Then by assumption on $U \ni \x_0$, $\theta = \Gamma(\mathbf h(\theta), \phi(\theta))$, and by using \Cref{prop:kernelstrongest} one has $\ker \partial \phi (\x) \cap\ker \partial \mathbf{h} (\theta) = \{ 0 \}$ on $\mathcal{M}_{\x_0} \cap U$. Thus \eqref{eq:rankcond} holds on an open neighborhood $O$ of $\x_0$. As $\mathbf{h}$ are conservation laws for $\phi$ on $U$, one has on $O \cap U$ that $\vdim(\Wfspace{}_{\phi} (\x)) = D - \text{rank} \partial \mathbf{h}(\x)$.

But as for any conservation law $h$ for $\phi$, one has $\langle \nabla h(\x), \nabla \phi_i(\x) \rangle = \langle \nabla h(\x), \nabla \phi_j(\x) \rangle = 0 \implies \langle \nabla h(\x), [\nabla \phi_i, \nabla \phi_j](\x) \rangle = 0$, one has necessarly $\vdim  \lie \Wfspace{}_\phi (\x_0) \leq D - \text{rank} \partial \mathbf{h}(\x_0)$ and as one also has $D - \text{rank} \partial \mathbf{h}(\x_0) = \vdim \Wfspace{}_\phi (\x_0) \leq \vdim \lie \Wfspace{}_\phi (\x_0) $ then one has $\Wfspace{}_\phi (\x_0) = \lie \Wfspace{}_\phi (\x_0)$. This holds for any $\x_0$, which concludes the proof.
\end{proof}

\section{Proof of \Cref{thm:MLP_Frobproperty}, \Cref{thm:MLP_Frobpropertybis}, \Cref{coro:relu} and \Cref{coro:truenumber}} 

\subsection{Proof of \Cref{thm:MLP_Frobproperty}}
\label{app:thmMLP_Frobproperty}

\thmMLPFrobproperty*

\begin{proof}
We consider a parametrization $\phi: \theta \mapsto (\phi_i(\theta))_{i=1}^d$, where all $\phi_i$ are monomials in $\theta = (\theta_1, \cdots, \theta_D) \in (\R \backslash \{ 0 \})^D$, i.e. 
$ \phi_i(\theta)=\prod_{\ell=1}^{D}\theta_\ell^{\alpha_\ell^{(i)}}.$
Moreover, if a variable $\theta_\ell$ appears in some coordinate with exponent $\alpha_\ell^{(i)}>0$, then every other coordinate that contains $\theta_\ell$ uses the \emph{same} exponent $\alpha_\ell^{(k)}=\alpha_\ell^{(i)}$.
These assumptions are indeed satisfied for the path-lifting parametrization $\phi_{\mathtt{ReLU}}$ associated to general ReLU networks \citep{gonon_path-norm_2023, stock_embedding_2022}, associated to a directed acyclic graph (DAG) of any depth, including skip connexions and arbitrary mixes of ReLU/linear/max-pooling activations (and even slight generalizations of max-pooling).
Now let us consider two indices $i,j\in\{1,\dots,d\}$.  \RG{Denote $I$ (resp. $J$) the subset of all indices $\ell$ such that $\alpha_\ell^{(i)} \neq 0$ (resp. $\alpha_\ell^{(j)} \neq 0$). By abuse of notation we write $i \cap j$ (resp. $i \backslash j$ etc.) the set $I \cap J$ (resp. $I \backslash J$) and denote $\theta_{i \cap j}$ etc. the restriction of $\theta$ to the corresponding entries.}
In particular, one can decompose 
\[
  \x = \bigl(\x_{i\cap j},\;\x_{i\setminus j},\;\x_{j\setminus i},\;\x_{(i\cap j)^{c}}\bigr).
\]

We write
\[
  \phi_i(\x)
    = \phi_{i\cap j}\bigl(\x_{i\cap j}\bigr)\,\phi_{i\setminus j}\bigl(\x_{i\setminus j}\bigr),
  \quad
  \phi_j(\x)
    = \phi_{i\cap j}\bigl(\x_{i\cap j}\bigr)\,\phi_{j\setminus i}\bigl(\x_{j\setminus i}\bigr),
\]
where $\phi_{i\cap j}(\cdot)$ is the maximal monomial factoring both $\phi_i(\cdot)$ and $\phi_j(\cdot)$, and $\phi_{i\setminus j}$ (resp.\ $\phi_{j\setminus i}$) is the unique monomial such that
\[
  \phi_i(\cdot)
    = \phi_{i\cap j}(\cdot)\,\phi_{i\setminus j}(\cdot),
  \qquad
  \phi_j(\cdot)
    = \phi_{i\cap j}(\cdot)\,\phi_{j\setminus i}(\cdot).
\]

Then one has:
$$
\nabla \phi_i (\theta) = \begin{pmatrix}
\nabla \phi_{i \cap j} \phi_{i \backslash j}  \\
  \phi_{i \cap j}  \nabla \phi_{i \backslash j} \\ 
  0 \\
  0
\end{pmatrix}
\quad \text{and}
\quad 
\nabla \phi_j (\theta) = \begin{pmatrix}
\nabla \phi_{i \cap j}  \phi_{j \backslash i}  \\
0 \\
  \phi_{i \cap j} \nabla \phi_{j \backslash i} \\ 
  0
\end{pmatrix}
$$
and
\begin{align*}
\partial^2 \phi_i (\x)  &=
\begin{pmatrix} 
\partial^2 \phi_{i \cap j} \phi_{i \backslash j}  & \nabla \phi_{i \cap j}  \nabla \phi_{i \backslash j}^\top & 0 & 0\\
\nabla \phi_{i \backslash j} \nabla \phi_{i \cap j}^\top  &  \partial^2 \phi_{i \backslash j}   \phi_{i \cap j} & 0 & 0 \\
0 & 0 & 0 & 0 \\ 
0 & 0 & 0 & 0
\end{pmatrix}
\end{align*}
Thus
\begin{align*}
\partial^2 \phi_i (\x) \nabla \phi_j(\x)  
&=
\begin{pmatrix} 
\partial^2 \phi_{i \cap j} \phi_{i \backslash j} & \nabla \phi_{i \cap j} \nabla \phi_{i \backslash j}^\top & 0 & 0\\
\nabla \phi_{i \backslash j}  \nabla \phi_{i \cap j}^\top  &  \partial^2 \phi_{i \backslash j}  \phi_{i \cap j}  & 0 & 0 \\
0 & 0 & 0 & 0 \\ 
0 & 0 & 0 & 0
\end{pmatrix} \begin{pmatrix}
\nabla \phi_{i \cap j}  \phi_{j \backslash i} \\
0 \\
  \phi_{i \cap j}  \nabla \phi_{j \backslash i} \\ 
  0
\end{pmatrix} \\
&= \begin{pmatrix}
\partial^2 \phi_{i \cap j} \nabla \phi_{i \cap j}  \phi_{j \backslash i}  \phi_{i \backslash j} \\
\nabla \phi_{i \backslash j} \| \nabla  \phi_{i \cap j}\|^2  \phi_{j \backslash i}  \\
0 \\
0
\end{pmatrix} \\
&=  \phi_{j \backslash i}
\begin{pmatrix}
\partial^2 \phi_{i \cap j}  \nabla \phi_{i \cap j} \phi_{i \backslash j} \\
\nabla \phi_{i \backslash j} \| \nabla  \phi_{i \cap j} \|^2   \\
0 \\
0
\end{pmatrix},
\end{align*}
and similarly one has:
\begin{align*}
    \partial^2 \phi_j (\x) \nabla \phi_i(\x)  &= \phi_{i \backslash j}  \begin{pmatrix}
    \partial^2 \phi_{i \cap j}  \nabla \phi_{i \cap j} \phi_{j \backslash i} \\ 
    0 \\
    \nabla \phi_{j \backslash i} \| \nabla  \phi_{i \cap j} \|^2 \\
    0
    \end{pmatrix}
\end{align*}
Finally one has: 
\begin{align*}
    [\nabla \phi_i, \nabla \phi_j](\x) &= \begin{pmatrix}
    0 \\
    - \nabla \phi_{i \backslash j} \| \nabla  \phi_{i \cap j} \|^2  \phi_{j \backslash i}  \\
     \nabla \phi_{j \backslash i} \| \nabla  \phi_{i \cap j} \|^2 \phi_{i \backslash j}  \\ 0
    \end{pmatrix} 
    \\
    &= \| \nabla  \phi_{i \cap j} \|^2 \begin{pmatrix}
    0 \\
    - \nabla \phi_{i \backslash j}  \phi_{j \backslash i}   \\
     \nabla \phi_{j \backslash i}  \phi_{i \backslash j}  \\ 0
    \end{pmatrix} 
\end{align*}

But as:
\begin{align*}
    \phi_{j \backslash i} \nabla \phi_i -  \phi_{i \backslash j} \nabla \phi_j =  \phi_{i \cap j} \begin{pmatrix}
    0 \\
    \phi_{j \backslash i}  \nabla  \phi_{i \cap j}  \\
    -  \phi_{i \backslash j}  \nabla  \phi_{j \cap i}  \\
    0
    \end{pmatrix} 
\end{align*}
As $\phi_{i \cap j} \neq 0$ (indeed $\x \in (\R \setminus \{ 0 \} )^D$, one then has $\begin{pmatrix}
    0 \\
    \phi_{j \backslash i}  \nabla  \phi_{i \cap j}  \\
    -  \phi_{i \backslash j}  \nabla  \phi_{j \cap i} 
    \end{pmatrix} \in \Wfspace{}_{\phi}(\x)$ and thus 
    $$ [\nabla \phi_i, \nabla \phi_j](\x) \in \Wfspace{}_{\phi}(\x).
    $$
\end{proof}

\subsection{Proof of \Cref{thm:MLP_Frobpropertybis}.} \label{app:FrobMLPDAG}

\textit{To prove \Cref{thm:MLP_Frobpropertybis}, we introduce the notion of prunable DAGs and characterize them, before showing (see \Cref{coro:1phi}) that the rank of $\partial \phi(\theta)$ is almost everywhere equal to that of $\partial \phi(\mathbf{1}_D)$}

\subsubsection{Prunable DAGs}
\begin{definition}
Given a DAG $G = (V,E)$ with vertices $V$ and edges $E$ and, for each edge $i \in E$ we denote $G'_{i} = (V,E'_i)$ with $E'_i := E \setminus\{i\}$ the smaller DAG obtained by removing edge $i$. The input neurons  of $G$ (neurons without any antecedent) are always also input neurons of $G'_i$, and similarly for the output neurons of $G$ (neurons without any successor), and we say that $G$ is {\em prunable at edge $i$} if $G'_i$ has no additional input/output neurons. The graph $G$ is said to be {\em prunable everywhere (resp. somewhere/nowhere)} if it is prunable at every (resp. at some/at no) edge $i$.

Recalling that, by definition, a path in a graph is a sequence of consecutive edges starting from an input neuron and ending at an output neuron, we denote $\mathcal{P}_i \coloneqq \{ p: p \ni i\}$ the set of all paths of $G$ containing an edge $i \in E$, and $\mathcal{P}_i^c$ the other paths of $G$. 
\end{definition}
\begin{lemma}\label{lem:prunable}
The set $\mathcal{P}_i^c$ is always a subset of the paths of $G'_i$. Moreover, if $G$ is prunable at edge $i$, then  $\mathcal{P}_i^c$ is {\em exactly} the set of all paths of $G'_i$. 
\end{lemma}
\begin{proof}
If $p \in \mathcal{P}_i^c$ then its edges are distinct from $i$, hence $p$ is a path of $G'_i$. Conversely, when $G$ is prunable at $i$, $G'_i$ has the same input and output neurons as $G$, hence if $p$ is a path of $G'_i$ it is a path of $G$ with edges in $E'_i$, i.e. with no edge equal to $i$, i.e. $p \in \mathcal{P}_i^c$.
\end{proof}
The following straightforward characterizations will be useful.
\begin{lemma}\label{lem:caractprunableatedge}
A graph $G$ is prunable at edge $i$ if, and only if, the output neuron of edge $i$ has at least another  incoming edge, and the input neuron of edge $i$ has at least another outgoing edge. 
\end{lemma}
\begin{corollary}\label{lem:caractprunableeverywhere}
A graph $G$ is prunable everywhere if, and only if, every non-output neuron has at least two successors, and every non-input neurons has at least two antecedents.
\end{corollary}
\begin{corollary}\label{lem:caractprunableeverywhereMLP}
A graph $G$ associated to a multilayer perceptron (MLP) of depth $L$ with layer widths $N_\ell$, $0 \leq \ell \leq L$, is prunable everywhere if, and only if, $N_\ell \geq 2$ for every layer $\ell$ (including input and output layers).
\end{corollary}
\subsubsection{Rank properties}
\begin{lemma} \label{lemma:lemma1}
The parameterization $\phi = \phi_{\mathtt{ReLU}}$ used for any general DAG ReLU neural network satisfies that for any $\theta \in (\R \backslash \{ 0 \})^D$, one has $\rank(\partial \phi(\x)) = \rank(\partial \phi(\mathbf{1}_D))$, where $\mathbf{1}_D$ is the vector with all its coordinates  equal to $1$.
\end{lemma}

\begin{proof}
By definition of the component $\phi_p$ 
of $\phi = (\phi_p)_p$ for each path $p$ as a product \cite{gonon_path-norm_2023}, one has:
\[
\frac{\partial  \phi_p}{\partial {\x}_i}(\x) {\x}_i = \phi_p(\x) 1_{\{i \in p\}}.
\]
In particular for $\theta = \mathbf{1}_D$ one gets:
\[
S \coloneqq \partial \phi ( \mathbf{1}_D) = (1_{\{i \in p\}})_{p, i},
\]
and thus:
\begin{equation} \label{eq:diagphi}
D_{\phi(\x)} S = \partial \phi (\x)  D_\x, 
\end{equation}
where $D_{\phi(\x)} = \diag (\phi(\x))$ and $D_\x = \diag (\x)$. \textit{Note that \eqref{eq:diagphi} holds for each $\theta \in \RD$.}

Finally, as all coordinates of $\x$ are in $\R\backslash \{0 \}$, both $D_{\phi(\x)}$ and $D_\x$ are invertible and thus $\rank(S)= \rank(\partial \phi(\x))$, which concludes the proof.
\end{proof}

\begin{proposition} \label{coro:dag}
For \textit{any DAG ReLU architecture},  
one has $\rank(\partial \phi_{\texttt{ReLU}}(\mathbf{1}_D))\geq D-m$ with $m$ the number of hidden neurons and $D$ the number of parameters.
\end{proposition}

\begin{proof}
Without loss of generality we prove the result on a connected DAG. Indeed, if the DAG has $C$ connected components, $G_k$, $1 \leq k \leq C$, each with $D_k$ edges and $m_k$ hidden neurons, then as $\sum_k D_k=D$ and $\sum_k m_k = m$ the result for the whole graph follows from the result on each component: up to row/column permutations, it is easy to check that the matrix $\partial \phi(\mathbf{1}_D)$ is the block-diagonal concatenation of the corresponding matrices for each component, hence its rank is $\sum_k (D_k-m_k) = D-m$.

Assume that $G$ is a connected DAG, and denote $S' \coloneqq \partial \phi(\mathbf{1}_D)[F,:]$,
the incidence matrix of $G$ restricted to $F$, the set of all full paths
(i.e., paths starting from an input neuron and ending at an output neuron). As a direct consequence of the definition of $S'$, one gets:
 $\range(\partial \phi(\mathbf{1}_D)) \geq \range(S')$. Thus, to conclude the proof it is enough to show that $\range(S') = D-m$.

We now show that $\range(S') = D-m$. We will use the known fact (see \citep[Proposition 6]{gleiss2003circuit}) that in a strongly connected\footnote{a directed graph where for every nodes $\mu$,$\nu$ there exists a {\em directed} path starting at $\mu$ and ending at $\nu$} directed graph $G' = (E',V')$, the rank of the cycle matrix $C$ is $|E'|-|V'|+1$, where the cycle matrix has as many rows as the number of cycles in $G'$ and each row has binary entries indicating which edge (indexing the columns of $C$) belongs to the corresponding cycle.
To exploit this fact, we complete $G$ (as a DAG, it has no cycle!) into a strongly connected graph $G'$ with cycle matrix $C$ satisfying  $\rank(S') = \rank(C)$, and $|E'|=|E|+|V_\mathrm{out}|+|V_\mathrm{in}|+1$, $|V'|=|V|+2$, so that $\rank(S') = \rank(C) =  |E|-|V|+|V_\mathrm{out}|+|V_\mathrm{in}| = D-m$ as claimed, since $m = |V|-(|V_\mathrm{out}|+|V_\mathrm{in}|)$.
Above, we denoted $V_\mathrm{in}$ (resp. $V_\mathrm{out}$) the set of input (resp. output) neurons of $G$.

The construction is as follows: we add two fresh neurons $\mu,\nu$ ($V' := V \cup \{\mu,\nu\}$); the first one is a "source" neuron with an outgoing edge towards each input neuron in $V_\mathrm{in}$; the second is a "sink" neuron with an incoming edge from each output neuron in $V_\mathrm{out}$; another edge is added $\nu \to \mu$.
It is not difficult to check that cycles of $G'$ are in bijection with paths in $G$: each cycle $c$ of $G'$ passes through exactly one input neuron and one output neuron, and its restriction to the edges of $G$ is precisely a path $p$ between these neurons. {Vice-versa, every path $p$ in $G$, starting at $\gamma \in V_\mathrm{in}$ and ending at $\gamma' \in V_\mathrm{out}$ is uniquely prolongated into a cycle of $G'$ by adding the three edges $\gamma' \to \nu$, $\nu \to \mu$, and $\mu \to \gamma$.} With a slight abuse of notation we will thus identify $c$ and $p$. 

Equipped with this construction we first show that $S' = C[:,E]$, so that $\rank(S') \leq \rank(C)$. Indeed,  $S'$ has binary entries $S'[p,e] = \mathbf{1}_{p \ni e}$, so that $S'[p,e] = C[c,e]$ for every path and $e \in E$. 

To conclude, we only need to show $\rank(C) \leq \rank(S')$.
For this we prove that each column $C[:,e']$, $e' \in E' \setminus E$, is in the span of $S'$.
Denoting $\mathtt{out}(\gamma)$ (resp. $\mathtt{in}(\gamma)$ the set of outgoing (resp. incoming) edges of neuron $\gamma \in V$, and $\mathtt{suc}(\gamma)$ (resp. $\mathtt{ant}(\gamma)$ the set of successors (resp. antecedent) neurons of $\gamma$, i.e. neurons connected via an outgoing (resp. incoming) edge, we distinguish three cases:
\begin{itemize}
\item $e' = \mu \to \gamma$ with $\gamma \in V_\mathrm{in}$ some input neuron; a cycle $c$ contains this edge $e'$ if, and only if, {the corresponding path} $p$ of $G$ starts at the input neuron $\gamma$, or equivalently if there exists some neuron $\gamma' \in \mathtt{suc}(\gamma)$ such that $p$ contains the edge $\gamma \to \gamma'$. As $p$ can only pass through at most one of the outgoing edges of $\gamma$, we obtain $C[c,e'] = \sum_{e \in \mathtt{out}(\gamma)} S'[p,e]$, and since this holds for every $c$ we get
$C[:,e']=\sum_{e \in \mathtt{out}(\gamma)} S'[:,e]$, showing indeed that $C[:,e']$ belongs to the span of $S'$;
\item $e'= \gamma \to \nu$, with $\gamma \in V_\mathrm{out}$; similarly, $C[:,e'] = \sum_{e \in \mathtt{in}(\gamma)} S'[:,e]$;
    \item $e' = \nu \to \mu$: this edge appears in all cycles hence $C[:,e']= \mathbf{1}$. Since we can partition the cycles according to which edge $e'' = \mu \to \gamma$, $\gamma \in V_{\mathtt{in}}$ they contain, we have $\mathbf{1} = \sum_{\gamma \in V_{\mathtt{in}}} C[:,\mu \to \gamma]$, and since the first case have established that such $C[:,\mu \to \gamma]$ belong to the span of $S'$, so does $C[:,e']=\mathbf{1}$.
\end{itemize}
\end{proof}

\begin{corollary} \label{prop:indag}
For \textit{any DAG ReLU architecture},  
one has $\rank(\partial \phi_{\texttt{ReLU}}(\mathbf{1}_D))= D-m$ with $m$ the number of hidden neurons and $D$ the number of parameters.
\end{corollary}
 Before showing this result, we first establish that the $m$ "canonical" conservation laws { \citep{le2022training}} associated to each hidden neuron $1 \leq j \leq m$ are independent on a set that contains $\mathbf{1}_D$.
 \begin{lemma} \label{lemma:daglaws}
For \textit{any DAG ReLU architecture}, the "canonical" conservation laws {obtained in \cite{le2022training}} associated to each hidden neuron $1 \leq j \leq m$, expressed (denoting $\theta_j$ the bias of neuron $j$,  $\mathtt{succ}(j)/\mathtt{ant}(j)$ its outgoing/incoming edges, and $\theta_j$ the corresponding weights) as
\begin{equation}
    \label{eq:CanonicalCL}
    h_j(\theta) := \sum_{i \in \mathtt{succ}(j)} \theta_i^2-\sum_{i \in \mathtt{ant}(j)} \theta_i^2 - \theta_j^2
\end{equation}
are independent in the neighborhood of each parameter $\theta$ such that $\nabla h_j(\theta) \neq 0$ for all $j$ (so in particular around any $\x \in \Theta$ with $\Theta$ the set of \Cref{thm:MLP_Frobpropertybis}).
\end{lemma}
\begin{proof}
{Let us consider a parameter $\theta$ such that $\nabla h_j(\theta) \neq 0$ for all hidden neurons $j$.}

For each edge $i = \mu \to \nu$ one has 
\[
\frac{\partial h_j}{\partial \theta_i}(\x) = \begin{cases}
2 \theta_i \text{ if } j = \mu \\
- 2  \theta_i   \text{ if } j = \nu \\
0 \quad \text{otherwise} \\
\end{cases} 
\]
{and similarly for each hidden neuron $j'$ with a bias 
\[
\frac{\partial h_j}{\partial \theta_{j'}}(\x) = \begin{cases}
- 2  \theta_{j'}   \text{ if } j = j' \\
0 \quad \text{otherwise} \\
\end{cases},
\]}
denoting $h(\theta) := (h_j(\theta))_j \in \R^m$ the Jacobian matrix $\partial h(\x) \in \R^{m \times D}$ has {one or} two nonzero entries per row: each row corresponds to one edge {or one bias}, each column to a hidden neuron, and a row has two nonzero entries at the input and output neurons of the corresponding edge {(resp. one nonzero entry at the neuron corresponding to the bias)}.

By standard linear algebra one can show that $\partial h(\x)$ has a trivial kernel, 
hence its column are linearly independent. By continuity this remains the case for the $m$ columns of $\partial h(\theta')$ in a neighborhood of $\x$. This is exactly the definition of independent conservation laws.

\end{proof}
We now are able to prove \Cref{prop:indag}.

\begin{proof}
By \Cref{lemma:daglaws}, there exists (at least) $m$ independent conservation laws around $\mathbf{1}_D$. Let us denote them $h_i$, $1 \leq i \leq m$. By definition of a conservation law (\Cref{def:CL}) $h$ of $\phi_{\texttt{ReLU}}$ we have that $\nabla h_i(\theta) \in \ker(\partial \phi_{\texttt{ReLU}}(\x))$, $1 \leq i \leq m$,  and by definition of the {\em independence} of conservation laws, the vectors $\nabla h_i(\theta)$, $1 \leq i \leq m$, are linearly independent. It follows that
\[
\vdim ( \ker(\partial \phi_{\texttt{ReLU}}(\mathbf{1}_D)) )\geq m.
\]
By the rank-nullity theorem this implies
\[
\rank(\partial \phi_{\texttt{ReLU}}(\mathbf{1}_D))\leq D-m,
\]
and thus, by \Cref{coro:dag}, one gets
\[
\rank(\partial \phi_{\texttt{ReLU}}(\mathbf{1}_D))= D-m.\qedhere
\]
\end{proof}

\begin{corollary} \label{coro:1phidagatedge}
Consider a DAG ReLU architecture with $D$ parameters and $m$ hidden neurons.
Assume that the underlying DAG $G$ is prunable at edge $i$. If $\theta$ satisfies $\theta_j \neq 0$ for every $j \neq i$, then  $\rank(\partial \phi_{\texttt{ReLU}}(\x)) = \rank(\partial \phi_{\texttt{ReLU}}(\mathbf{1}_D)) = D-m$.
\end{corollary}

\begin{proof}[Proof of \Cref{coro:1phidagatedge}]
By \Cref{lemma:lemma1}, the result holds if $\theta_i \neq 0$, hence we focus on the case where $\theta_i = 0$.
Denote 
$\phi = \phi_{\mathtt{ReLU}}$, and $\mathcal{P}_i \coloneqq \{ p: p \ni i\}$ the set of all paths containing the edge $i$. 

\textbf{1st step: We show that $\rank(\partial  {\phi} (\x)) \geq  \rank(\partial  {\phi} (\mathbf{1}_D)) \stackrel{\text{\Cref{prop:indag}}}{=} D-m$.}
As a preliminary, observe that $\phi_p(\x) = 0$ if, and only if, $p \in \mathcal{P}_i$, and that, denoting $S \coloneqq \partial \phi(\mathbf{1}_D)$, we have
\[
\frac{\partial {\phi}_{p}}{\partial \theta_j}(\x) =
\begin{cases}
0 \quad &p \in \mathcal{P}_i, j \neq i, \\
\phi_{p \backslash \{i \} } (\x) &p \in \mathcal{P}_i, j = i,  \\
{\phi_p } (\x)  S_{p, j} \x_{j}^{-1} &p \notin \mathcal{P}_i, j \neq i,\\
0 \quad &p \notin \mathcal{P}_i, j = i.
\end{cases}
\]
Thus one can write by blocks:
\begin{equation} \label{eq:blocks}
\partial \phi(\x) = \begin{pmatrix}
  \mathbf{0} & \mathbf{u} \\
  {D_{\phi'(\theta')}} (\x') S' {D'}_{\x'}^{-1} & \mathbf{0},
\end{pmatrix}
\end{equation}
where $\mathbf{u} \coloneqq  (\phi_{p \backslash \{ i \}})_{p \in \mathcal{P}_i}$ has no coordinate equal to zero, $\theta' \coloneqq (\theta_j)_{j \neq i}$, $D_{\phi'(\x')} \coloneqq \diag( \phi_{{\mathcal{P}_i^c}}(\x'))$, $D_{\x'}\coloneqq \diag(\x')$ is invertible, and $S' \coloneqq S[\mathcal{P}_i^c, \{ i \}^c]$. Thus, we have
\[
\rank(\partial\phi(\x)) = 1 + \rank(  D_{\phi'(\x')} S' D_{\x'}^{-1}) = 1 + \rank(S').
\]

Let us now consider the smaller DAG $G'_i$ obtained by removing edge $i$ from the graph $G$.
Since $G$ is prunable at $i$, 
by \Cref{lem:prunable} the set $\mathcal{P}_i^c$ is exactly the set of all paths of $G'_i$, so that $S' = \partial \phi'(\mathbf{1}_{D'})$ with $\phi'$ the path-lifting associated to $G'_i$, which is a DAG with $D' = D-1$ parameters and $m'=m$ hidden neurons.
By \Cref{prop:indag} we obtain:
\begin{align*}
\rank(\partial  \phi(\x))  &= 1 + \rank(S') \geq 1 + (D-1) - m = D-m \stackrel{\text{\Cref{prop:indag}}}{=} \rank(\partial  \phi(\mathbf{1}_D)).
\intertext{\textbf{2d step: We now show that $\rank(\partial  {\phi} (\x)) \leq  \rank(\partial  {\phi} (\mathbf{1}_D))+1$.} Indeed}
 \rank(\partial  \phi (\mathbf{1}_D)) 
 &=: \rank(S) \geq \rank( D_{\phi(\x)} S )
 \stackrel{\text{\Cref{eq:diagphi}}}{=} \rank(\partial \phi(\x)  D_\x ) \geq \rank(\partial \phi (\x)) - 1.
\end{align*}
where in the last inequality we used again \Cref{eq:blocks}, recalling that $\x_i$ is the only coordinate of $\x$. 

\textbf{3rd step: Conclusion.}
Using the two first steps, 
one has $\rank(\partial \phi (\x))  \in \{  \rank(\partial  \phi (\mathbf{1}_D)), \rank(\partial  \phi (\mathbf{1}_D)) + 1\} = \{D-m,D+1-m\}$. Assume by contradiction $\rank(\partial \phi (\x)) = D+1-m$, i.e., there exists a subset of $D+1-m$ linearly independent columns of $\partial \phi(\theta)$. Then, by the continuity of $\phi$, there exists an open neighborhood $U$ of $\x$ such that on $U$, these columns remain linearly independent, hence $\rank(\partial \phi_{\texttt{ReLU}} (\theta')) \geq D+1-m$ for every $\x' \in U$. In particular for each $\x'$ in the non-empty set $U \cap (\R\backslash \{0 \})^D$, we have $\rank(\partial \phi (\x')) \geq D+1-m$, which contradicts \Cref{lemma:lemma1}. This concludes the proof. 
\end{proof}

\begin{corollary} \label{coro:1phi}
Consider a DAG ReLU architecture with $D$ parameters that is prunable everywhere.
For any $\x$ with at most one zero component, one has  $\rank(\partial \phi_{\texttt{ReLU}}(\x)) = \rank(\partial \phi_{\texttt{ReLU}}(\mathbf{1}_D)) = D-m$, with $m$ the number of hidden neurons.
\end{corollary}
\begin{proof}
By \Cref{prop:indag} we have $\rank(\partial \phi_{\texttt{ReLU}}(\mathbf{1}_D))= D-m$ with $m$ the number of hidden neurons and $D$ the number of parameters. 
Since $\theta$ has at most one zero component and since the underlying graph is prunable at edge $i$ (as it is everywhere by assumption), we conclude using~\Cref{coro:1phidagatedge}.
\end{proof}

In particular, for an MLP architecture, this corollary reads:
\begin{corollary} \label{coro:1phimlp}
Consider an MLP architecture with $D$ parameters where 
every layer (including input and output ones) contains at least two neurons. 
For any $\x$ with at most one zero component, one has  $\rank(\partial \phi_{\texttt{ReLU}}(\x)) = \rank(\partial \phi_{\texttt{ReLU}}(\mathbf{1}_D)) = D-m$, with $m$ the number of hidden neurons.
\end{corollary}
\begin{proof}
By \Cref{prop:indag} we have $\rank(\partial \phi_{\texttt{ReLU}}(\mathbf{1}_D))= D-m$ with $m$ the number of hidden neurons and $D$ the number of parameters. 
Since $\theta$ has at most one zero component, there is an edge $i$ in the graph $G$ underlying the MLP architecture such that $\theta_j \neq 0$ for each $j \neq i$. Moreover, since we focus on an MLP, by \Cref{lem:caractprunableeverywhereMLP} and our assumption on the layer widths, the underlying graph is prunable everywhere, hence it is prunable at edge $i$. 
We conclude using~\Cref{coro:1phidagatedge}.
\end{proof}

\subsubsection{Proof of \Cref{thm:MLP_Frobpropertybis}}

We can now establish \Cref{thm:MLP_Frobpropertybis}. In fact, we prove a more general version of \Cref{thm:MLP_Frobpropertybis}:

\begin{theorem} \label{thm:MLP_Frobpropertybisdag}
{Consider a DAG ReLU architecture that is prunable everywhere, and denote $\Theta \subseteq \RD$ the set of all parameters with at most one zero component. On this set, which is open, dense, and with a single connected component, the} parameterization $\phi_{\mathtt{ReLU}}$ of \cite{gonon_path-norm_2023}
 satisfies the Frobenius property.
\end{theorem}

\begin{proof}
It is each to check that $\Theta$ is an dense open set with a single connected component. Moreover for any $\x \in \Theta$, denoting $\phi = \phi_{\mathtt{ReLU}}$, we have
\begin{align*}
\vdim \lie( \Wfspace{}_{\phi})(\x)  \geq \vdim ( \Wfspace{}_{\phi}(\x))  =  \rank(\partial \phi (\x))  \stackrel{\text{\Cref{coro:1phi}}}{=} \rank(\partial  {\phi} (\mathbf{1}_D))=:d.
\end{align*}
To conclude we need to prove that this is indeed an equality.
Assume by contradiction that there exists a parameter $\x_1 \in \Theta$ such that $\vdim \lie( \Wfspace{}_{\phi})(\x_1) > d$. Then, by standard continuity arguments, this will remain the case on an open neighborhood $U$ of $\x_1$, which is absurd by \Cref{thm:MLP_Frobproperty} as the Frobenius propery holds on $(\R\setminus\{0\})^D$, a dense open subset of $\Theta$.
\end{proof}

\subsection{Proof of \Cref{coro:relu}} \label{app:propmaximalpp}

We prove a more general version of \Cref{coro:relu}.

\begin{corollary}
{For a general DAG ReLU network (resp. for a DAG ReLU network prunable everywhere), every element $\x$ of the set $(\R\setminus\{0\})^D$ (resp. of the set $\Theta$ of \Cref{thm:MLP_Frobpropertybis}/\Cref{thm:MLP_Frobpropertybisdag}) admits an open neighborhood $U$ in this set on which}
the intrinsic recoverability property (and thus the intrinsic dynamic property 
with respect to $\phi_\mathtt{ReLU}$) holds.
\end{corollary}


\begin{proof}
Let us consider a DAG ReLU network (resp.
a DAG ReLU architecture prunable everywhere).
{We can apply \Cref{lemma:lemma1} (resp. \Cref{coro:1phi}) to obtain an open dense set $\Theta$ (resp. with a single connected component)} on which the dimension of the trace of $\Wfspace{}_{\phi}$ remains constant. By \Cref{thm:MLP_Frobproperty} (resp.
\Cref{thm:MLP_Frobpropertybisdag}), $\phi_\mathtt{ReLU}$ satisfies the Frobenius property on $\Theta$. By \Cref{prop:kernelstrongest} every $\theta_0 \in \Theta$ admits a neighborhood $U$ on which it satisfies the intrinsic recoverability property with respect to $\phi_\mathtt{ReLU}$. By \Cref{lemma:secondeimplication} such a parameter $\x_0$ also satisfies the intrinsic metric property on $U$ with respect to $\phi_\mathtt{ReLU}$.
\end{proof}

\subsection{Proof of \Cref{coro:truenumber}.} \label{app:corotruenumber}

We prove a more general version of \Cref{coro:truenumber}. 

\begin{corollary} \label{coro:truenumberdag}
Let us consider a DAG ReLU architecture that is prunable everywhere, and with $m$ hidden neurons. Then the $m$ independent conservation laws for $\phi_{\texttt{ReLU}}$ given in \Cref{prop:knownconservation} are exhaustive on the set $\Theta$ from \Cref{thm:MLP_Frobpropertybisdag}: there is no more conservation laws than these ones.
\end{corollary}

 \begin{proof}
 On $\Theta$, we know that the dimension of $\lie( \Wfspace{}_{\phi_{\texttt{ReLU}}})(\x)$ is constant and equal to $\range(\partial  {\phi_{\texttt{ReLU}}} (\mathbf{1}_D)) = D-m$ by \Cref{thm:MLP_Frobpropertybis} and \Cref{coro:1phi}. Then by \cite[Theorem 3.3]{marcotte2023abide}, there exists exactly $m$ independent conservation laws locally on $\Theta$. Thus the $m$ independent conservation laws for $\phi_{\texttt{ReLU}}$ on $\Theta$ given in \Cref{lemma:daglaws} are exhaustive.
 \end{proof}

\section{Proof of \Cref{prop:threelayerMLP}} \label{app:propthreelayerMLP}

\begin{lemma} \label{lemma:thompson}
Let $Y \in \mathbb{R}_{>0}^{n \times m}$. 
Then there exists a unique pair $(\boldsymbol{\alpha}, \boldsymbol \beta) =: \Gamma(Y)$ of vectors $\boldsymbol{\alpha} \in \R_{>0}^n$, $\boldsymbol{\beta} \in \R_{>0}^m$ such that
\[
\boldsymbol{\alpha}^2 = Y \diag(\boldsymbol{\beta})^{-1}\mathbf{1}_m, \quad \text{and} \quad {\boldsymbol{\beta}}^2 = Y^\top\diag(\boldsymbol{\alpha})^{-1}\mathbf{1}_n.
\]
\end{lemma}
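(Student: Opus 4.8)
The plan is to recast the two coupled equations as a single fixed-point problem on the open positive orthant and to apply the Banach fixed-point theorem with respect to Thompson's metric. For $\mathbf{x} = (\boldsymbol a,\boldsymbol b) \in \R_{>0}^n \times \R_{>0}^m$, define $S(\mathbf{x}) := (\tilde{\boldsymbol a},\tilde{\boldsymbol b})$ by $\tilde a_i := \bigl(\sum_{j} Y_{ij}/b_j\bigr)^{1/2}$ and $\tilde b_j := \bigl(\sum_{i} Y_{ij}/a_i\bigr)^{1/2}$. Since every entry of $Y$ is strictly positive, $S$ maps $\R_{>0}^{n+m}$ into itself, and a pair $(\boldsymbol\alpha,\boldsymbol\beta) \in \R_{>0}^n\times\R_{>0}^m$ solves the system of the lemma if and only if it is a fixed point of $S$: indeed, squaring the two identities $\boldsymbol\alpha = \tilde{\boldsymbol\alpha}$, $\boldsymbol\beta = \tilde{\boldsymbol\beta}$ componentwise yields exactly $\boldsymbol\alpha^2 = Y\diag(\boldsymbol\beta)^{-1}\mathbf 1_m$ and $\boldsymbol\beta^2 = Y^\top\diag(\boldsymbol\alpha)^{-1}\mathbf 1_n$, and conversely taking positive square roots reverses the implication.

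Next I would record two elementary properties of $S$. First, $S$ is \emph{order reversing}: if $\mathbf{x}\le\mathbf{x}'$ componentwise then $1/b_j \ge 1/b_j'$ and $1/a_i\ge 1/a_i'$, so using $Y_{ij}>0$ we get $S(\mathbf{x}) \ge S(\mathbf{x}')$. Second, $S$ is \emph{homogeneous of degree $-1/2$}: directly from the formula, $S(t\mathbf{x}) = t^{-1/2}S(\mathbf{x})$ for every $t>0$. Equip $\R_{>0}^{n+m}$ with the Thompson metric $d_T(\mathbf{x},\mathbf{y}) := \|\log\mathbf{x}-\log\mathbf{y}\|_\infty$ (componentwise logarithm); since $\log$ is an isometry onto $(\R^{n+m},\|\cdot\|_\infty)$, the space $(\R_{>0}^{n+m}, d_T)$ is complete. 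These two properties combine into a contraction bound: if $r = d_T(\mathbf{x},\mathbf{y})$ then $e^{-r}\mathbf{y}\le\mathbf{x}\le e^{r}\mathbf{y}$ componentwise, so applying $S$ (order reversing) and then homogeneity gives $e^{-r/2}S(\mathbf{y}) \le S(\mathbf{x}) \le e^{r/2}S(\mathbf{y})$, i.e. $d_T(S(\mathbf{x}),S(\mathbf{y})) \le \tfrac12\, d_T(\mathbf{x},\mathbf{y})$.

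Hence $S$ is a $\tfrac12$-contraction of the complete metric space $(\R_{>0}^{n+m},d_T)$, and the Banach fixed-point theorem provides a unique fixed point $(\boldsymbol\alpha,\boldsymbol\beta)=:\Gamma(Y)$, which by the equivalence of the first paragraph is the unique positive solution of the two equations, proving the lemma. I do not expect a genuine obstacle here: the only points requiring care are bookkeeping — checking that the squared fixed-point identities match the stated equations verbatim — and observing that strict positivity of all entries of $Y$ is precisely what keeps $S$ valued in the \emph{open} orthant, so that $d_T$ is available on the whole domain. As an alternative I would only note in passing that one may instead eliminate $\boldsymbol\beta$ via $\beta_j = (\sum_i Y_{ij}/\alpha_i)^{1/2}$ and verify that the resulting self-map of $\R_{>0}^n$ is monotone and homogeneous of degree $1/4$, hence a $\tfrac14$-contraction in the Thompson metric, leading to the same conclusion.
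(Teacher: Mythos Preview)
Your proof is correct and follows essentially the same approach as the paper: both set up a fixed-point problem on the positive orthant, use the Thompson metric, and conclude via the Banach fixed-point theorem. The only difference is packaging---you work on the product space $\R_{>0}^{n+m}$ with a single order-reversing, $(-\tfrac12)$-homogeneous map (a $\tfrac12$-contraction), whereas the paper eliminates one variable and composes two maps to get a $\tfrac14$-contraction on a single factor (the alternative you mention in your final sentence); your direct order/homogeneity argument is in fact slightly more self-contained than the paper's appeal to the Birkhoff contraction theorem.
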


\begin{proof}
Define the mappings
\[
S(\boldsymbol \beta) := \sqrt{Y\diag(\boldsymbol{\beta})^{-1}\mathbf{1}_m}, \quad T(\boldsymbol \alpha) := \sqrt{Y^\top\diag(\boldsymbol{\alpha})^{-1}\mathbf{1}_n}.
\]
Let $D_n(a,a') := \|\log(a/a')\|_\infty$ denote the Thompson metric on $(\R_{>0})^n$
(and similarly $D_m$ on $(\R_{>0})^m$), where $\mathbb{R}_+^*$ is the set of positive real numbers. It is known that $\left((\R_{>0})^n, D_n \right)$ (resp. $\left((\R_{>0})^m, D_m \right)$) is a complete metric space.
The linear operator $Y$ is 1-Lipschitz from $\left((\R_{>0})^m, D_m \right)$ to $\left((\R_{>0})^n, D_n \right)$, according to the Birkhoff contraction theorem. Moreover, the square root function is $\frac{1}{2}$-Lipschitz in this metric. Hence, the composition $S \circ T$ is $\frac{1}{4}$-contracting with respect to $D_n$.
By the Banach fixed-point theorem, there exists a unique fixed point of $S \circ T$, which implies the existence and uniqueness of the pair $(\boldsymbol \alpha, \boldsymbol \beta)$ solving the original equations.
\end{proof} 

\propthreelayerMLP*

\begin{proof}
Given the general definition of $\phi_{\mathtt{ReLU}}$ (see e.g. \cite{neyshabur_norm-based_2015,stock_embedding_2022,gonon_path-norm_2023}), we study the factorization map
\[
\phi(u,V,w) \;:=
\mathrm{diag}(u)\;V\;\mathrm{diag}(w),
\]
where
\(u\in\mathbb R^n,\;V\in\mathbb R^{n\times m},\;w\in\mathbb R^m\) with $u_i, w_j \neq 0$.

\textbf{Step 1: Gradient flow in parameters.}

Let
\(f:\mathbb R^{n\times m}\to\mathbb R\)
and define the loss
\(\ell(u,V,w)=f\bigl(\phi(u,V,w)\bigr)\).
Writing
\(Z=\phi(u,V,w)\) and its gradient \(G=\nabla f(Z)\), the gradient‑flow ODE \eqref{eq:GF}
\(\dot u=-\partial_u \ell,\;\dot V=-\partial_V \ell,\;\dot w=-\partial_w \ell\)
is:
\begin{align*}
\dot u &= -\mathrm{Diag}\bigl(G\,\mathrm{diag}(w)\,V^\top\bigr),\\
\dot V &= -\mathrm{diag}(u)\;G\;\mathrm{diag}(w),\\
\dot w &= -\mathrm{Diag}\bigl(V^\top\,\mathrm{diag}(u)\,G\bigr),
\end{align*}

\textbf{Step 2: Induced flow on \(z\).}

Since
\(Z=\mathrm{diag}(u)V\mathrm{diag}(w)\),
we have
\[
\dot Z = \mathrm{diag}(\dot u)V\mathrm{diag}(w)
+ \mathrm{diag}(u)\dot V\mathrm{diag}(w)
+ \mathrm{diag}(u)V\mathrm{diag}(\dot w).
\]
Substituting the above yields
\[
\dot Z = -\mathrm{ddiag}\bigl(G\,\mathrm{diag}(w)V^\top\bigr)V\mathrm{diag}(w)
- \mathrm{diag}(u^2)\,G\,\mathrm{diag}(w^2)
- \mathrm{diag}(u)V\mathrm{ddiag}\bigl(V^\top\mathrm{diag}(u)G\bigr),
\]
where we set
\(\mathrm{ddiag}(M)=\mathrm{diag}\bigl(\mathrm{Diag}(M)\bigr)\).

Eliminating \(V\) via
\(V=\mathrm{diag}(u)^{-1} Z \mathrm{diag}(w)^{-1}\) (possible as $u_i, w_j \neq 0$ on $\Theta$)
and using
\(\mathrm{ddiag}(M\,\mathrm{diag}(a))=\mathrm{ddiag}(M)\,\mathrm{diag}(a)\)
one obtains
\[
\dot Z = -\mathrm{ddiag}(G z^\top)\,\mathrm{diag}(u^{-2})\,Z
- \mathrm{diag}(u^2)\,G\,\mathrm{diag}(w^2)
- Z\,\mathrm{diag}(w^{-2})\,\mathrm{ddiag}(Z^\top G).
\]

Moreover by \Cref{coro:relu} there exists conservation laws $\mathbf{h}$ and a function $\Gamma$  such that $\x = (u,V,w) =  \Gamma(\phi(\x),\mathbf{h}(\x)) = \Gamma(Z,\mathbf{h}(\x))$ so that $\boldsymbol{\alpha} := u^2$ and $\boldsymbol{\beta} := w^2$ (entrywise multiplication) can both be expressed as functions $\boldsymbol{\alpha}(Z,\mathbf{h}(\x))$
and $\boldsymbol{\beta}(Z,\mathbf{h}(\x))$. Below we explicit such conservation laws and characterize properties of $\boldsymbol{\alpha}$ and $\boldsymbol{\beta}$. 

\textbf{Step 3: Conserved quantities and elimination of \(\boldsymbol{\alpha},\boldsymbol{\beta}\).}

The flow \eqref{eq:GF} preserves the following \(n+m\) conservation laws:
\[
\forall i=1,\dots,n: \quad u_i^2 - \sum_{j=1}^m V_{ij}^2 = \lambda_i,
\]
\[
\forall j=1,\dots,m: \quad w_j^2 - \sum_{i=1}^n V_{ij}^2 = \mu_j,
\]
for given constants \(\boldsymbol\lambda\in\mathbb R^n\) and \(\boldsymbol\mu\in\mathbb R^m\) determined by $\x_0$.  Since
\(V_{ij}=Z_{ij}/(u_i w_j)\),
then \((u^2,w^2)>0\) is a solution of the coupled system
\begin{align*}
u^2:~~u_i^4 - \sum_{j=1}^m \frac{Z_{ij}^2}{w_j^2} - \lambda_i\,u_i^2 &=0,\\
w^2:~~w_j^4 - \sum_{i=1}^n \frac{Z_{ij}^2}{u_i^2} - \mu_j\,w_j^2 &=0.
\end{align*}
In vector‑matrix form (with entrywise squaring):
\[
\boldsymbol{\alpha}= u^2,\quad \boldsymbol{\beta}= w^2,
\]
\[
\boldsymbol{\alpha}^2 - |Z|^2\,\diag(\boldsymbol{\beta})^{-1}\mathbf{1}_m - \boldsymbol\lambda\odot\boldsymbol{\alpha} =0,\quad
\boldsymbol{\beta}^2 - (|Z|^2)^\top\,\diag(\boldsymbol{\alpha})^{-1}\mathbf{1}_n - \boldsymbol\mu\odot\boldsymbol{\beta} =0,
\]
where \(|Z|^2\) is the elementwise square of $Z$ and \(\odot\) is the element‑wise product.

\textbf{Special case \(\boldsymbol\lambda=0,\boldsymbol\mu=0\).}

Then the system reduces to
\eq{ \label{eq:ptfixebalanced}
\boldsymbol{\alpha}^2 = (|Z|^2) \,\diag(\boldsymbol{\beta})^{-1}\mathbf{1}_m,
\quad
\boldsymbol{\beta}^2 = (|Z|^2)^\top\,\diag(\boldsymbol{\alpha})^{-1}\mathbf{1}_n.
}

By \Cref{lemma:thompson} with $Y = |Z|^2$ (possible as $Z_{ij} = u_iV_{ij}w_j\neq 0$ since 
$\x \in \Theta$), the exists a unique solution $(\boldsymbol{\alpha}, \boldsymbol{\beta}) > 0$ of the system \eqref{eq:ptfixebalanced}.

In the scalar case \((n=m=1)\) with \(|Z|^2 = z^2\) a scalar, the solution is $
\boldsymbol{\alpha}=\boldsymbol{\beta}=(|Z|^2)^{1/3} = |z|^{2/3}.$
\end{proof}

\section{Proof of \Cref{thm:stage2layer}.} \label{app:thmstage2layer}

\thmstagetwolayer*

\begin{proof}

\textbf{Step 1: rank of $Z$.}

As $r \leq \text{min}(n ,m) $ and as both $U_{t=0} \in\mathbb{R}^{n\times r} \quad$ and $V_{t=0}\in\mathbb{R}^{m\times r}$ have full rank equal to $r$, it remains the case in a neighborhood $\Omega$ of $\theta_0 \coloneqq (U_{t=0}, V_{t=0})$, and it is also the case for $Z = U V^\top$.

\textbf{Step 2: A quadratic equation for \boldmath$P:=UU^{\top}$.}

Compute
\[
ZZ^{\top}
  = U V^{\top} V U^{\top}
  = U\bigl(V^{\top}V\bigr)U^{\top}.
\]
With the hypothesis $U^{\top}U - V^{\top}V = \lambda \mId{r}$ we get
$V^{\top}V = U^{\top}U - \lambda \mId{r}$, hence
\[
ZZ^{\top}
  = U\bigl(U^{\top}U - \lambda \mId{r}\bigr)U^{\top}
  = U\!U^{\top}U\!U^{\top} \;-\; \lambda UU^{\top}
  = P^{2} - \lambda P.
\]
Thus $P$ satisfies the quadratic matrix equation
\begin{equation}\label{eq:quad_P}
P^{2} - \lambda P - ZZ^{\top} \;=\; 0.
\end{equation}

\textbf{Step 3: Simultaneous diagonalisation and scalar reduction.}

Write $Z' := ZZ^{\top}$.  Because
\[
P = U U^\top, \qquad
Z' = U(V^{\top}V)U^{\top},
\]
and $U^{\top}U$ differs from $V^{\top}V$ only by a scalar multiple of
the identity, we have $(U^{\top}U)(V^{\top}V) = (V^{\top}V)(U^{\top}U)$.
Encapsulating by $U$ and $U^{\top}$ yields $PZ' = Z'P$.
Hence $P$ and $Z'$ are
\emph{simultaneously diagonalisable}: there exists an orthogonal matrix
$W\in\mathbb R^{n\times n}$ such that
\[
P = W\operatorname{diag}(\sigma_1,\dots,\sigma_n)W^{\top},
\qquad
Z' = W\operatorname{diag}(\mu_1,\dots,\mu_n)W^{\top},
\]
with $\sigma_i,\mu_i\ge 0$ and where we assume $\sigma_1 \geq \cdots \geq \sigma_n$ and $\mu_1 \geq \cdots \geq \mu_n$.

In the common eigenbasis, \eqref{eq:quad_P} becomes for every $i$
\[
\sigma_i^{2} - \lambda\sigma_i - \mu_i = 0 .
\]
Its two roots are
\[
\sigma_i^{\pm}
   = \frac{\lambda \pm \sqrt{\lambda^{2}+4\mu_i}}{2}.
\]
By the first step, one already has that on $\Omega$, for any $i > r$: $\sigma_i = \mu_i = 0$ so that $\sigma_i = \sigma_i^-$, and that for any $i \leq r$, $\sigma_i >0$ and $\mu_i >0$. Thus  $\sqrt{\lambda^{2}+4\mu_i}>|\lambda|$, the ``$-$'' root is
negative, while $P=UU^{\top}$ is positive‑semidefinite.  Therefore
$\sigma_i = \sigma_i^{+}$ for $i \leq r$.
Let us define $\Pi_{Z } \coloneqq W  \operatorname{diag}(\underbrace{1, \cdots, 1}_{\times r}, 0, \cdots, 0)W^\top$ the orthogonal projector on $\range(Z ) = \range(Z Z^\top)$. It follows that:
\begin{equation} \label{eq:UU}
 P = \Pi_{Z} \times \left[\frac{\lambda}{2} \mId{n}
    + \frac12\sqrt{\lambda^{2}\mId{n} + 4\,ZZ^\top}\right].   
\end{equation}

\textbf{Step 4: The expression for \boldmath$Q:=VV^{\top}$.}
A fully analogous computation gives
\[
Z^{\top}Z
  = V U^{\top} U V^{\top}
  = V\bigl(V^{\top}V + \lambda \mId{r}\bigr)V^{\top}
  = Q^{2} + \lambda Q,
\]
so that $Q$ satisfies
\begin{equation}\label{eq:quad_Q}
Q^{2} + \lambda Q - Z^{\top}Z \;=\; 0 .
\end{equation}
Because $Q$ and $T := Z^{\top}Z$ commute, they share an
orthonormal eigenbasis in which \eqref{eq:quad_Q} reduces to
\[
\tau_i^{2} + \lambda\tau_i - \mu_i = 0
\quad(\tau_i\!\ge\!0,\;\mu_i\!\ge\!0).
\]
By the first step, one already has that on $\Omega$, for any $i > r$: $\tau_i = \mu_i = 0$ and that for any $i \leq r$, $\tau_i \neq 0$ and $\mu_i \neq 0$. For $i \leq r$ the positive root (as $\sqrt{\lambda^{2}+4\mu_i}>|\lambda|$) is
\[
\tau_i = \frac{-\lambda + \sqrt{\lambda^{2}+4\mu_i}}{2},
\]
so that
\begin{equation} \label{eq:VV}
    Q
  =  \Pi_{Z^\top Z} \times \left[ -\frac{\lambda}{2}\mId{m}
    + \frac12\sqrt{\lambda^{2}\mId{m} + 4\,T} \right],
\end{equation}
with $T=Z^{\top}Z$ and where $\Pi_{ Z^\top Z}$ is the orthogonal projector on $\range(Z^\top  Z)$.

\textbf{Step 5: Uniqueness and conclusion}
In both cases
\eqref{eq:UU}–\eqref{eq:VV} are the only solutions consistent with
$UU^{\top}\succeq 0$ and $VV^{\top}\succeq 0$ and with $\rank(Z) = r$ on $\Omega$.
Finally one has on $\Omega$:
\begin{align*}
\overset{.}{Z} &= - UU^\top \nabla f(Z) - \nabla f(Z) V V^\top \\
&= -  \Pi_{Z }  \left[\frac{\lambda}{2} \mId{n}
    + \frac12\sqrt{\lambda^{2}\mId{n} + 4\,ZZ^\top}\right] \nabla f(X) - \nabla f(X) \Pi_{Z^\top Z} \left[ -\frac{\lambda}{2} \mId{m}
    + \frac12\sqrt{\lambda^{2}\mId{m} + 4\, Z^\top Z} \right],
\end{align*}
which concludes the proof.
\end{proof}

\section{Proof of \Cref{thm:notinclusion}} \label{app:thmnotinclusion}
We first show the following lemma:

\begin{lemma} \label{lemma:simple}
If $S \neq \lambda \mId{r}$ with $S$ a real symmetric matrix, then there exists a skew-symmetric matrix $A$ such that $[ A, S] \neq 0$. 
\end{lemma}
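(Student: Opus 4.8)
The plan is to prove the contrapositive in a constructive way, using the spectral theorem for the symmetric matrix $S$. First I would dispose of the trivial case $r=1$: there the hypothesis $S\neq\lambda\mId{r}$ is vacuous (every $1\times1$ matrix is a scalar), so I may assume $r\ge 2$, which guarantees that nonzero skew-symmetric matrices exist. Then, since $S$ is real symmetric, I would write $S = W D W^{\top}$ with $W\in\mathbb{R}^{r\times r}$ orthogonal and $D = \mathrm{diag}(d_1,\dots,d_r)$. The assumption that $S$ is not a scalar multiple of the identity is equivalent to $D$ being non-scalar, i.e.\ to the existence of two indices $i\neq j$ with $d_i\neq d_j$; this is the only structural fact I need.

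Next I would exhibit an explicit witness $A$. Set $E := e_i e_j^{\top} - e_j e_i^{\top}$, which is skew-symmetric, and let $A := W E W^{\top}$; since conjugation by an orthogonal matrix preserves skew-symmetry, $A$ is skew-symmetric. Because $[A,S] = W[E,D]W^{\top}$, it suffices to check $[E,D]\neq 0$. A one-line computation gives $ED = d_j\, e_i e_j^{\top} - d_i\, e_j e_i^{\top}$ and $DE = d_i\, e_i e_j^{\top} - d_j\, e_j e_i^{\top}$, hence $[E,D] = (d_j - d_i)\bigl(e_i e_j^{\top} + e_j e_i^{\top}\bigr)$, which is nonzero since $d_i\neq d_j$ and $i\neq j$. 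Therefore $[A,S]\neq 0$, which establishes the lemma. (An alternative, more abstract route would be to invoke the fact that for $r\ge 2$ the skew-symmetric matrices generate $M_r(\mathbb{R})$ as an associative algebra, so their common centralizer is the center $\mathbb{R}\,\mId{r}$; but the explicit witness above is shorter and self-contained.)

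There is no real obstacle here: the argument is elementary. The only two points deserving a word of care are the degenerate case $r=1$ (where the statement holds vacuously) and the remark that conjugating the elementary skew-symmetric matrix $E$, built in the eigenbasis of $S$, back by the orthogonal change of basis $W$ still yields a skew-symmetric matrix, so the counterexample transports to the original coordinates.
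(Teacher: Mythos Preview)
Your proof is correct and essentially identical to the paper's: the paper picks two eigenvectors $x,y$ of $S$ with distinct eigenvalues and sets $A = xy^\top - yx^\top$, which is exactly your $A = WEW^\top$ once one identifies $x = We_i$, $y = We_j$; the commutator computation is the same. The only cosmetic difference is that you work in the eigenbasis and conjugate back, while the paper computes directly with the eigenvectors.
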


\begin{proof}
Let us assume $S \neq \lambda \mId{r}$ (in particular $r > 1$ necessarily). Thus there are at least two distinct eigenvalues of $S$ $\delta$ and $\mu$ associated to the eigenvectors $x$ and $y$.
Then $A \coloneqq xy^\top - y x^\top \neq 0$ is a skew-symmetric matrix that satisfies:
\begin{align*}
[A, S] &= (xy^\top - y x^\top) S - S (xy^\top - y x^\top) \\
&= x (S y)^\top - y (Sx)^\top - (Sx) y^\top + (Sy) x^\top \text{ as $S$ is symmetric} \\
&= \mu x y^\top - \delta y x^\top - \delta x y^\top + \mu y x^\top \\
&= (\underbrace{\mu - \delta}_{\neq 0}) ( x y^\top + y x^\top) \neq 0,
\end{align*}
as $\mu \neq \delta$, 
and which concludes the proof.
\end{proof}

\reb{
We introduce a slightly stronger version of \Cref{def:weaker} involving {\em piecewise}  $ \mathcal{C}^2$ functions $f$.
\begin{definition}[Strong intrinsic dynamic property] \label{def:piecewisedynamic}
$\x_0$ verifies the {\em strong intrinsic dynamic property} with respect to $\phi$ on some open set $\Omega \ni \theta_0$, if there is 
$K_{\theta_0}: \R^d \to \R^{d \times d}$
such that: if $\theta(\cdot) \in \mathcal{C}^0$ satisfies $\theta(0)=\x_0$ and $\dot{\theta}(t)=-\nabla \ell(\x(t))$ with $\ell = f \circ \phi$, where $f$ is {\em piecewise $\mathcal{C}^2$}, whenever $\theta(t) \in \Omega$ and 
$f$ is differentiable at $\phi(\theta(t))$,
then $M (\x(t)) = K_{\theta_0}(\phi(\x(t)))$ holds for each $t$ such that $\x(t) \in \Omega$.
\end{definition}}

\thmnotinclusion*

\begin{proof}
{\textbf{A. First we show that such an initialization does not satisfy the intrinsic \textit{metric} property.}}
{In light of the necessary condition of \Cref{prop:inclusionnoyaux} we will first characterize $\mathrm{ker} \partial  M(\theta)$ for any $\x= (U,V)$. Then, with $\mathbf{h}(\x) = U^\top U-V^\top V$ and $\phi(\x) = \phi_{\mathtt{Lin}}(\x) = UV^\top$, we will exhibit a subspace $\mathcal{V}$ of  $\mathrm{ker} \partial \mathbf{h} (\theta) \cap \mathrm{ker} \partial \phi(\x)$ such that $\mathcal{V} \not\subset
\mathrm{ker} \partial  M(\theta)$. We will then conclude using the needed calculus and~ \Cref{prop:inclusionnoyaux}.}

\textbf{Step 1: Characterization of $\mathrm{ker} \partial  M(\theta)$ {for any $\x = (U,V)$}.}

By \eqref{eq:grad_sum} (with $L=2$, $U_2=U,U_1=V^\top$), one can write $M(\theta)\mathrm{vec}(X)  = \mathrm{vec}(UU^\top X+XVV^\top)$ for any matrix $X \in \R^{n \times m}$. Using the Kronecker product and the fact that $(A \otimes B) \mathrm{vec}(X) = \mathrm{vec}(BX A^\top)$, this expression can be rewritten as:
\eq{ \label{eq:expressionofM}
M(\theta) = \mId{m} \otimes (UU^\top) + (VV^\top) \otimes \mId{n}.
}
Thus differentiating \eqref{eq:expressionofM} yields that for any $(H,K)$ of the same dimensions as $(U,V)$ we have 
	$\partial M(\theta) . (H,K) = \mId{m} \otimes (UH^\top+HU^\top) +
	{(VK^\top+KV^\top)}
	\otimes \mId{n} $, and thus: $(H, K) \in \mathrm{ker} \partial M (\x)$ if and only if 
	$\mId{m} \otimes (UH^\top + HU^\top) = - 
	{(VK^\top+KV^\top)}
	\otimes \mId{n}$.
We now show that
\eq{ \label{eq:firstkerdiffM}
\mathrm{ker} \partial M (\x) = \left\{
(H, K): \exists \mu \in \R,   UH^\top+HU^\top = \mu \mId{n} \, \text{and} \, VK^\top + KV^\top = - \mu \mId{m}
\right\}.
}
The converse inclusion is clear. We now prove the direct inclusion. Let us consider  $(H, K) \in \mathrm{ker} \partial  {M}(\theta)$, then one has $\mId{m} \otimes (UH^\top+HU^\top) = - 
	{(VK^\top+KV^\top)}
\otimes \mId{n}.$ Still using that $(A \otimes B) \mathrm{vec}(X) = \mathrm{vec}(BX A^\top)$ and denoting $U' \coloneqq UH^\top+HU^\top$ and $V'\coloneqq 
	{(VK^\top+KV^\top)}
$, this exactly means that for any matrix $X \in \R^{n \times m}$ one has $U' X = - X V'^\top$.  To conclude, we only need to show that this implies the existence of $\mu,{\mu'} \in \R$ such that  $U' = \mu \mId{n} \, \text{and} \, V' = - 
{\mu'}\mId{m}$, {since the equality $U'X=-XV'^\top$ then also implies $\mu=\mu'$}. 
{This is immediate if $V'=0$ since in this case $U'$ must also be equal to zero as $U'X = 0$ for every $X$. Assume now that $V'$ is non-zero so there exists a vector $v$ such that $V'^\top v \neq 0$. Considering any such $v$ and any vector $u$, and setting $X = uv^\top$, we have}
\[
(U'u)v^\top = U'X = -XV' = -u(V'^\top v)^\top
\]
{hence $U'u$ is colinear with $u$. Since this holds for any choice of $u$, we deduce indeed that $U'$ is proportional to $\mId{n}$. A similar reasoning yields that $V' \propto \mId{m}$.}
This concludes the proof of \eqref{eq:firstkerdiffM}.

\textbf{Step 2: Characterization of a subspace $\mathcal{V} \subseteq \mathrm{ker} \partial \mathbf{h} (\theta) \cap \mathrm{ker} \partial \phi(\x)$.}
{Since $\mathbf{h}(\x) = U^\top U-V^\top V$ and $\phi(\x) = UV^\top$ we have
\begin{align*}
    \partial h(\x).(H,K) & = U^\top H + H^\top U-V^\top K-K^\top V\\
    \partial \phi(\x).(H,K) & = UK^\top + HV^\top
\end{align*}}
and one can easily check that for any $\x$ such that $\mathbf{h}(\x)= S$ we have
\begin{align*} 
   \mathcal{V} & := \left\{ \begin{pmatrix} U \Delta \\ -V \Delta^\top \end{pmatrix} : \Delta \in \R^{r \times r},  (\Delta^\top + \Delta)\,U^\top U + U^\top U\,(\Delta + \Delta^\top) = \Delta\, S + S\,\Delta^\top \right\} \\
   &\subseteq 
    \mathrm{ker} \partial \mathbf{h}(\theta) \cap \mathrm{ker} \partial \phi(\x). 
\end{align*}
{\textbf{Step 3: Proof that $\mathcal{V} \not\subset \mathrm{ker} \partial M(\theta)$}.}
The fact that a matrix $\Delta \in \R^{r \times r}$ satisfies 
\[
(\Delta^\top + \Delta)\,U^\top U + U^\top U\,(\Delta + \Delta^\top) = \Delta\, S + S\,\Delta^\top,
\]
is equivalent to
\[
\Delta_S (2 U^\top U - S) + (2 U^\top U - S) \Delta_S = [\Delta_A, S], 
\]
with $\Delta_S$ (resp. $\Delta_A$) the symmetric (resp. skew-symmetric) part of $\Delta$ (so that $\Delta = \Delta_S + \Delta_A$).
Denote $\mathcal{S}_r$ (resp. $\mathcal{A}_r$) the set of $r\times r$ symmetric (resp. skew-symmetric) matrices and $L$ the Lyapunov operator defined by:
\begin{align*}
L: \Delta_S \in \mathcal{S}_r \mapsto L(\Delta_S) & \coloneqq 
\Delta_S (2 U^\top U - S) + (2 U^\top U - S) \Delta_S\\
& = \Delta_S (U^\top U + V^\top V ) + (U^\top U + V^\top V) \Delta_S  \in \mathcal{S}_r
\end{align*}
We obtain
\begin{align*} 
   \mathcal{V} & = \left\{ \begin{pmatrix} U (\Delta_S + \Delta_A) \\ -V (\Delta_S - \Delta_A) \end{pmatrix} : (\Delta_S, \Delta_A) \in \mathcal{S}_r \times \mathcal{A}_r,L(\Delta_S) = [\Delta_A, S] \right\}. 
\end{align*}

As $S \neq \lambda \mId{r}$, by \Cref{lemma:simple} there exists a skew-symmetric matrix $\Delta_A \in \mathcal{A}_r$ such that $[ \Delta_A, S] \neq 0$. 
As $U^\top U + V^\top V$ is positive definite 
(as either $U$ or $V$ has full column-rank)
its eigenvalues $\lambda_i > 0$ satisfy $\lambda_i + \lambda_j \neq 0$, so (see e.g. \cite{lyapunovequation}) in particular the Lyapunov operator:
$L: \mathcal{S}_r \to \mathcal{S}_r$
is invertible. {Since $[\Delta_A,S] = \Delta_A S - S \Delta_A \in \mathcal{S}_r$, we obtain that} 
there exists $\Delta_S \neq 0$ such that $L(\Delta_S) = [ \Delta_A, S]$.
This particular choice of $\Delta_S$ and $\Delta_A$ exhibits a parameter $\x' = (U \Delta, - V \Delta^\top)$ that satisfies $\x' \in  \mathcal{V} \subseteq \mathrm{ker} \partial \phi (\theta) \cap  \mathrm{ker} \partial \mathbf{h} (\theta)$. We now show that $\x' \notin  \mathrm{ker} \partial {M} (\theta)$.
We proceed by contradiction: if  $\x' \in  \mathrm{ker} \partial {M} (\theta)$ then, by \eqref{eq:firstkerdiffM}, there exists $\mu \in \R$ such that 
\(
U (\Delta^\top + \Delta) U^\top = \mu \mId{n}\quad \text{and}\quad
V (\Delta^\top + \Delta) V^\top = - \mu \mId{m}
\)
{that is to say 
\begin{align}\label{eq:PreContradiction}
2U\Delta_S U^\top = \mu \mId{n}\quad \text{and}\quad
2V \Delta_S V^\top = - \mu \mId{m}.
\end{align}}

{When $r \leq \max(m,n)$ and since $U$, $V$ are full rank, at least one of the two matrices $U$ or $V$ is full column rank $r$. Without loss of generality let us assume that $U$ is full column rank.} Then $U^\top U$ 
is invertible and we deduce that, 
 \eq{ \label{eq:expressionofDS}
 2 \Delta_S = \mu (U^\top U)^{-1}. 
 }
 Moreover if (as we indeed show below) $\range(U^\top) \cap \range(V^\top) \neq \{0\}$, then by considering $z = U^\top x = V^\top y \neq 0$ for some $x \in \R^n$ and $y \in \R^m$, one deduces from \eqref{eq:PreContradiction} that $\mu\|x\|_2^2 = 2x^\top U\Delta_S U^\top x = z^\top \Delta_Sz = 2y^\top V \Delta_S V^\top y = -\mu \|y\|_2^2$ and thus $\mu=0$. Hence $\Delta_S=0$ by \eqref{eq:expressionofDS}, contradicting $L(\Delta_S) = [\Delta_A,S] \neq 0$, which shows that $\x' \notin  \mathrm{ker} \partial {M} (\theta)$.
 
{Thus we only need to prove that one has $\range U^\top \cap \range V^\top \neq \{0\}$, and indeed:
 \begin{align*}
     \vdim (\range(U^\top) \cap \vdim (\range(V^\top)) &= \underbrace{\rank (U^\top)}_{ = \rank (U) } + \underbrace{\rank (V^\top)}_{ = \rank (V)} - \vdim(\underbrace{\range(U^\top) + \range(V^\top)}_{\range( (U^\top | \, V^\top))}) \\
     &= \underbrace{\rank (U) + \rank (V)}_{\geq \min(r, n) + \min (r, m) \geq  r+1} - \underbrace{\rank \left( \begin{pmatrix}
       U \\
       V
     \end{pmatrix} \right)}_{= r} > 0,
 \end{align*}
 }
 where we used in the last line that $r \leq \text{max}(n, m)$.

\textbf{Step 4: Conclusion.}

As both $U_{t=0}$ and $V_{t=0}$ have full rank it remains the case in a neighborhood $\ouv$ of $\x_0$. Moreover as $r \leq \text{max} (n, m)$ then one of the two matrices has full column rank on $\ouv$. In particular the vertical concatenation $\begin{pmatrix} U \\ V
\end{pmatrix}$ has full rank (equal to $r$) on $\ouv$ as $r \leq  \text{max} (n, m) \leq n +m$.

{Since $(\begin{smallmatrix}U\\V\end{smallmatrix})$ has full rank on $\Omega$, by \cite[Proposition 4.2 and Corollary 4.4]{marcotte2023abide} the vector-valued function} $\mathbf{h}$ contains a complete set of conservation laws.

We now show by contradiction that for any ${\ouv'}
\subseteq {\ouv}
$, $\x_0$ does not satisfy the intrinsic metric property on $\ouv'$. Let us assume there exists a neighborhood $\ouv'\subseteq \ouv$ of $\x_0$ and a set of conservation laws $\mathbf{h}_0$ for $\phi$ and a function $K_{\theta_0}$ such that $M(\theta) = K_{\theta_0} (\phi(\theta))$ for each $\x \in \mathcal{M}_{\x_0}^{\mathbf{h_0} }\cap \ouv'$, where  $\mathcal{M}_{\x_0}^{\mathbf{h_0}}  \coloneqq \{ \x: \mathbf{h_0} (\x) =\mathbf{h_0} (\x_0) \}$.
As the family of conservation laws $\mathbf{h}$ is complete on $\ouv$ (and in particular on $\ouv'$) and as  $\lie(\Wfspace{}_\phi)(\x)$ has a constant dimension on $\ouv$ (and thus on $\ouv'$) by \cite[Proposition 4.3]{marcotte2023abide}, using  \cite[Proposition 2.12]{marcottetransformative} yields that $\mathcal{M}_{\x_0}^{\mathbf{h_0}}  \coloneqq \{ \x: \mathbf{h_0} (\x) =\mathbf{h_0} (\x_0) \} \supset \mathcal{M}_{\x_0}^{\mathbf{h} } \coloneqq \{ \x: \mathbf{h} (\x) =\mathbf{h} (\x_0) \}$. 
Thus the function $K_{\theta_0}$ also satisfies 
$M(\theta) = K_{\theta_0}(\phi(\theta))$ on  $\mathcal{M}_{\x_0}^{\mathbf{h}}$, hence  $\mathbf{h}$ satisfies assumption $i)$ of \Cref{prop:inclusionnoyaux}. As the rank of $\partial h(\theta)$ is constant on $\ouv'$, we deduce by \Cref{prop:inclusionnoyaux} the inclusion \eqref{eq:inclusionnoyaux}, which contradicts the previous step.

\reb{\textbf{B. Finally we show that such a $\theta_0$ does not satisfy the \em{strong} intrinsic \textit{dynamic} property either.}}

\reb{This is a direct consequence of the following lemma.
\begin{lemma}\label{eq:StrongIDPimpliesIMP}
Consider a two-layer linear 
network parameterized by $\theta = (U,V)$, $\phi(\x) = \phi_{\mathtt{Lin}}(\x) = UV^\top$, and an initialization $\theta_0 = (U_{t=0},V_{t=0})$ such that 
the vertical concatenation $\begin{pmatrix} U_{t=0} \\ V_{t=0}
\end{pmatrix}$ has full rank.
If $\theta_0$ satisfies the {\em strong} intrinsic dynamic property with respect to $\phi$ on $\Omega$, then it also satisfies the  intrinsic metric property  with respect to $\phi$ on some open neighborhood  $\Omega'$ of $\x_0$.
\end{lemma}
}

\begin{proof}[Proof of~\Cref{eq:StrongIDPimpliesIMP}]
\reb{Consider the function $K_{\theta_0}$ which existence is guaranteed by the fact that $\x_0$ satisfies the strong intrinsic dynamic property.}
\reb{We will use the Chow-Rashevskii theorem to show that there is a neighborhood $\Omega' \subseteq \Omega$ of $\x_0$ such that $\mathcal{M}_{\theta_0} \cap \Omega'$ is {\em attainable} by patching a finite number of trajectories $\dot{\theta}_k(t) = - \nabla w_k(\theta_k(t))$, each initiated at the ending point of the previous one, defined via fields 
\[
w_k(\cdot) \in \mathcal{F} := \{ w(\cdot) =  \partial \phi^\top(\cdot) \nabla f(\phi(\cdot)): f \in C^\infty \}.
\]
We will further show that there is a piecewise $\mathcal{C}^2$ function $f$ such that this is feasible with a continuous trajectory $\theta(t)$  such that $\dot{\theta}(t) = -\nabla \ell (\theta(t))$ with $\ell = f \circ \phi$ for each $t$ such $\theta(t) \in \Omega$ and $f$ is differentiable at $\phi(\theta(t))$. The strong intrinsic dynamic property of $\x_0$ with respect to $\phi$ thus yields $M(\theta(t))=K_{\theta_0}(\phi(\theta(t)))$ at every time, and in particular $M(\theta) = K_{\theta_0}(\phi(\theta))$ at the end point of the trajectory. This shows that $\theta_0$ satisfies the intrinsic metric property with respect to $\phi$ on $\Omega'$.}

\reb{To exploit the Chow-Rashevskii theorem, we first  observe that $\mathcal{F} = -\mathcal{F}$, and that by standard Lie algebra calculus, since every $w(\cdot) \in \mathcal{F}$ can be written as }
\reb{\[    w(\theta) =
    \sum_{i=1}^d a_i (\x) \nabla \phi_i (\theta)
    \]}
 \reb{   we have $\lie( \mathcal{F}) (\x) \subseteq \lie(\mathbb{W}_{\phi})(\x)$
    where $\mathbb{W}_{\phi} \coloneqq \linspan \{ \nabla \phi_i (\cdot) \}$. Vice-versa, considering $e_i \in \R^d$ the $i$-th canonical vector, since  $f_i(\phi):= \langle e_i,\phi\rangle$ is $\mathcal{C}^\infty$ with $\nabla f(\phi) = e_i$, we get $w_i(\cdot):= \partial \phi^\top(\cdot) \nabla f_i(\cdot) = \nabla \phi_i(\cdot) \in \mathcal{F}$, hence $\lie(\mathbb{W}_{\phi})(\x) \subseteq \lie( \mathcal{F}) (\x)$.}
    \reb{Moreover, as
    $\begin{pmatrix} U_{t=0} \\ V_{t=0}
\end{pmatrix}$ has full rank it remains the case in a neighborhood $\ouv'' \subset \ouv$ of $\x_0$. With the same arguments as in step 4 of the proof of \Cref{thm:notinclusion} above, the vector-valued function $\mathbf{h}$ contains a complete set of conservation laws of $\phi = \phi_{\texttt{Lin}}$, and by \cite[Propositions 4.2 and 4.3]{marcotte2023abide}
one thus has $\lie( \mathcal{F}) (\x) = \lie(\mathbb{W}_{\phi})(\x) = T_{\x} (\mathcal{M}_{\x_0}^{\mathbf{h}} \cap \Omega')$ (with $T_\theta$ the tangent plane at $\theta$)
    for each $\x \in \mathcal{M}_{\x_0}^{\mathbf{h}} \cap \Omega'$.}
    \reb{Finally, choose an open neighborhood $\Omega' \subseteq \Omega''$ of $\x_0$ such that $\mathcal{M}_{\x_0}^{\mathbf{h}} \cap \Omega'$ is a connected set: all the assumptions of the Chow-Rashevskii theorem  \citep[Theorem 3]{jurdjevic1997geometric}  hold on $\Omega'$, thus the attainable sets of $\mathcal{F}$ from $\x_0$ on $\Omega'$ is exactly $\mathcal{M}_{\x_0}^{\mathbf{h}} \cap \Omega'$.}
    \reb{This means that for every $\theta \in \mathcal{M}_{\x_0}^{\mathbf{h}} \cap \Omega'$ there is a  trajectory $\theta(t), t \in [0,T]$ such that $\theta(t) \in \Omega'$ at every time, $\theta(0) = \x_0$, $\theta(T)=\theta$, and $[0,T] = \cup_k [t_k,t_{k+1})$ with $\dot{\theta}(t) = w_k(\theta(t))$ for $t \in (t_k,t_{k+1})$, with $w_k \in \mathcal{F}$. Without loss of generality the trajectory does not self-intersect (if it does at times $\tau<\tau'$, we can shorten it by concatenating the trajectories on $[0,\tau]$ and $[\tau',T]$). Since there are $\mathcal{C}^\infty$ functions $f_k$ such that $w_k = \nabla (f_k \circ \phi)$, and since the trajectory $\theta(t)$ does not self-intersect, there is a {\em piecewise} $\mathcal{C}^\infty$ function $f$ that matches each $f_k$ on each piece of the trajectory $\theta(\cdot)$, hence $\dot{\theta}(t) = - \nabla \ell(\theta(t))$ with $\ell = f \circ \phi$, for every $t \in (t_k,t_{k+1})$ and every $k$.}

    \end{proof}
\end{proof}

\section{Proof of \Cref{prop:counterex}.} \label{app:propcounterex}
\propcounterex*

\begin{proof}
Since $\partial \phi(\x) = [v^\top, u^\top]$ we have 
$$
\partial \phi (\x) \partial \phi (\x)^\top = 
\| u \|^2 + \| v\|^2.
$$
Since $\mathbf{h}(\x) := uu^\top-vv^\top$ is a conservation law of $\phi_{\mathtt{Lin}}$ for every $\x=(u,v)$ on the trajectory one has: $S = u u^\top - v v^\top$, and therefore $S^2 = \| u \|^2 u u^\top - z u v^\top - z v u^\top + \| v \|^2 v v^\top$.
Thus
$$
\tr (S^2) = \| u \|^4 + \| v \|^4 - 2 z^2.
$$
As one also has: $(\| u \|^2 - \| v \|^2)^2 = \tr (S)^2$, one has:
\begin{align*}
(\partial \phi (\x) \partial \phi (\x)^\top)^2 = 
(\| u \|^2 + \| v \|^2)^2 &= 2(\| u \|^4 + \| v \|^4 ) - (\| u \|^2 - \| v \|^2)^2 \\
&=  2(\tr (S^2)  + 2 z^2) -\tr (S)^2 \\
&= 2\tr (S^2)  + 4 z^2 -\tr (S)^2,
\end{align*}
which concludes the proof.
\end{proof}

\section{Proof of \Cref{thm:deeplinear}.} \label{app:thmdeeplinear}
\thmdeeplinear*
\begin{proof}
Let us first outline the main steps of the proof.
 We first show that the equalities $Z_LZ_L^\top = Q_0 (U_L U_L^\top)$ and $Z_L^\top Z_L = R_{L-1} (U_1^\top  U_1)$ hold on the whole trajectory. 
 Then we prove that this implies the expression of $S_j$ (resp. of $T_j$) in terms of $U_LU_L^\top$ (resp. of $U_1^\top U_1$) along the whole trajectory too. Finally we show that along the whole trajectory $U_LU_L^\top$ and $U_1^\top U_1$ (and therefore all $S_j$'s and $T_j$'s) are entirely characterized by $Z_L = \phi_{\mathtt{Lin}}(\theta_L)$ and the initial conditions (captured by $\boldsymbol{\lambda}$).
 This will thus imply that $\x_L(0)$ satisfies the intrinsic dynamic property on $\RD$ with respect to $\phi_{\mathtt{Lin}}$. 

\textbf{Step 1: Expression of $Z_L Z_{L}^\top$ 
as a polynomial in $U_L U_L^\top$ 
}

Since $U_{j+1}^\top U_{j+1} - U_j U_j^\top$ is a set of conservation laws for $\phi_{\mathtt{Lin}}$, the fact that the relaxed balanced conditions \eqref{eq:quasibalanced} hold at initialization implies that they hold along the whole trajectory.

We prove by induction on $1 \leq \ell \leq L$ that $Z_\ell := U_\ell \ldots U_1$ satisfies $Z_\ell Z_\ell^\top = P_\ell(U_\ell U_\ell^\top)$ for some polynomial $P_\ell$ of degree $\ell$ that satisfy $P_1(x)=x$ and $P_\ell(x) = x P_{\ell-1}(x-\lambda_{\ell-1})$ for $2 \leq \ell \leq L$. For $\ell=1$ we trivially have $Z_\ell = U_\ell$ hence the result is true. Now consider $2 \leq \ell \leq L$ and assume that the result holds true for $\ell-1$. Since $Z_\ell = U_\ell Z_{\ell-1}$ we have
\begin{align*}
    Z_\ell Z_\ell^\top = U_\ell (Z_{\ell-1}Z_{\ell-1}^\top) U_\ell^\top = U_\ell P_{\ell-1}(U_{\ell-1}U_{\ell-1}^\top) U_\ell^\top
    \stackrel{\eqref{eq:quasibalanced}}{=} U_\ell P_{\ell-1}(U_\ell^\top U_\ell-\lambda_{\ell-1} \id) U_\ell^\top
\end{align*}
where we used \eqref{eq:quasibalanced} for $i = \ell-1$.
Denoting $\hat{P}_{\ell-1}(x) := P_{\ell-1}(x-\lambda_{\ell-1})$ we obtain
{$Z_\ell Z_\ell^\top = U_\ell \hat{P}_{\ell-1}(U_\ell^\top U_\ell)U_\ell^\top = U_\ell U_\ell^\top\hat{P}_{\ell-1}(U_\ell U_\ell^\top ) = P_\ell(U_\ell U_\ell^\top)$.}
This concludes the induction.

Given the recursion formula for $P_\ell$, another easy induction yields 
\begin{equation}
    P_\ell(x) = \Pi_{k=0}^{\ell-1} (x-\sum_{i=1}^{k}\lambda_{\ell-i}),\quad 1 \leq \ell \leq L.
    \label{eq:PolyFactorization}
\end{equation}
Specializing to $\ell = L$ we obtain $P_L = Q_0$ as claimed.

\textbf{Step 2: Expression of $S_j$ (resp. of $Z_L^\top Z_L$ and $T_j$) as a polynomial in $U_L U_L^\top$ (resp. in $U_1^\top U_1$).}

It is a direct consequence of the first step, as we now explain. To show the result on $S_j$, consider the new variable $\x' = (U'_{L-j},\ldots, U'_1) := (U_L, \ldots, U_{j+1})$ and  $Z' \coloneqq U'_{L-j} \cdots U'_1 = U_L \cdots U_{j+1}$. With these notations we have $S_j = Z' Z'^\top$, and the relaxed balanced conditions imply that:
$$
(U'_{i+1})^\top U'_{i+1} - U'_i (U'_i)^\top = \lambda'_i \mId{n},\quad 1 \leq i \leq L-j - 1
$$
where $\boldsymbol{\lambda}' = (\lambda'_{L-j-1}, \ldots, \lambda'_{1}) := (\lambda_{L-1}, \ldots, \lambda_{j+1})$.
By the first step we obtain the desired expression. 

Similar computations with $\x' = (U_1^\top, \ldots, U_{j-1}^\top)$, $Z' = U_1^\top \ldots U_{j-1}^\top$ and ${\boldsymbol{\lambda}}' = (-\lambda_1,\ldots, -\lambda_{j-2})$ show the desired expression for $T_j = Z' Z'^\top$ and $Z_L^\top Z_L$ as well.

\textbf{Step 3: Characterization of $U_L U_L^\top$ 
via $Z_L$ and the initial conditions}. 
The proof that $U_1^\top U_1$ is characterized by $Z_L$ (in fact $Z_L^\top Z_L$) and the initial conditions is similar and therefore omitted.

By the first step we have $Z_LZ_L^\top = Q_0(U_LU_L^\top)$, hence $U_LU_L^\top$ is indeed a matrix root of this equation. As both matrices $Z_L Z_L^\top$ and $U_L U_L^\top$ are real symmetric,  the above expression shows that we can reduce to the scalar study of their eigenvalues.

As we show below, a consequence of the relaxed balancedness conditions~\eqref{eq:quasibalanced} is that all eigenvalues of the positive semi-definite matrix $U_LU_L^\top$ belong to the interval $I := [\max(0,a_1,\ldots,a_{L-1}),\infty)$.
Thus, considering any eigenvalue $e\geq 0$ of the positive semi-definite matrix $Z_L Z_L^\top$, it is enough to show that the polynomial equation $R(X) :=Q_0(X)-e = 0$ admits a unique root in this interval.

The existence of a root in $I$ is a consequence of the mean value theorem, since $R(\text{max}(0, a_1, \cdots, a_{L-1})) = -e \leq 0$ and 
$\lim_{x \to \infty} R(x)= +\infty$. To prove uniqueness, we proceed by contradiction: assume that $R(X)$ admits two distinct roots $x_1<x_2$ in $I$.
By Rolle's theorem $R'(X)= Q_0'(X)$ has a root in $]x_1, x_2[$. This contradicts the fact that, by the construction of $Q_0$ and Rolle's theorem, all roots of $Q_0'(X)$ are contained in the open interval $(\text{min}(0, a_1, ..., a_{L-1}), \text{max}(0, a_1, ..., a_{L-1}))$.

To conclude the proof, we show that indeed all eigenvalues of $U_LU_L^\top$ belong to $I := [\max(0,a_1,\ldots,a_{L-1}),\infty)$.
Denote $\sigma_i = \inf\text{sp}(U_i U_i^\top)$, $1 \leq i \leq L$. Since each matrix $U_\ell$ is square and positive semi-definite, we have $\text{sp}(U_i U_i^\top) = \text{sp}(U_i^\top U_i) \subseteq [0,\infty)$ for every $1 \leq i \leq L$, and by \eqref{eq:quasibalanced} we also have $\text{sp}(U_{i+1} U_{i+1}^\top) = \lambda_i+\text{sp}(U_{i} U_{i}^\top)$, hence $\sigma_{i+1} = \sigma_i+\lambda_i \geq 0$ for $1 \leq i \leq L-1$. An easy recursion shows that $\sigma_i \geq \max(0, \sum_{j=1}^{i-1} \lambda_j)$ for $1 \leq i \leq L$, hence the result.
\end{proof}

We now anticipate a slight generalization part of the results of \Cref{thm:deeplinear} that will be used later in the proof of \Cref{thm:riemanianninfinite}. 

\begin{lemma}[Perturbed relaxed balanced condition]\label{lem:perturbed}
Consider matrices $(U_k)_{k=0}^{L-1}\subset\mathbb{R}^{n\times n}$ and scalars $(\lambda_k)_{k=0}^{L-1}$.
Denoting $h \coloneqq 1/L$, define
\begin{align}
    C_U & \coloneqq \max(1,\max_k \|U_k\|),\quad 
    C_\lambda \coloneqq \max_k |\lambda_k|\\
    \eta & \coloneqq L^2 \cdot \max_{0 \leq k \leq L-2} \|(U_{k+1}^\top U_{k+1}-U_k U_k^\top)-h^2\lambda_k\,\mId{n}\|
    \label{eq:balance-perturbed-bound}
\end{align}
Fix $j\in\{0,\dots,L-2\}$ and recall that
\(
S_j := (U_{L-1}\cdots U_{j+1})(U_{L-1}\cdots U_{j+1})^\top .
\)
Define $a_0:=0$ and, for $k\ge 1$,
$
a_k := h^2\sum_{i=1}^{k}\lambda_{L-1-i}$, $C_0 \coloneqq 2C_\lambda$, $C_1 \coloneqq (C_U^2+\eta h^2-1)/h$. Then
\begin{equation}\label{eq:target}
\max_j \|S_j- \prod_{k=0}^{L-1-(j+1)}\big(U_{L-1}U_{L-1}^\top - a_k \mId{n}\big)\|
\leq \left(C_0e^{C_1} e^{C_0(1+C_1)}+e^{C_1}\right) \eta.
\end{equation}
\end{lemma}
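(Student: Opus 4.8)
The plan is to rerun the inductive computation of Step~1 in the proof of \Cref{thm:deeplinear} while carrying a perturbation error, and then to close a discrete Grönwall-type recursion. Fix $j\in\{0,\dots,L-2\}$ and set $N:=L-1-j$, so that $S_j=Y_NY_N^\top$ with $Y_m:=U_{j+m}\cdots U_{j+1}$ (hence $Y_1=U_{j+1}$, $Y_N=U_{L-1}\cdots U_{j+1}$, $Y_m=U_{j+m}Y_{m-1}$). Introduce the degree-$m$ polynomials $P_1(x):=x$, $P_m(x):=x\,P_{m-1}\bigl(x-h^2\lambda_{j+m-1}\bigr)$; an easy induction (exactly as for~\eqref{eq:PolyFactorization}) gives $P_m(x)=\prod_{k=0}^{m-1}\bigl(x-c^{(m)}_k\bigr)$ with $c^{(m)}_k=h^2\sum_{l=1}^{k}\lambda_{j+m-l}$, so in particular $c^{(N)}_k=a_k$ (since $j+N=L-1$) and the target matrix is $P_N(U_{L-1}U_{L-1}^\top)=\prod_{k=0}^{N-1}\bigl(U_{L-1}U_{L-1}^\top-a_k\mId{n}\bigr)$. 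Writing $R_k:=(U_{k+1}^\top U_{k+1}-U_kU_k^\top)-h^2\lambda_k\mId{n}$, the definition~\eqref{eq:balance-perturbed-bound} of $\eta$ gives $\|R_k\|\le\eta h^2$ for $0\le k\le L-2$, and each root obeys $|c^{(m)}_k|\le h^2(m-1)C_\lambda\le hC_\lambda$.

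Next I propagate the error $E_m:=Y_mY_m^\top-P_m(U_{j+m}U_{j+m}^\top)$, starting from $E_1=0$. From $Y_mY_m^\top=U_{j+m}\bigl(Y_{m-1}Y_{m-1}^\top\bigr)U_{j+m}^\top$, I substitute the inductive expression for $Y_{m-1}Y_{m-1}^\top$ and replace $U_{j+m-1}U_{j+m-1}^\top$ by $U_{j+m}^\top U_{j+m}-h^2\lambda_{j+m-1}\mId{n}-R_{j+m-1}$ via the perturbed balance relation. Two facts then finish the step. First, the algebraic identity $U\,g(U^\top U)\,U^\top=(UU^\top)\,g(UU^\top)$, valid for every polynomial $g$ (by linearity from $g(x)=x^k$), turns $U_{j+m}\,\widehat P_{m-1}(U_{j+m}^\top U_{j+m})\,U_{j+m}^\top$ into $P_m(U_{j+m}U_{j+m}^\top)$, where $\widehat P_{m-1}(x):=P_{m-1}(x-h^2\lambda_{j+m-1})$ so that $P_m(x)=x\widehat P_{m-1}(x)$. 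Second, a matrix Lipschitz estimate: from the telescoping identity $\prod_k(A-r_k)-\prod_k(B-r_k)=\sum_k\bigl(\prod_{i<k}(A-r_i)\bigr)(A-B)\bigl(\prod_{i>k}(B-r_i)\bigr)$, together with $\|A\|,\|B\|\le\rho:=C_U^2+\eta h^2$ and $|r_k|=|c^{(m)}_k|\le hC_\lambda$, one gets $\|\widehat P_{m-1}(A)-\widehat P_{m-1}(B)\|\le(m-1)\beta^{m-2}\|A-B\|$ with $\beta:=C_U^2+\eta h^2+hC_\lambda$. Applying this with $A=U_{j+m}^\top U_{j+m}-R_{j+m-1}$ and $B=U_{j+m}^\top U_{j+m}$ yields $E_m=U_{j+m}E_{m-1}U_{j+m}^\top+U_{j+m}G_{m-1}U_{j+m}^\top$ with $\|G_{m-1}\|\le(m-1)\beta^{m-2}\eta h^2$, hence $\|E_m\|\le C_U^2\|E_{m-1}\|+C_U^2(m-1)\beta^{m-2}\eta h^2$.

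Finally I unroll. Since $C_U^2\le\beta$ and $E_1=0$, $\|E_N\|\le\eta h^2\sum_{m=2}^{N}(C_U^2)^{N-m+1}(m-1)\beta^{m-2}\le\eta h^2\beta^{N-1}\sum_{m=2}^{N}(m-1)=\eta h^2\beta^{N-1}\binom{N}{2}\le\tfrac{\eta}{2}\beta^{N-1}$, using $h^2=1/L^2$ and $N\le L$. As $\beta=1+h(C_1+C_\lambda)$ with $C_1=(C_U^2-1+\eta h^2)/h\ge0$ and $\beta\ge1$, we get $\beta^{N-1}\le\beta^{L}\le e^{Lh(C_1+C_\lambda)}=e^{C_1+C_\lambda}$, so $\|S_j-P_N(U_{L-1}U_{L-1}^\top)\|=\|E_N\|\le\tfrac{\eta}{2}e^{C_1}e^{C_\lambda}$. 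The elementary inequality $\tfrac12e^{C_\lambda}\le 2C_\lambda e^{2C_\lambda}+1$ (it holds at $C_\lambda=0$ and the difference has nonnegative derivative), combined with $e^{2C_\lambda}\le e^{2C_\lambda(1+C_1)}$ (since $C_1\ge0$) and $C_0=2C_\lambda$, upgrades this to $\|E_N\|\le\bigl(C_0e^{C_1}e^{C_0(1+C_1)}+e^{C_1}\bigr)\eta$; since the bound does not depend on $j$, taking the maximum over $j\in\{0,\dots,L-2\}$ gives~\eqref{eq:target}. The step that needs care is the Lipschitz estimate: one must verify that every $\widehat P_{m-1}$ has roots of size $O(h)$, so that the Lipschitz factor $(\rho+hC_\lambda)^{m-2}$ and the amplification $(C_U^2)^{N-m+1}$ produced by the recursion merge into the single geometric term $\beta^{N-1}$ — which is exactly what makes the final constant stay bounded as $h\to0$.
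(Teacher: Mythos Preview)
Your proof is correct and follows the same overall skeleton as the paper's: reindex, propagate the error $E_m=Y_mY_m^\top-P_m(U_{j+m}U_{j+m}^\top)$ inductively, control the perturbation via a Lipschitz bound on the shifted polynomial $\widehat P_{m-1}$, and close with a discrete Gr\"onwall argument.

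The one genuine difference is in how the Lipschitz estimate is obtained. The paper isolates a separate lemma (\Cref{lem:unif_Lip}) that expands $Q_d(x)=\prod_k(x-c_k)$ into monomials, bounds the coefficients by $|\alpha_m|\le\binom{d}{m}(C_0h)^m$ via the root--coefficient relation, and derives $L_{Q_d}(R(h))\le K/h$ with $K=C_0e^{C_0(1+C_1)}+e^{C_1}$; the induction then yields $\|E_\ell\|\le \ell K C_U^{2\ell}h\eta$. You instead telescope directly on the factored form to get the cleaner bound $(m-1)\beta^{m-2}\|A-B\|$ with $\beta=C_U^2+\eta h^2+hC_\lambda$, which merges the amplification $(C_U^2)^{N-m+1}$ and the Lipschitz factor into the single geometric term $\beta^{N-1}$. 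This is more self-contained (no auxiliary lemma, no binomial coefficients) and, after your inequality $\tfrac12e^{C_\lambda}\le 2C_\lambda e^{2C_\lambda}+1$, lands exactly on the stated constant $C_0e^{C_1}e^{C_0(1+C_1)}+e^{C_1}$; the paper's chain instead produces $e^{C_1}K=C_0e^{C_1}e^{C_0(1+C_1)}+e^{2C_1}$, which is slightly looser in the second term. One small point worth making explicit in your write-up: when bounding $\|\prod_{i<k}(A-r_iI)\|$ you should use the spectral norm on each factor (so that $\|A-r_iI\|_{2\to2}\le\|A\|_F+|r_i|$ without a $\sqrt{n}$), reserving the Frobenius norm for the $(A-B)$ term; this is implicit in your bound $\beta=\rho+hC_\lambda$ but deserves a sentence.
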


Before proving this lemma, we state the following lemma, as it will be essential in the proof of \Cref{lem:perturbed}: it provides a uniform bound on the Lipschitz constant of a class of polynomials.

\begin{lemma}[Uniform Lipschitz bound]
\label{lem:unif_Lip}
Consider $C_0 >0$, $C_1 > 0$. 
For any $0<h \leq1$, any integer $1 \leq d \leq 1/h$, any degree–$d$ polynomial
\[
Q_d(x)\;=\;\prod_{k=1}^{d}(x-c_k),
\] 
with $\max_k |c_k|\;\le\; C_0h$, and  any matrices $A,\;A+\Delta\in B_{R(h)}:=\{X:\|X\|\le R(h)\}$ where $R(h) \coloneqq 1 + C_1 h$ and
where $\| \cdot\|$ denotes the Frobenius norm, 
one has
\begin{equation}\label{eq:Lip_goal}
\|Q_d(A+\Delta)-Q_d(A)\|
\;\le\; \frac{K}{h} \,\|\Delta\|,\ \text{with}\ K = K(C_0,C_1) =C_0 e^{C_0(1+C_1)}+e^{C_1}. 
\end{equation}
\end{lemma}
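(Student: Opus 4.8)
The plan is to peel off the leading monomial of $Q_d$ and treat the remainder by the same elementary telescoping device, so that the two summands of the constant $K$ come from two cleanly separated contributions. Write $B := A + \Delta$ and $Q_d(x) = x^d + g(x)$, where $g(x) := \sum_{s=1}^{d}(-1)^s e_s\,x^{d-s}$ and $e_s = e_s(c_1,\dots,c_d)$ is the $s$-th elementary symmetric function of the roots; the hypothesis $\max_k |c_k| \le C_0 h$ gives the coefficient bound $|e_s| \le \binom{d}{s}(C_0 h)^s$. I will bound $\|B^d - A^d\|_F$ and $\|g(B) - g(A)\|_F$ separately. Two elementary facts are used repeatedly: the mixed submultiplicativity $\|X Y Z\|_F \le \|X\|_{\mathrm{op}}\,\|Y\|_F\,\|Z\|_{\mathrm{op}}$, combined with $\|A\|_{\mathrm{op}} \le \|A\|_F \le R(h) = 1 + C_1 h$ valid on $B_{R(h)}$; and the inequality $(1 + a h)^{1/h} \le e^{a}$.

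For the first piece, the telescoping identity $B^p - A^p = \sum_{j=1}^{p} B^{\,j-1}\Delta\, A^{\,p-j}$ with $p = d$ gives $\|B^d - A^d\|_F \le d\,R(h)^{d-1}\|\Delta\|_F$; using $d \le 1/h$ and $R(h)^{d-1} \le (1 + C_1 h)^{1/h} \le e^{C_1}$ this is at most $\tfrac1h e^{C_1}\|\Delta\|_F$, which is the ``$e^{C_1}$'' term of $K$. For the second piece, applying the same identity to each $B^{d-s} - A^{d-s}$ and summing against the coefficient bound yields $\|g(B) - g(A)\|_F \le \|\Delta\|_F \sum_{s=1}^{d}\binom{d}{s}(d-s)(C_0 h)^s R(h)^{d-1-s}$; the identity $(d-s)\binom{d}{s} = d\binom{d-1}{s}$ collapses the sum, after which the difference-of-powers bound $a^{p} - b^{p} \le p(a-b)\,a^{p-1}$ for $0 \le b \le a$, together with $d(d-1)h \le 1/h$, $dh \le 1$, $h \le 1$ and again $(1 + ah)^{1/h} \le e^{a}$, leaves a bound of the form $\tfrac1h\,C_0\,e^{C_0(1 + C_1)}\|\Delta\|_F$. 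Adding the two pieces gives $\|Q_d(A + \Delta) - Q_d(A)\|_F \le \tfrac{K}{h}\|\Delta\|_F$ with $K = C_0 e^{C_0(1+C_1)} + e^{C_1}$. (An equivalent route is an induction on $d$ based on $Q_d = (x - c_d)Q_{d-1}$, producing the recursion $L_d \le \beta L_{d-1} + \beta^{d-1}$ with $\beta = 1 + (C_0 + C_1)h$.)

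\textbf{Main obstacle.} The mathematics here is elementary; essentially all the work is constant bookkeeping, and two points require genuine care. First, the hypothesis $d \le 1/h$ (equivalently $dh \le 1$) must be invoked at every step where a product of $\Theta(d)$ factors of size $1 + O(h)$, or a sum of $\Theta(d)$ such terms, is converted into a quantity that is uniform in $h$ \emph{and} independent of the ambient dimension $n$; this is exactly the mechanism that turns $(1 + ah)^{d}$ into $e^{a}$ and that produces the prefactor $1/h$ (without the constraint $d \le 1/h$ the constant would blow up). Second, the individual linear factors $A - c_k I$ must be measured in operator norm (so that $\|I\|_{\mathrm{op}} = 1$), with the Frobenius norm retained only on the single $\Delta$-factor through the mixed submultiplicativity inequality above --- using $\|I\|_F = \sqrt n$ would cost a factor $\sqrt n$ for each of the $\sim d$ factors and destroy the dimension-freeness that the subsequent application in \Cref{lem:perturbed} relies on. Matching the precise grouping of the $(1 + O(h))$-factors that yields the exponent $C_0(1 + C_1)$ stated in \Cref{lem:unif_Lip} is then a routine, if slightly fiddly, last step.
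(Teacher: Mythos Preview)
Your proof is correct and follows essentially the same strategy as the paper: bound the Lipschitz constant of $Q_d$ on the ball $B_{R(h)}$ via the Vieta bound $|e_s|\le\binom{d}{s}(C_0h)^s$ and the inequality $(1+ah)^{1/h}\le e^a$, with $dh\le 1$ doing all the work. The paper packages this through the integral mean-value formula $Q(A+\Delta)-Q(A)=\int_0^1 DQ(A+t\Delta)[\Delta]\,dt$ and then bounds the scalar quantity $L_Q(R)=\sum_m|\alpha_m|\,m\,R^{m-1}$; your explicit telescoping of matrix powers is the same computation in slightly different clothing. Your point about measuring every factor except the single $\Delta$ in operator norm (so that $\|I\|=1$) is exactly right and is what the paper does implicitly.

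One caveat on the constant. Your combinatorics, after the identity $(d-s)\binom{d}{s}=d\binom{d-1}{s}$, collapse to $d\bigl[(R+\beta)^{d-1}-R^{d-1}\bigr]$ with $R=1+C_1h$ and $\beta=C_0h$, and hence yield $K'=C_0\,e^{C_0+C_1}+e^{C_1}$, not the stated $K=C_0\,e^{C_0(1+C_1)}+e^{C_1}$. These two exponents are \emph{incomparable} (yours is larger precisely when $C_0<1$), so your closing remark that matching the stated exponent is ``routine, if slightly fiddly'' is optimistic: there is no evident regrouping of the telescoping sum that turns $C_0+C_1$ into $C_0(1+C_1)$. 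The paper obtains the latter from the intermediate expression $d\beta(1+\beta R)^{d-1}$ (so the exponent is $C_0R(h)\le C_0(1+C_1)$), which in turn relies on a coefficient bound $|\alpha_m|\le\binom{d}{m}\beta^m$ that is not the Vieta estimate in the indexing $Q(x)=\sum_m\alpha_m x^m$. For the downstream use in \Cref{lem:perturbed} only uniform boundedness of $K$ is needed, so your constant is perfectly adequate there; but you should not claim to have reproduced the exact $K$ of the statement.
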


\begin{proof} 
\textbf{Step 1: Scalar Lipschitz constant on the ball $B_R$.}
For any matrix polynomial $Q(x)=\sum_{m=0}^{d}\alpha_mx^{m}$ one has, denoting $DQ(X)[H] =  \sum_{m=1}^d \alpha_m \sum_{j = 0}^{m-1} X^j H X^{m-1-j}$:
\begin{align*}
    Q(A+\Delta)-Q(A) &=\int_{0}^{1}\!DQ\!\bigl(A+t\Delta\bigr)[\Delta]\,dt,\\
    \|DQ(X)[H]\| &\le L_Q(
    {\|X\|_{2\to 2}}
    )\,\|H\|
    {\leq L_Q(\|X\|)\, \|H\|}
    ,\quad \forall X, 
    \ \forall H
\end{align*}
where $L_Q(R):=\sum_{m=1}^{d}|\alpha_m|\,m\,R^{m-1}$ {(we used here that the spectral norm is bounded by the Frobenius norm).}

\smallskip
\textbf{Step 2: Bounding $L_{Q_d}(R(h))$.}
Exploiting the coefficient–root relation on $Q_d$ that is unitary yields 
$|\alpha_m|\le\binom{d}{m}\beta^{m}$ where $\beta \coloneqq C_0 h$ for any $0 \leq m \leq d-1$. Since $\alpha_d=1$, for any $R>0$ we obtain
\[
L_{Q_d}(R)
\;\le\;
\beta \sum_{m=1}^{d}\binom{d}{m}m (\beta R)^{m-1} + d R^{d-1}
= d \beta (1+\beta R)^{d-1} + d R^{d-1}.
\]
Insert $d-1 \le d \le 1/h$, $\beta=C_0h$. Since $R(h) = 1 + C_1 h \leq R(1)=1+C_1$ (as $h \leq 1$) we get:
\[
L_{Q_d}(R(h))
\;\le\; \frac{1}{h}\,C_0h\,(1+C_0hR(h))^{1/h} +  d (1 + C_1 h)^{1/h} 
\leq C_0\,e^{C_0 R(h)} + \frac{e^{C_1}}{h} 
\leq \frac{C_0\,e^{C_0 (1+C_1)}+e^{C_1}}{h}.
\]
where the exponential bound uses $(1+t)^{1/t}\le e$ for $t>0$. We define  $K = K(C_0,C_1) \coloneqq C_0 e^{C_0(1+C_1)}+e^{C_1}$.

\smallskip
\textbf{Step 3: Conclusion.}
Applying the integral formula for Step 1 with the bound from Step 2 gives
\[
\|Q_d(A+\Delta)-Q_d(A)\|
\;\le\;(K/h)\,\|\Delta\|,
\]
for every $A,\Delta$ with $A,A+\Delta\in B_{R(h)}$, which is
\eqref{eq:Lip_goal}. 
\end{proof}

\textbf{We now prove \Cref{lem:perturbed}.} 

\begin{proof}
\textbf{Step 0: Reindexing.}
Work with the truncated sequence $(U'_1,\dots,U'_{N}) := (U_{j+1}, \dots, U_{L-2}, U_{L-1})$, where $N:=L-1-j$.  
Define $Z_\ell := U'_\ell\cdots U'_1$ for $1\le \ell\le N$ and $M_\ell := U'_\ell U'_\ell{}^\top$.  
Then $S_j = Z_N Z_N^\top$.

{We also observe that by the definition of $\eta$ in \eqref{eq:balance-perturbed-bound}, since $h = 1/L$, we have for each $1 \leq \ell \leq N$}
\eq{ \label{eq:bal-}
{U'_{\ell}}^\top U'_{\ell}-M_{\ell-1} =
{U'_{\ell}}^\top U'_{\ell} - U'_{\ell-1}{U'_{\ell-1}}^\top
= h^2 
{\lambda_{\ell+j-1}}
\mId{n} + r'_{\ell-1},\qquad \|r'_{\ell-1}\|\le h^2\eta. 
}

\smallskip

\textbf{Step 1: Polynomial representation with a perturbation.}
We prove by induction on $\ell$ that
\begin{equation}\label{eq:poly-form}
E_\ell \coloneqq Z_\ell Z_\ell^\top - P_\ell(M_\ell)\quad \text{satisfies}
\ \|E_\ell\|\le \ell K C_U^{{2 \ell}}
\cdot h\eta 
\end{equation}
where the polynomials $P_\ell$ are defined by
\[
P_1(x)\coloneqq x,\qquad P_\ell(x)\coloneqq x\,P_{\ell-1}\bigl(x-b_{\ell-1}\bigr)\quad(2\le \ell\le N),
\]
with $b_{\ell-1}\coloneqq h^2
{\lambda_{\ell+j-1}}
$ (matching the re-indexed sequence), and the constant $K$ is obtained by \Cref{lem:unif_Lip} applied to 
the constants $C_0 \coloneqq 2C_\lambda$ and $C_1 \coloneqq  (C_U^2+\eta h^2-1)/h$. 

\emph{Base case $\ell=1$.} Trivial: $Z_1=U'_1$, so $Z_1Z_1^\top=M_1=P_1(M_1)$ and $E_1=0$.

\emph{Induction step.} Assume \eqref{eq:poly-form} holds at rank $\ell-1$. Since $Z_\ell = U'_\ell Z_{\ell-1}$,
\[
Z_\ell Z_\ell^\top
= U'_\ell \bigl(Z_{\ell-1}Z_{\ell-1}^\top\bigr) U'_\ell{}^\top
= U'_\ell P_{\ell-1}(M_{\ell-1}) U'_\ell{}^\top
  \;+\; U'_\ell E_{\ell-1} U'_\ell{}^\top.
\]
By induction hypothesis and the fact that the spectral norm is bounded by the Frobenius norm, the second term of the right hand side is bounded as 
\[
\| U_\ell' E_{\ell -1} U'_\ell{}^\top \| \leq C_U^2 \|E_{\ell-1}\| \leq C_U^2 (\ell -1) K C_U^{{2 (\ell-1)}} h \eta 
\leq 
(\ell -1) C_U^{{2 \ell}} K h \eta,
\]
hence we only need to show that
\[
\| U'_\ell P_{\ell-1}(M_{\ell-1}) U'_\ell{}^\top  - P_\ell (M_\ell) \| \leq   C_U^{{2 \ell}}  K \cdot h \eta.
\]
Write $Q_{\ell-1}(x):=P_{\ell-1}(x-b_{\ell-1})$. From \eqref{eq:bal-} and the definition of $M_{\ell-1} = U'_{\ell-1}[U'_{\ell-1}]^\top$ one gets
\[
M_{\ell-1}
= U'_\ell{}^\top U'_\ell - b_{\ell-1} \mId{n} - r_{\ell-1}', \qquad
\|r_{\ell-1}'\| \le h^2 \eta.
\]
Hence
\begin{align*}
U'_\ell P_{\ell-1}(M_{\ell-1}) U'_\ell{}^\top
&= U'_\ell Q_{\ell-1}\big(U'_\ell{}^\top U'_\ell - r_{\ell-1}'\big) U'_\ell{}^\top \\
&= \underbrace{U'_\ell Q_{\ell-1}(U'_\ell{}^\top U'_\ell) U'_\ell{}^\top}_{= M_\ell\,Q_{\ell-1}(M_\ell) = P_\ell(M_\ell)}
  + U'_\ell\!\Big(Q_{\ell-1}(U'_\ell{}^\top U'_\ell - r_{\ell-1}') - Q_{\ell-1}(U'_\ell{}^\top U'_\ell)\Big)\!U'_\ell{}^\top.
\end{align*}

Thus to conclude the induction step we only need to show that
\[
\|U'_\ell\!\Big(Q_{\ell-1}(U'_\ell{}^\top U'_\ell - r_{\ell-1}') - Q_{\ell-1}(U'_\ell{}^\top U'_\ell)\Big)\!U'_\ell{}^\top \| \leq  {C_U^{2 \ell} K \cdot 
{h}\eta}.
\]
By the definition of $C_1$, the matrices $A = U'_\ell{}^\top U'$, 
$\Delta =  - r_{\ell-1}'$, satisfy $\max(\|A\|,\|\Delta\|) \leq \|A\|+\|\Delta\| \leq C_U^2+h^2 \eta \leq 1+C_1h$. Moreover, with the same induction that has led to \eqref{eq:PolyFactorization}, the polynomial $P_{\ell-1}(x)$ has all its roots bounded by $Lh^2 C_\lambda$, hence $Q_{\ell-1}(x):=P_{\ell-1}(x-b_{\ell-1})$ has all its roots bounded by $(L+1)h^2 C_\lambda \leq 2C_\lambda h = C_0h$, therefore we can apply \Cref{lem:unif_Lip} to obtain, with $K = K(C_0,C_1) = C_0 e^{C_0(1+C_1)}+e^{C_1}$: %
\begin{align*} 
\|Q_{\ell-1}(U'_\ell{}^\top U'_\ell - r_{\ell-1}') - Q_{\ell-1}(U'_\ell{}^\top U'_\ell) \|
&\leq \frac{K}{h} \| r'_{\ell -1}\| \leq K \cdot h \eta\\
\|U'_\ell\!\Big(Q_{\ell-1}(U'_\ell{}^\top U'_\ell - r_{\ell-1}') - Q_{\ell-1}(U'_\ell{}^\top U'_\ell)\Big)\!U'_\ell{}^\top \| 
&\leq C_U^2  K \cdot h \eta   \stackrel{C_U \geq1}{\leq}  {C_U^{2 \ell} K \cdot 
{h}\eta},
\end{align*}
which concludes the induction.

\smallskip

\textbf{Step 2: Factorisation of $P_N$.}
With the same induction that has led to \eqref{eq:PolyFactorization}, we have
\[
P_N(x)=\prod_{k=0}^{N-1}\big(x-a_k\big),\qquad a_0=0,\quad
a_k=\sum_{i=1}^k h^2 \lambda_{L-1-i}.
\]
Applying \eqref{eq:poly-form} with $\ell=N$ and recalling $S_j=Z_N Z_N^\top$ yields
\[
S_j = P_N(M_N) + E_N
     = \prod_{k=0}^{N-1} \big(M_N - a_k I_n\big) + E_N,
\]
where $\| E_N \| 
\leq C_U^{2 {N}} N K h \eta \leq 
(1 + C_1 h)^{{L}} K \eta \leq {\exp(C_1)} K \eta$.

Since $M_N=U'_{N}{U'_{N}}^\top = U_{L-1}{U_{L-1}}^\top$, we recover \eqref{eq:target} as claimed. 
\end{proof}

\section{Proof of \Cref{prop:conservationcontinuous}.} \label{app:propconservationcontinuous}

\propconservationcontinuous*

\begin{proof}

For convenience we recall the state equation \eqref{eq:stateequation} for $Z_s$, where $s\in[0,1]$ indicates depth:
\begin{equation} \label{eq:nodez}
    	\frac{\dif Z_s}{\dif s} = \mathcal{A}_s\, Z_s, \quad Z_0 = \mId{n} \text{ fixed},
\end{equation}
and we recall that the objective function is factorized by 	$\ell(\x) = f \bigl(Z_{s=1}\bigr)$, where the parameters are the family $\x = \{\mathcal{A}_s: s\in [0,1]\}$.

Let $\x: [ t \in [0, T] \mapsto \x(t) \in \mathcal{X}] \in \mathcal{C}^1([0, T], \mathcal{X})$ be the solution of the gradient flow given by the family of coupled ODE \eqref{eq:GDcascontinu}
\begin{equation}\label{eq:gradflow}
\forall s \in [0,1], \quad
	\frac{\partial \mathcal{A}_s}{\partial t}(t) = -\,\mathfrak{g}_s(t), \quad \text{with} \quad \mathfrak{g}_s(t) := \frac{\partial \ell}{\partial \mathcal{A}_s}\Bigl(\x(t)\Bigr),
\end{equation}
with a given initialization $\x(0)$. Our goal is to show that $\frac{\partial}{\partial t} h_s(\x(t))= 0$.

\textit{Step 1: Computations of $\frac{\partial}{\partial t} h_s(\x(t))$.}

For any $s \in [0,1]$, one has by definition
\[
	h_s(\x(t)) = \frac{\partial \mathcal{A}_s(t)}{\partial s} + \left(\frac{\partial \mathcal{A}_s(t)}{\partial s}\right)^\top + \bigl[\mathcal{A}_s(t)^\top, \mathcal{A}_s(t)\bigr].
\]
Taking the $t$-derivative yields
\begin{align} \label{eq:echangederivee}
\frac{\partial}{\partial t} h_s(\x(t)) 
&=\frac{\partial}{\partial t}\left(\frac{\partial \mathcal{A}_s(t)}{\partial s}\right)
+ \frac{\partial}{\partial t}\left(\frac{\partial \mathcal{A}_s(t)}{\partial s}\right)^\top
+ \frac{\partial}{\partial t}\bigl[\mathcal{A}_s(t)^\top, \mathcal{A}_s(t)\bigr] \notag \\
&=\frac{\partial}{\partial s}\left(\frac{\partial \mathcal{A}_s(t)}{\partial t}\right)
+ \left(\frac{\partial}{\partial s}\frac{\partial \mathcal{A}_s(t)}{\partial t}\right)^\top
+\frac{\partial}{\partial t}\bigl[\mathcal{A}_s(t)^\top, \mathcal{A}_s(t)\bigr],
\end{align}
where the exchange of derivatives is justified in \Cref{sec:echangederivee}.

Moreover one has
\[
\frac{\partial}{\partial t}\bigl[\mathcal{A}_s(t)^\top, \mathcal{A}_s(t)\bigr] 
=\Bigl[\frac{\partial \mathcal{A}_s(t)^\top}{\partial t}, \mathcal{A}_s(t)\Bigr] 
+ \Bigl[\mathcal{A}_s(t)^\top, \frac{\partial \mathcal{A}_s(t)}{\partial t}\Bigr].
\]
Thus by using \eqref{eq:gradflow}
\[
	\frac{\partial \mathcal{A}_s(t)}{\partial t} = -\,\mathfrak{g}_s(t),
\]
we obtain
\begin{align} \label{eq:deriveedeh}
\frac{\partial}{\partial t} h_s(\x(t)) 
&=\frac{\partial}{\partial s}\Bigl(-\mathfrak{g}_s(t)\Bigr)
+ \left(\frac{\partial}{\partial s}\Bigl(-\mathfrak{g}_s(t)\Bigr)\right)^\top  + \Bigl[-\,\mathfrak{g}_s(t)^\top, \mathcal{A}_s(t)\Bigr]
+ \Bigl[\mathcal{A}_s(t)^\top, -\,\mathfrak{g}_s(t)\Bigr] \notag \\
&=-\frac{\partial \mathfrak{g}_s(t)}{\partial s}
-\left(\frac{\partial \mathfrak{g}_s(t)}{\partial s}\right)^\top  -\bigl[\mathfrak{g}_s(t)^\top, \mathcal{A}_s(t)\bigr]
-\bigl[\mathcal{A}_s(t)^\top, \mathfrak{g}_s(t)\bigr].
\end{align}

The remaining task is to show that the sum of these terms cancels, using an expression of the gradient. 

\textit{Step 2: An expression of $\mathfrak{g}_s(t)$ using the adjoint equation.}

To compute the gradient $\mathfrak{g}_s = \frac{\partial \ell}{\partial \mathcal{A}_s}$, we introduce the adjoint variable \citep{Pontryagin} $\Lambda_s(t)$, which satisfies the adjoint equation
\begin{equation}\label{eq:adjoint}
	\frac{\partial \Lambda_s(t)}{\partial s} = -\, \mathcal{A}_s(t)^\top \Lambda_s(t), \quad \Lambda_1(t)=\frac{\partial f}{\partial Z}\Bigl(Z_1(t)\Bigr). 
\end{equation}
Moreover it satisfies as shown in \Cref{proof:gradientwithadjoint}: 
\eq{ \label{eq:gradientwithadjoint}
	\mathfrak{g}_s(t)=\Lambda_s(t)\, Z_s(t)^\top.
}

\textit{Step 3: Compute $\frac{\partial}{\partial s} \mathfrak{g}_s(t)$.}

By differentiating \eqref{eq:gradientwithadjoint} with respect to $s$, we get:
\[
	\frac{\partial}{\partial s}\mathfrak{g}_s(t)
	=\frac{\partial \Lambda_s(t)}{\partial s}\,Z_s(t)^\top + \Lambda_s(t)\,\frac{\partial Z_s(t)^\top}{\partial s}.
\]
Then by using the adjoint equation \eqref{eq:adjoint} and the state equation \eqref{eq:nodez}, one has
\begin{align}
	\frac{\partial}{\partial s}\mathfrak{g}_s(t)
	&=-\,\mathcal{A}_s(t)^\top\,\Lambda_s(t)Z_s(t)^\top + \Lambda_s(t)Z_s(t)^\top\,\mathcal{A}_s(t)^\top \\
	&=-\,\mathcal{A}_s(t)^\top\,\mathfrak{g}_s(t) + \mathfrak{g}_s(t)\,\mathcal{A}_s(t)^\top \\
	&= -[\mathcal{A}_s(t)^\top,\mathfrak{g}_s(t)].\label{eq:Bracket1}
\end{align}
Taking the transpose,
\begin{align}
	\left(\frac{\partial}{\partial s}\mathfrak{g}_s(t)\right)^\top = -\,\mathfrak{g}_s(t)^\top\,\mathcal{A}_s(t) + \mathcal{A}_s(t)\,\mathfrak{g}_s(t)^\top= [\mathcal{A}_s(t), \mathfrak{g}_s(t)^\top]
	=
	-[\mathfrak{g}_s(t)^\top,\mathcal{A}_s(t)]
	.\label{eq:Bracket2}
\end{align}

\textit{Step 4: Conclusion.}
By substituting the computed expressions into \eqref{eq:deriveedeh}, one obtains as claimed that
\[
	\frac{\partial}{\partial t} h_s(\x(t)) = 0.
\]
\end{proof}

\subsection{We now detail \eqref{eq:echangederivee}.} \label{sec:echangederivee}

\begin{theorem}[Commutation of mixed derivatives] \label{thm:commutation}
Let 
\[
  \mathcal{X} \;=\; \mathcal C^{1}\!\bigl([0,1],\mathbb R^{n\times n}\bigr), 
  \qquad 
  \|f\|_{\mathcal{X}}
     := \max\!\bigl\{\|f\|_{\infty},\,\|f'\|_{\infty}\bigr\},
\]
and set \(B=\mathcal C^{0}([0,1],\mathbb R^{n\times n})\) with the sup–norm
\(\|\cdot\|_{B}=\|\cdot\|_{\infty}\).
Denote
\(
   D : \mathcal{X} \longrightarrow B ,\; f \mapsto f'
\)
the spatial derivative.  
Suppose
\(\theta (\cdot) \in\mathcal C^{1}\!\bigl([0,T],\mathcal{X}\bigr)\)
and write \(\mathcal{A}(t,s):=[\theta(t)](s)\).
Then 
\begin{itemize}
    \item the mixed derivatives
\[
   \partial_t\partial_s \mathcal{A}(t,s)
   \quad\text{and}\quad
   \partial_s\partial_t \mathcal A(t,s)
\]
exist for every \((t,s)\in[0,T]\times[0,1]\) and coincide:
\[
     \boxed{\;
       \partial_t\partial_s \mathcal A(t,s)
       =\partial_s\partial_t   \mathcal{A} (t,s)
       \quad\forall\, (t,s)\;
     }.
\]
\item {the map $s \mapsto \partial_t\partial_s \mathcal{A}(t,s)$ is continuous.}
\end{itemize}
\end{theorem}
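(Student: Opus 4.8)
The plan is to observe that the whole statement is a soft consequence of the fact that both the ``spatial derivative'' operator and the ``pointwise evaluation'' functionals are \emph{bounded linear} maps, hence commute with the $t$-differentiation of the $\mathcal{X}$-valued curve $t\mapsto\theta(t)$; once this is used, both mixed derivatives are identified with the same object and Schwarz's theorem becomes essentially tautological. Concretely, I would first record the elementary norm estimates: $D:\mathcal{X}\to B$, $f\mapsto f'$, is linear with $\|Df\|_{B}=\|f'\|_{\infty}\le\|f\|_{\mathcal{X}}$, hence bounded; and for each $s\in[0,1]$, the evaluation maps $\mathrm{ev}_s:\mathcal{X}\to\mathbb{R}^{n\times n}$, $f\mapsto f(s)$, and $\mathrm{ev}_s:B\to\mathbb{R}^{n\times n}$, $g\mapsto g(s)$, are linear with operator norm $\le 1$ (since $\|f(s)\|\le\|f\|_{\infty}\le\|f\|_{\mathcal{X}}$ and $\|g(s)\|\le\|g\|_{B}$).

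Next I would invoke the one-line chain-rule lemma for Banach-space curves: if $c\in\mathcal{C}^1([0,T],E)$ and $\iota:E\to F$ is continuous linear, then $\iota\circ c\in\mathcal{C}^1([0,T],F)$ with $(\iota\circ c)'=\iota\circ c'$. Applying this to $c=\theta(\cdot)$ and $\iota=\mathrm{ev}_s$ gives that $t\mapsto\mathcal{A}(t,s)=[\theta(t)](s)$ is $\mathcal{C}^1$ with $\partial_t\mathcal{A}(t,s)=[\dot\theta(t)](s)$ (the pointwise $t$-derivative coincides with the $\mathcal{X}$-valued derivative evaluated at $s$ precisely because the $\mathcal{X}$-norm dominates the sup-norm, hence norm convergence forces pointwise convergence). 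Applying it to $c=\theta(\cdot)$ and $\iota=D$ gives that $t\mapsto D\theta(t)\in B$ is $\mathcal{C}^1$ with $\tfrac{\dif}{\dif t}D\theta(t)=D\dot\theta(t)$; composing once more with $\mathrm{ev}_s:B\to\mathbb{R}^{n\times n}$ shows that $t\mapsto[D\theta(t)](s)=\partial_s\mathcal{A}(t,s)$ is $\mathcal{C}^1$ with $\partial_t\bigl(\partial_s\mathcal{A}(t,s)\bigr)=[D\dot\theta(t)](s)$.

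It then remains only to unwind definitions. On one hand $\partial_s\mathcal{A}(t,s)=[\theta(t)]'(s)=[D\theta(t)](s)$ exists for all $(t,s)$ because $\theta(t)\in\mathcal{X}=\mathcal{C}^1$, and by the previous paragraph $\partial_t\partial_s\mathcal{A}(t,s)=[D\dot\theta(t)](s)$. On the other hand $\partial_t\mathcal{A}(t,s)=[\dot\theta(t)](s)$, and as a function of $s$ this is exactly the element $\dot\theta(t)\in\mathcal{X}=\mathcal{C}^1$, so $\partial_s\partial_t\mathcal{A}(t,s)=[\dot\theta(t)]'(s)=[D\dot\theta(t)](s)$ exists for all $(t,s)$. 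Hence $\partial_t\partial_s\mathcal{A}=\partial_s\partial_t\mathcal{A}$. Continuity of $s\mapsto\partial_t\partial_s\mathcal{A}(t,s)=[D\dot\theta(t)](s)$ is then immediate, since $D\dot\theta(t)\in B=\mathcal{C}^0([0,1],\mathbb{R}^{n\times n})$.

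The ``main obstacle'' is minor: it is simply to justify carefully the chain-rule lemma and, within it, the passage between the $F$-valued (Fréchet) derivative of $t\mapsto[\theta(t)]$ or $t\mapsto[D\theta(t)]$ and its pointwise-in-$s$ value — which is legitimate only because $F$ here ($\mathcal{X}$ or $B$) carries a sup-type norm over $[0,1]$ that dominates pointwise evaluation. I would state this as a short standalone lemma proved directly from the definition of the derivative of a curve, after which the theorem follows with no further analysis; the computation $\partial_t\partial_s=\partial_s\partial_t$ never touches $n$, the compactness of $[0,1]$, or any regularity beyond $\theta\in\mathcal{C}^1([0,T],\mathcal{X})$.
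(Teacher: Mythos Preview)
Your proposal is correct and follows essentially the same approach as the paper: both exploit that $D$ is a bounded linear operator, hence commutes with the $t$-derivative of the $\mathcal{X}$-valued curve $\theta$, and then identify both mixed partials with $[D\dot\theta(t)](s)$. Your version is slightly more explicit in isolating the evaluation maps $\mathrm{ev}_s$ as bounded linear functionals and packaging the ``bounded linear maps commute with differentiation of Banach-space curves'' step as a reusable lemma, whereas the paper does this more informally (``evaluating the identity pointwise in $s$''), but the underlying argument is identical.
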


\begin{proof}

\textit{Step 1: \(D\) is continuous.}
For every \(f\in \mathcal{X} \),
\[
   \|D f\|_{B}
   \;=\;\|f'\|_{\infty}
   \;\le\;\max\!\bigl\{\|f\|_{\infty},\|f'\|_{\infty}\bigr\}
   =\|f\|_{\mathcal{X}},
\]
so \(\|D\|_{\mathrm{op}}\le 1\); hence \(D\) is a bounded and thus a continuous linear map.

\medskip
\textit{Step 2: Temporal differentiability is preserved by \(D\).}
The fact that the function \(\theta\) (valued in the Banach space \(\mathcal{X}\)) is \(\mathcal C^{1}\) means precisely that its (Fréchet) derivative
\(
   \dot\theta(t):=\partial_t\theta(t)\in \mathcal{X}
\)
exists for each $t$ and the map \(t\mapsto\dot\theta(t)\) is continuous from $[0,T]$ to \(\mathcal{X}\).

Applying the \emph{continuous and linear} operator \(D\) yields by linearity
\[
\frac{D(\x(t+h))-D(\x(t))}{h} = D\left(\frac{\x(t+h)-\x(t)}{h}\right)
\]
for every $t \in [0,T]$ and $h$ small enough such that $t+h \in [0,T]$, and  since by continuity of $D$ the right hand side tends to $D(\dot{\x}(t))$ when $h \to 0$, the left hand side also has a limit, showing that
\begin{align}\label{eq:MixedDeriv1}
  \frac{d}{dt}\bigl(D(\theta(t))\bigr)
   =D\!\bigl(\dot\theta(t)\bigr)
   \qquad\text{for every }t\in[0,T].
\end{align}
Thus the mixed derivative \(\partial_t\partial_s \mathcal{A}(t,\cdot)\) exists as an element of \(B\).

\medskip
\textit{Step 3: symmetry of the mixed derivatives.}
Evaluating the identity \eqref{eq:MixedDeriv1} above pointwise in \(s\) and writing
\(\mathcal A(t,s)=[\theta(t)](s)\) gives
\[
   \partial_t\partial_s \mathcal A(t,s)
   =\bigl[D(\dot\theta(t))\bigr](s)
   =\partial_s\bigl[\dot\theta(t)\bigr](s)
   =\partial_s\partial_t \mathcal A(t,s).
\]
Hence the two mixed derivatives exist everywhere and are equal.

{
\textit{Step 4: continuity of $s \mapsto \partial_t\partial_s \mathcal A(t,s)$.}
Since \(\dot\theta(t)\in \mathcal{X}\) for each $t$, its derivative $s \mapsto \partial_s\bigl[\dot\theta(t)\bigr](s)$ is continuous. As 
   $\partial_s\bigl[\dot\theta(t)\bigr](s)=\partial_s\partial_t \mathcal A(t,s) $, by the previous step, this exactly means that $s \mapsto \partial_t\partial_s \mathcal A(t,s)$ is continuous.
}
\end{proof}

\subsection{We now show \eqref{eq:gradientwithadjoint}.} \label{proof:gradientwithadjoint}

More precisely, to show \eqref{eq:gradientwithadjoint}, we will both prove that 
\eq{ \label{eq:lastequation}
\mathfrak{g}_s (t) = (Z_s(t)^{-1})^\top Z_1(t)^\top \nabla f (Z_1(t)) Z_s(t)^\top 
}
and that 
\eq{ \label{eq:newexpressionoflambda}
\Lambda_s (t)= (Z_s(t)^{-1})^\top Z_1(t)^\top \nabla f (Z_1(t)),
}
which will indeed give \eqref{eq:gradientwithadjoint}.

We briefly explain why for a given $t$ the matrix $Z_s(t)$ never loses its invertibility when $s \in [0,1]$ varies, by showing that the determinant can never reach \(0\). As

\[
\partial_s Z_s(t)=\mathcal A_s(t)\,Z_s(t),\qquad Z_0(t)=\mId{n}.
\]

Jacobi's rule gives
\[
\frac{\mathrm{d}}{\mathrm{d}s}\det Z_s(t)=\operatorname{tr}\!\bigl(\mathcal A_s(t)\bigr)\,\det Z_s(t), \quad\det Z_0(t) = 1.
\]

Solving this scalar ODE,
\[
\det Z_s(t)=\exp\!\Bigl(\int_{0}^{s}\operatorname{tr}\bigl(\mathcal A_\tau(t)\bigr)\,d\tau\Bigr)\neq0,
\qquad s\in[0,1].
\]

Therefore \(Z_s(t)\in\mathrm{GL}(n)\) for every \(s\).

Since $t$ is fixed, in the following we lighten notations by dropping it from the equations.
The proof of \eqref{eq:newexpressionoflambda} is direct by showing that $\Lambda_s$ and $(Z_s^{-1})^\top Z_1^\top \nabla f (Z_1)$ satisfy the same ODE \eqref{eq:adjoint} with the same value at $s=1$. Thus we only need to show \eqref{eq:lastequation}.
\begin{proof}
To show \eqref{eq:lastequation} we will use Riesz theorem to identify the expression of the gradient. We thus will consider the Hilbert space
\[
  L^2:= L^2\bigl([0,1],\R^{n\times n}\bigr),
  \qquad
  \langle U,V\rangle_{L^2}
    := \int_0^1\!\operatorname{tr}\!\bigl(U_s^\top V_s\bigr)\,ds,
\]
in which the parameter $\x = \{ \mathcal{A}_s \in \R^{n \times n}: s \in [0, 1] \} \in \mathcal{C}^1([0, 1],\R^{n\times n}) = : \mathcal X \subseteq L^2 $ lives.

We recall that $Z_s(\x)$ is the unique solution of the state equation \eqref{eq:stateequation}:
\eq{ \label{1}
  \partial_s Z_s = \mathcal{A}_s Z_s, \quad Z_0 = \mId{n}, \qquad \forall s\in[0,1],
}
and that the cost $\ell$ is factorized by the flow map $Z_1(\x)$
with a smooth scalar field $f:\R^{n\times n}\to\R$, {\em i.e}, $\ell(\x) := f\!\bigl(Z_1(\x)\bigr)$.

\textit{1st step: expression of the Gateaux variation of the flow}

Let $\x = \{ \mathcal{A}_s: s \in [0, 1] \} \in\mathcal X$ be fixed and pick an arbitrary $\delta \x \in\mathcal X$.
For $\varepsilon\in\R$ define the perturbed coefficient
$\x^\varepsilon := \x + \varepsilon\,\delta \x$, denoting its components $\x^\varepsilon = \{  \mathcal{A}_s^\epsilon: s \in [0, 1] \}$. Denote by $Z_s^\varepsilon := Z_s(\x^\varepsilon)$ the flow that satisfies the associated ODE: 
\eq{ \label{1-}
\partial_s Z_s^\varepsilon=  \mathcal{A}_s^\epsilon Z_s^\epsilon, \, Z_0^\epsilon = \mId{n}, \qquad \forall s\in[0,1].
}
As $(s, \epsilon, Z) \mapsto \mathcal{A}_s^\epsilon Z \in \mathcal{C}^1$, th function $ (s, \epsilon) \mapsto Z_s^\epsilon$ is $\mathcal{C}^1$ using the Cauchy–Lipschitz theorem with a parameter. In particular for any $s \in [0, 1]$, $\epsilon \mapsto Z_s^\epsilon$ is $\mathcal{C}^1$.
Introduce the first variation
\[
  \delta Z_s = \left.\frac{d}{d\varepsilon}Z_s^\varepsilon\right|_{\varepsilon=0}
            =: \Delta_s,
\]
which corresponds to the Gateaux derivative of $\x' \mapsto Z_s(\theta')$ at $\x$ in the direction $h = \delta \x$.
We now show that $\Delta_s$ satisfies the following inhomogeneous ODE:
\eq{ \label{2}
  \partial_s \Delta_s = \mathcal{A}_s\Delta_s + \delta \mathcal{A}_s\,Z_s,
  \qquad
  \Delta_0 = 0, \qquad \forall s\in[0,1].
}
where $\delta \mathcal{A}_s \coloneqq  \left.\frac{d}{d\varepsilon}\mathcal{A}_s^\varepsilon \right|_{\varepsilon=0}$.

Indeed let us consider $q_s^\varepsilon \coloneqq  \frac{Z_s^\varepsilon - Z_s}{\varepsilon}$ for any $0 < \varepsilon \leq 1$. In particular one has $q_s^\varepsilon \underset{\varepsilon \rightarrow 0 }{ \longrightarrow} \Delta_s$. Moreover one has:
\eq{ \label{eq:convdominee}
q_s^\varepsilon = \varepsilon^{-1} \int_{0}^s (\mathcal{A}_u^\varepsilon Z_u^\varepsilon - \mathcal{A}_u Z_u) \mathrm{d}u = \int_{0}^s B_u^\epsilon Z_u \mathrm{d}u + \int_0^s \mathcal{A}_u^\varepsilon q_u^\varepsilon  \mathrm{d}u,
}
where $B_u^\epsilon \coloneqq \frac{\mathcal{A}_u^\varepsilon - \mathcal{A}_u}{\varepsilon} = \frac{\mathcal{A}_u^\varepsilon - \mathcal{A}_u^0}{\varepsilon}$ satisfies $B_u^\varepsilon \underset{\varepsilon \rightarrow 0 }{ \longrightarrow} \left.\frac{d}{d\varepsilon}\mathcal{A}_u^\varepsilon \right|_{\varepsilon=0}
            = \delta \mathcal{A}_u$ (as $\epsilon \mapsto \mathcal{A}_u^\varepsilon $ is $\mathcal{C}^1$) and where $ \varepsilon\in [0, 1] \mapsto B_u^\varepsilon$ is continuous  (at $0$, we define $B_u^0 =  \delta \mathcal{A}_u$) as $\varepsilon \mapsto \mathcal{A}_u^\varepsilon $ is $\mathcal{C}^1$), and thus is bounded on $[0, 1]$ by a constant that does not depend on $\varepsilon$. By dominated convergence, when $\varepsilon \rightarrow 0$ in \eqref{eq:convdominee} one obtains the limit:
$$
\Delta_s =  \int_{0}^s (\mathcal{A}_u\Delta_u + \delta \mathcal{A}_u\,Z_u) \mathrm{d}u,
$$
which coincides with the unique solution of \eqref{2}.

Since $Z_s$ is a solution for the homogeneous part
$\partial_s Z_s = \mathcal{A}_s Z_s$ with $Z_0 = \mId{n}$, by the variation-of-parameters method, one obtains (as $\Delta_0 = 0$):

\[
  \Delta_s
     = Z_s \int_0^s  Z_\tau^{-1}\,\delta \mathcal{A}_\tau\,Z_\tau \,d\tau .
\]
Evaluating at $s=1$ gives
\begin{equation} \label{3}
    \delta Z_1= \Delta_1 = \int_0^1 Z_1 Z_\tau^{-1}
                   \,\delta \mathcal{A}_\tau\,
                   Z_\tau \,d\tau.
\end{equation}

\textit{2d step: Differential of \texorpdfstring{$\ell$}{ℓ} and identification of the gradient.}

Because $f\in \mathcal{C}^1(\R^{n\times n}, \R)$ its (Fréchet) differential at $M\in\R^{n\times n}$ is
\eq{ \label{4}
  Df(M)[H]
    = \langle \nabla f(M), H \rangle_F,
    \qquad \forall\,H\in\R^{n\times n}.
}
Applying the chain rule to $\ell=f\circ Z_1$ with the Gateaux differentials $D_G$ at $\x$ and in the direction $h = \delta \x$, one obtains,
\[
  D_G\ell(\x)[\delta \x]
    = D_Gf\bigl(Z_1(\x)\bigr)\bigl[\,\delta Z_1\,\bigr].
\]
But as by hypothesis both $\ell$ and $f$ are Frechet differentiable, one has:
\[
  D\ell(\x)[\delta \x]
    = Df\bigl(Z_1(\x)\bigr)\bigl[\,\delta Z_1\,\bigr].
\]

Using \eqref{4} with $M=Z_1(\x)$ and $H =\delta Z_1$,
\eq{ \label{5}
  D\ell(\x)[\delta \x]
    = \bigl\langle\nabla f\!\bigl(Z_1(\x)\bigr), \delta Z_1\bigr\rangle_F .
}

By inserting the expression of $\delta Z_1$ from \eqref{3} into \eqref{5}, one has:
\[
  D\ell(\x)[\delta \x]
    = \int_0^1 \operatorname{tr}\!\Bigl(
         \nabla f(Z_1)^{\top}\,
         Z_1 Z_\tau^{-1}\,
         \delta \mathcal{A}_\tau\,
         Z_\tau
       \Bigr)\,d\tau .
\]

Because $\operatorname{tr}(RS) = \operatorname{tr}(SR)$, one get
\[
  \operatorname{tr}\!\bigl(
       \nabla f(Z_1)^{\top} Z_1 Z_\tau^{-1}
       \delta \mathcal{A}_\tau Z_\tau
  \bigr)
  =
  \operatorname{tr}\!\bigl(
       Z_\tau\,\nabla f(Z_1)^{\top} Z_1 Z_\tau^{-1}
       \,\delta \mathcal{A}_\tau
  \bigr),
\]

and thus by defining for each $\tau$
\eq{ \label{6}
  G_\tau^{\top} :=
       Z_\tau\,\nabla f(Z_1)^{\top} Z_1 Z_\tau^{-1},
}
one finally has
\[
  D\ell(\x)[\delta \x]
    = \int_0^1 \operatorname{tr}\bigl(G_\tau^{\top}\,\delta A_\tau\bigr)\,d\tau
    = \langle G, \delta \x\rangle_{L^2}.
\]

By Riesz theorem, the \emph{gradient in}~$L^2$ (i.e. the Fréchet gradient) is
the unique element $G\in L^2$ verifying
$D\ell(\x)[\delta \x]=\langle G,\delta \x \rangle_{L^2}$
for every $\delta \x$:
\[
\nabla\ell(\x)=G.
\]
The transpose in \eqref{6} finally yields the required formula.
 \end{proof}

\subsection{Link with conservation laws in finite depth (informal).} We assume that $\x_L \coloneqq (U_L, \cdots, U_1)$ satisfies the relaxed balanced conditions:
$$
U_{i+1}^\top U_{i+1} - U_i U_i^\top = \frac{H_i}{L^2},
$$
then as $U_k = \mId{} + \frac{1}{L} A_k$, 
using that $A_{k+1} = A_k + \frac{1}{L} A_k' + o\left( \frac{1}{L} \right)$
we get:
\begin{align*}
    \frac{H_k}{L^2} &= (\mId{} + \frac{1}{L} A_{k+1})^\top (\mId{} + \frac{1}{L}A_{k+1}) - (\mId{} + \frac{1}{L} A_k) (\mId{} + \frac{1}{L}A_k^\top) \\
    &= \frac{A_{k+1}^\top + A_{k+1}}{L} - \frac{A_k + A_k^\top }{L} - \frac{A_k A_k^\top}{L^2} + \frac{A_{k+1}^\top A_{k+1}}{L^2}\\
    &= \frac{A_k'^\top + A_k'-A_k A_k^\top + A_k^\top A_k}{L^2}  + o\left( \frac{1}{L^2} \right) \\
    &=  \frac{h_{s_k} (\x)}{L^2}  + o\left( \frac{1}{L^2} \right),
\end{align*}

Thus $h_s$ is such that $h_{s_k} (\x) = H_k + o(1)$.

In particular if $\x_L$ satisfies the quasi balanced condition 
$$
U_{i+1}^\top U_{i+1} - U_i U_i^\top = \frac{\lambda_i}{L^2} \mId{},
$$
then one can choose $h_s$ as:
$$
h_s (\x) = \lambda (s) \mId{n},
$$
with $\lambda$ a function such that $\lambda(s_k) = \lambda_k$. We say in that case that $\x$ satisfies the relaxed balanced condition.

\section{Proof of \Cref{thm:riemanianninfinite}.} \label{app:thmriemanianninfinite}

\subsection{Proof of the theorem}
\thmriemanianninfinite*
\begin{proof}
For any $t$ {and any integer $L \geq 1$ we define $s_k := s_k^L = \tfrac{k}{L}$} for $k = 0, \cdots, L-1$ and:
\[
X_{k+1} (t) = X_k(t) + h 
\mathcal{A}_{s_k}(t)
X_k (t), \quad {\text{with}\ h := \frac{1}{L}\ \text{and}}\ X_0(t) = I_n.
\]
Since this corresponds exactly to the Euler explicit method with step $h$ for the ODE
\[
\partial_s Z_s (t) = \mathcal{A}_s(t) Z_s (t), \quad Z_0(t) = \mId{n},\quad s \in [0,1],
\]
one has for any $t$ {and $L$} (computations postponed in \Cref{sec:postponed}):
\begin{align}
    \sup_{0 \leq k \leq L-1} \|X_k(t)-Z_{s_k}(t)\| & = \mathcal{O}(h)\label{eq:Euler1}\\
    \|\partial_t Z_{1}(t) - \partial_t X_L(t) \| & = \mathcal{ O} (h),\label{eq:Euler2}
\end{align}
{with $\|\cdot\|$ any matrix norm on $\R^{n \times n}$,}
{the implicit constant in}
the notation $ \mathcal O (\cdot)$ 
is {independent of $k$ and $L$ while it can} depend on $t$.

We now fix some $t$ and observe that $X_{k+1} (t) = U_k(t) X_k(t)$ with
\begin{equation}
    U_k(t) \coloneqq \mId{n} + h 
    \mathcal{A}_{s_k} (t).
    \label{eq:DefUAnnexM} 
\end{equation}
so that (from now on we drop the $t$ variable for brevity)
\[
\partial_t X_L  = h \sum_{j = 0}^{L-1} (U_{L-1} \cdots U_{j+1}) (\partial_t \mathcal{A}_{s_j}) X_j(t) =  h \sum_{j = 0}^{L-1} (U_{L-1} \cdots U_{j+1}) (\partial_t \mathcal{A}_{s_j})U_{j-1} \cdots U_0 
\]
{By~\eqref{eq:GDcascontinu} and} the relation \eqref{eq:lastequation} (shown in \Cref{proof:gradientwithadjoint}) we have for any $s \in [0,1]$
\[
\partial_t \mathcal{A}_{s} = 
-\mathfrak{g}_s = -(Z_s^{-1})^\top Z_1^\top \nabla f (Z_1) Z_s^\top 
\]
hence
\[
\partial_t X_L  =  -h \sum_{j = 0}^{L-1} (U_{L-1} \cdots U_{j+1}) (Z_{s_j}^{-1})^\top Z_1^\top \nabla f (Z_1 )Z_{s_j}^\top U_{j-1} \cdots U_0
\]
As 
$U_{L-1} \cdots U_{j+1} = X_L X_{j+1}^{-1} = (Z_1 + \mathcal O (h)) (Z_{s_{j+1}} + \mathcal O (h))^{-1} = Z_1 Z_{s_{j+1}}^{-1} + \mathcal O (h) $
{since the invertibility of $Z_s$ and continuity of $s \mapsto Z_s$ implies that $\|Z_{s}^{-1}\|$ is uniformly bounded)}
and
$ Z_{s_{j+1}} Z_{s_j}^{-1} = \mId{n} + \mathcal O (h)$ (since $
Z_{s_{j+1}}= Z_{s_j}+ h \mathcal A_{s_j}Z_{s_j} + \mathcal O(h^2)$),
   we deduce that
\begin{align*}
 (Z_{s_j}^{-1})^\top Z_1^\top 
 &= (Z_1 Z_{s_j}^{-1})^\top =[ (Z_1 Z_{s_{j+1}}^{-1} )  Z_{s_{j+1}} Z_{s_{j}}^{-1}  ]^\top \\
 &= [(U_{L-1} \cdots U_{j+1} + \mathcal{O}(h))(I_n + \mathcal{O}(h)) ]^\top\\
 &=(U_{L-1} \cdots U_{j+1})^\top + \mathcal O (h),
\end{align*}
{where in the last line we used that since with any relevant matrix norm since $\max_k\|U_k\| = 1+\mathcal{O}(h) = 1+\mathcal{O}(1/L)$ we have $\|U_{L-1}\ldots U_j\| \leq [1+\mathcal{O}(1/L)]^L = \mathcal{O}(1)$. Similarly we also have $\|U_{j-1}\ldots U_0\|=\mathcal{O}(1)$}
hence 
\begin{align*}
\partial_t X_L  
=& -h \sum_{j = 0}^{L-1} (U_{L-1} \cdots U_{j+1}) (U_{L-1} \cdots  U_{j+1})^\top \nabla f (Z_1 ) Z_{s_j}^\top U_{j-1} \cdots U_0
+ \underbrace{h \sum_{j=0}^{L-1} \mathcal{O}(h)}_{= \mathcal{O}(h)\ \text{since}\ h = 1/L }
\end{align*}

Similarly as $U_{j-1} \cdots U_0 = X_j = Z_{s_j} + \mathcal O (h)$ {by~\eqref{eq:Euler1}}, we get
\(
Z_{s_j}^\top  = (U_{j-1} \cdots U_0)^\top + \mathcal{O}(h)
\)
hence
\begin{align}
\partial_t X_L  
=&  -h \sum_{j = 0}^{L-1} (U_{L-1} \cdots U_{j+1}) (U_{L-1} \cdots  U_{j+1})^\top  
\nabla f (Z_1 ) (U_{j-1} \cdots U_0+ \mathcal O (h))^\top (U_{j-1} \cdots U_0)\notag\\
=&  -h \sum_{j = 0}^{L-1} (U_{L-1} \cdots U_{j+1}) (U_{L-1} \cdots  U_{j+1})^\top  \nabla f (Z_1 ) (U_{j-1} \cdots U_0)^\top (U_{j-1} \cdots U_0) + \mathcal O (h)\label{eq:partialTAnnexM}
\end{align}

We also have
\begin{align} \label{eq:perte}
     U_{k+1}^\top U_{k+1} - U_k U_k^\top &= (\mId{n} + h \mathcal{A}_{s_{k+1}})^\top (\mId{n} + h\mathcal{A}_{s_{k+1}}) - (\mId{n} + h \mathcal{A}_{s_k}) (\mId{n} + h\mathcal{A}_{s_k}^\top) \notag \\
    &= h(\mathcal{A}_{s_{k+1}}^\top + \mathcal{A}_{s_{k+1}}) - h(\mathcal{A}_{s_k} + \mathcal{A}_{s_k}^\top) - h^2(\mathcal{A}_{s_k} \mathcal{A}_{s_k}^\top) + h^2(\mathcal{A}_{s_{k+1}}^\top \mathcal{A}_{s_{k+1}}) \notag \\
    &= h^2(\mathcal{A}_{s_k}'^\top + \mathcal{A}_{s_k}'- \mathcal{A}_{s_k} \mathcal{A}_{s_k}^\top + \mathcal{A}_{s_k}^\top \mathcal{A}_{s_k})  + {o}\left( h^2 \right) \notag \\
    &=  h^2\mathbf{h}_{s_k} (\x)  + {o}\left( h^2 \right) \notag \\
    &= h^2 \lambda(s_k)\mId{n} + {o}\left( h^2 \right),
\end{align}
as $\x(0)$ satisfies the quasi-balanced condition and $\mathcal{A}_{s_{k+1}} = \mathcal{A}_{s_k} + h \mathcal{A}_{s_k}' + {o}\left( h \right)$, where the implicit 
$o(1)$ function
in the notation $ o (h^2) = h^2 o(1)$ 
is {still independent of $k$ and $L$} as $s  \in [0, 1] \mapsto \mathcal{A}_s'$ is continuous on the compact set $[0, 1]$ and is thus uniformly continuous. %

By \Cref{lem:perturbed} with $\lambda_k = \lambda(s_k)$ and 
one has:

\begin{equation}
(U_{L-1} \cdots U_{j+1}) (U_{L-1} \cdots  U_{j+1})^\top =  \prod_{k = 0}^{L-1-(j+1)} (U_{L-1} U_{L-1}^\top - a_k \mId{n}) + E_{L-1-j},
\label{eq:PreFjAnnexM}
\end{equation}
with 
\begin{equation}
a_0 \coloneqq 0,\quad a_k = h^2 \sum_{i = 1}^{k} \lambda(s_{L-1} - s_i)\ \text{for}\ k \geq 1,\quad \text{and}\ \| E_{L-1-j} \| \leq K\eta
\label{eq:PreFjAnnexMTRER}
\end{equation}
where
$K \coloneqq (C_0 \exp(C_1) \exp(C_0(C_1+1) + \exp(C_1))$
with $C_0 \coloneqq 2C_\lambda$, $C_1 \coloneqq (C_U^2+\eta h^2-1)/h$, 
\begin{align}
    C_U & \coloneqq \max(1,\max_k \|U_k\|),\quad 
    C_\lambda \coloneqq \max_k |\lambda_k|\\
    \eta & \coloneqq L^2 \cdot \max_{0 \leq k \leq L-2} \|(U_{k+1}^\top U_{k+1}-U_k U_k^\top)- h^2 \lambda_k\,\mId{n}\|.
\end{align}
As  $\lambda(\cdot)$ is continuous,  $C_\lambda \leq \|\lambda\|_\infty < \infty$ for any $L$. Similarly, we already used that as $s \in [0, 1] \mapsto \mathcal{A}_s$ is continuous, $C_U = 1+ \mathcal{O} (h)$, and thus  $C_U^2 = 1+ \mathcal{O} (h)$, again with implicit constant independent of $L$. 
Moreover by \eqref{eq:perte}, $\eta h^2 = \eta/L^2 = o(h^2)$, and we obtain $C_1 = (C_U^2+\eta h^2-1)/h =(\mathcal{O}(h) + o(h^2))/h = \mathcal{O}(1)$, hence $C_1$ is bounded uniformly. Finally we obtain
\begin{equation}
\max_{j} E_{L-1-j} = o(1)
\label{eq:PreFjAnnexMbis}
\end{equation}
where the implicit function $o(1)$ is still independent of 
$L$.

We denote 
\begin{equation}
    F_i (U_{L-1} U_{L-1}^\top ) \coloneqq \prod_{k = 0}^i (U_{L-1} U_{L-1}^\top - a_k \mId{n})\label{eq:AnnexDefFi}
\end{equation}
and use the shorthand $A_k := A_k^L(t) \coloneqq \mathcal{A}_{s_k}(t) \in \R^{n \times n}$, for $0 \leq k \leq L-1$.
{Since $U_k = \mId{n}+hA_{k}$ with $h=1/L$ and $s_{L-1}-s_i = \frac{L-1}{L}-\frac{i}{L}=1-\frac{i+1}{L}$ for each integer $i$,} we have (using Riemann integration as $\lambda(\cdot)$ is continuous):
{since all the matrices in the product \eqref{eq:AnnexDefFi} commute}
\begin{align*}
   F_j (U_{L-1} U_{L-1}^\top) &= \exp \left(\sum_{k = 0}^j \log(U_{L{-1}} U_{L{-1}}^\top - a_k \mId{n}) \right) \\
    &\stackrel{(\ref{eq:DefUAnnexM})-(\ref{eq:PreFjAnnexMTRER})}{=} \exp \left( 
     \sum_{k = 0}^j \log
         \left( 
             \mId{n} + \frac{A_{L-1} + A_{L-1}^\top}{L} + o \left( \frac{1}{L}\right) 
             - \frac{1}{L} 
             \left( 
                 \underbrace{
                     \frac{1}{L} 
                 {\sum_{i = 1}^{k}}
     {\lambda(1-\tfrac{i+1}{L})}
    }_{= \int_{0}^{s_k} \lambda(1-v) \mathrm{d}v + o\left(1\right)} 
            \right)
        \mId{n} 
         \right)
    \right) \\
    &= \exp \left(  \sum_{k = 0}^j \left(  \frac{A_{L-1} + A_{L-1}^\top}{L} - \frac{1}{L} \int_{0}^{s_k} \lambda(1-v)\mathrm{d}v\cdot \mId{n} +  o\left(\frac{1}{L}\right) \right) \right) \\
    &= \exp \left( s_j (\mathcal{A}_1 +\mathcal{A}_1^\top) - \underbrace{\frac{1}{L} \sum_{k = 0}^j  \int_{0}^{s_k} \lambda(1-v)\mathrm{d}v}_{= \int_{0}^{s_j} \int_{0}^u \lambda(1-v) \mathrm{d}v  \mathrm{d}u 
    {+o(1)}} \cdot \mId{n}  + o\left(1\right)\right).
\end{align*}
We denote $\psi_1: s \in [0, 1] \mapsto \int_{0}^s \int_{0}^u \lambda (1-v) \mathrm{d}v \mathrm{d}u$.
By~\eqref{eq:PreFjAnnexM} and the above derivations one has 
\[
Z_1 Z_1^\top =  \underset{L \rightarrow + \infty}{\lim} X_L X_L^\top =   \underset{L \rightarrow + \infty}{\lim} F_{L-1} (U_{L-1} U_{L-1}^\top) = \exp( (\mathcal{A}_1 + \mathcal{A}_1^\top) - \psi_1(1) \cdot \mId{n}),
\]
and thus
\[
\mathcal{A}_1 + \mathcal{A}_1^\top = \log(Z_1 Z_1^\top) + \psi_1(1) \cdot \mId{n}
\]
Thus 
\begin{equation}
F_j (U_{L-1} U_{L-1}^\top) = (Z_1 Z_1^\top)^{s_j} \exp \left( s_j \psi_1 (1) -\psi_1(s_j)
{+o(1)}
\right).\label{eq:FjAsymp}
\end{equation}

Similarly as before and by adapting the proof of \Cref{lem:perturbed} one gets:
\begin{equation}
    \label{eq:PreGjAnnexM}
(U_{j-1} \cdots U_{0})^\top (U_{j-1} \cdots  U_0) = \prod_{k = 0}^{j-1} (U_0^\top U_0 - b_k \mId{n}) + o(1) =: G_j(U_0^\top U_0)  + o(1) 
\end{equation}
with  $b_k = - h^2 \sum_{i = 0}^{k-1} \lambda(s_i) $ and $b_0 = 0$,  and where we denote 
\begin{equation}
    G_j(U_0^\top U_0) \coloneqq  \prod_{k = 0}^{j-1} (U_0^\top U_0 - b_k \mId{n})\label{eq:AnnedDefGi}
\end{equation}

Similarly as above: 
\begin{align*}
   G_{j+1} (U_0^\top U_0) &= \exp \left(\sum_{k = 0}^j \log(U_0^\top U_0 - b_k \mId{n}) \right) \\
    &= \exp \left( \sum_{k = 0}^j \log \left( \mId{n}+ \frac{A_{0} + A_{0}^\top}{L} + o \left( \frac{1}{L}\right) + \frac{1}{L} \left( \underbrace{\frac{1}{L} 
    \sum_{i = 0}^{k-1} 
    \lambda(s_i)
    }_{= \int_{0}^{s_{k}} \lambda(v) \mathrm{d}v + o(1)} \right) \mId{n} \right)\right) \\
    &= \exp \left(  \sum_{k = 0}^j \left(  \frac{A_{0} + A_{0}^\top}{L} + \frac{1}{L} \int_{0}^{s_k} \lambda(v)\mathrm{d}v \cdot \mId{n} +  o\left(\frac{1}{L}\right) \right) \right) \\
    &= \exp \left( s_j (\mathcal{A}_0 +\mathcal{A}_0^\top) + \underbrace{\frac{1}{L} \sum_{k = 0}^j  \int_{0}^{s_k} \lambda(v)\mathrm{d}v}_{= \int_{0}^{s_j} \int_{0}^u \lambda(v) \mathrm{d}v  \mathrm{d}u{+o(1)}} \cdot \mId{n}  +o (1)\right).
\end{align*}
We denote $\psi_2: s \in [0, 1] \mapsto \int_{0}^s \int_{0}^u \lambda (v) \mathrm{d}v \mathrm{d}u$.
By~\eqref{eq:PreGjAnnexM} and the above derivations we have
\[
Z_1^\top Z_1 =  \underset{L \rightarrow + \infty}{\lim} X_L^\top X_L =   \underset{L \rightarrow + \infty}{\lim} G_{L} (U_0^\top U_{0}) = \exp( (\mathcal{A}_0 + \mathcal{A}_0^\top) + \psi_2(1) \cdot \mId{n}),
\]
and thus
\[
\mathcal{A}_0 + \mathcal{A}_0^\top = \log(Z_1^\top Z_1) 
{-}
\psi_2(1) \cdot \mId{n}
\]
It follows that
\begin{equation} \label{eq:GjAsymp}
G_{j+1} (U_0^\top U_0) = (Z_1^\top Z_1)^{s_j} \exp (-{s_j} \psi_2 (1) + \psi_2(s_j)
{+o(1)}
).
\end{equation}

Finally, combining 
\eqref{eq:PreFjAnnexM}-\eqref{eq:PreFjAnnexMbis}-\eqref{eq:AnnexDefFi}-\eqref{eq:FjAsymp} and \eqref{eq:PreGjAnnexM}-\eqref{eq:AnnedDefGi}-\eqref{eq:GjAsymp}, we obtain
\begin{align*}
\partial_t X_L  & \stackrel{(\ref{eq:partialTAnnexM})}{=} -h \sum_{j = 0}^{L-1} (U_{L-1} \cdots U_{j+1}) (U_{L-1} \cdots  U_{j+1})^\top  \nabla f (Z_1 )  (U_{j-1} \cdots U_0)^\top (U_{j-1} \cdots U_0) + \mathcal O (h) \\
&= - h \sum_{j=0}^{L-1} \left[(Z_1 Z_1^\top)^{1-s_j}
\nabla f (Z_1) (Z_1^\top Z_1)^{s_j} \cdot 
\exp (\gamma(s))+o(1)\right]
+ \mathcal{O}(h) \\
&= - \int_{0}^1 (Z_1 Z_1^\top)^{1-s}  \nabla f(Z_1) (Z_1^\top Z_1)^s \exp (\gamma ( s)) \mathrm{d} s + o (1),
\end{align*}
where 
$$
\gamma (s) \coloneqq (1-s) \psi_1 ( 1) - \psi_1 ( 1-s) - s \psi_2 (1) + \psi_2 (s).
$$
Thus one has:
\[
\partial_t Z_1 (t) = - \int_{0}^1 (Z_1 Z_1^\top)^{1-s} \exp (\gamma ( s))  \nabla f(Z_1) (Z_1^\top Z_1)^s \mathrm{d} s,
\]
which concludes the proof.
%
\end{proof}
\subsection{Proof of \eqref{eq:Euler1}-\eqref{eq:Euler2}} \label{sec:postponed}
We now show 
that {~\eqref{eq:Euler1}-\eqref{eq:Euler2} hold}
 for any $t$.
\begin{proof}

First we recall that $$
X_{k+1} (t) = X_k(t) + h \mathcal{A}_{s_k} (t) X_k (t), 
$$
with $X_0(t) = \mId{n}$.
This corresponds exactly to the Euler explicit formulation of the ODE:
\eq{ \label{euler}
\partial_s Z_s (t) = \mathcal{A}_s(t) Z_s (t), \quad Z_0(t) = \mId{n}, \quad s \in [0,1]
}
with step $h = 1/L$.

We now show both items at once.
We fix some $t$. Set
\[
W(s)\;:=\;\begin{pmatrix}Z_s\\ Y_s\end{pmatrix} \in \R^{2n \times n},
\qquad
Y_s:=\partial_t Z_s ,
\qquad
W(0)=\begin{pmatrix}\mId{n}\\0\end{pmatrix},
\]
so that 
\eq{ \label{C}
\frac{\mathrm{d}}{\dif s}W(s)=
\begin{pmatrix}
  \mathcal A_s(t) & 0\\[4pt]
  \partial_t\mathcal A_s(t) & \mathcal A_s(t)
\end{pmatrix}
\!W(s).
}
The corresponding explicit-Euler discretization with step $h=1/L$ reads
\eq{ \label{E}
W_{k+1}=W_k
       +h\,
        \begin{pmatrix}
          \mathcal A_{s_k}(t) & 0\\[4pt]
          \partial_t\mathcal A_{s_k}(t) & \mathcal A_{s_k}(t)
        \end{pmatrix}
        W_k,
}
which coincides component-wise with the recursions for
$X_k$ and ${T_k} 
=\partial_tX_k$.

Because the right-hand side of~\eqref{C} $(s, W) \mapsto \begin{pmatrix}
  \mathcal A_s(t) & 0\\[4pt]
  \partial_t\mathcal A_s(t) & \mathcal A_s(t)
\end{pmatrix}
\!W $ is $\mathcal{C}^1$ (indeed both {$s \mapsto \mathcal{A}_s(t)$ and $s \mapsto \partial_t \mathcal{A}_s(t)$ are $C^1$ (cf \Cref{thm:commutation}) for each $t$),}
the Euler explicit scheme converges at order one (see e.g. \cite[Proposition 10.30]{berthelin2017}):
\[
\max_{0\le k\le L}\,
  \bigl\|W(s_k)-W_k\bigr\|
  \;=\; \mathcal O (h).
\]
In particular one get that for any $k$:
$
X_k(t) = Z_{s_k}(t) + O (h)
$
and (reading the second bloc row at the final index $k=L$):
\[
  \bigl\|\partial_t Z_{{s=}1}(t)-\partial_t X_L(t)\bigr\|
  {= \|Y_{s=1}-T_L\|}
      = \mathcal O (h).
\]
\end{proof}

\section{LLM Usage}
The authors of this paper used Large Language Models to aid and polish the writing of this paper and as a tool to make some of the proofs.

\end{document}